\newcommand{\mpara}[1]{\medskip\noindent{\bf #1}}
\newcommand{\cmark}{\ding{51}}
\newcommand{\xmark}{\ding{55}}
\newcommand{\loss}[1]{\ensuremath{\mathcal{#1}}}
\newcommand{\func}[1]{\ensuremath{\mathsf{#1}}}
\newcommand{\mA}{\ensuremath{\bm{A}}}
\newcommand{\mD}{\ensuremath{\bm{D}}}
\newcommand{\mX}{\ensuremath{\bm{X}}}
\newcommand{\mY}{\ensuremath{\bm{Y}}}
\newtheorem{lemma}{Lemma}
\newtheorem{hypothesis}{Hypothesis}
\newtheorem{RQ}{RQ}
\newcommand{\faac}{\ensuremath{\textsc{GSEF-concat}}\xspace}
\newcommand{\faaw}{\ensuremath{\textsc{GSEF-mult}}\xspace}
\newcommand{\exhoney}{\ensuremath{\textsc{GSEF}}\xspace}
\newcommand{\exohoneyex}{\ensuremath{\textsc{GSE}}\xspace}
\newcommand{\exoattrsim}{\ensuremath{\textsc{ExplainSim}}\xspace}
\newcommand{\attrsim}{\textsc{FeatureSim}\xspace}
\newcommand{\lsa}{\textsc{Lsa}\xspace}
\newcommand{\graphmi}{\textsc{GraphMI}\xspace}
\newcommand{\gsl}{\textsc{Slaps}\xspace}
\newcommand{\cora}{\textsc{Cora}\xspace}
\newcommand{\coraml}{\textsc{CoraML}\xspace}
\newcommand{\bitcoin}{\textsc{Bitcoin}\xspace}
\newcommand{\cseer}{\textsc{CiteSeer}\xspace}
\newcommand{\pubmed}{\textsc{PubMed}\xspace}
\newcommand{\credit}{\textsc{Credit}\xspace}
\newcommand{\grad}{\textsc{Grad}\xspace}
\newcommand{\gradinput}{\textsc{Grad-I}\xspace}
\newcommand{\zorro}{\textsc{Zorro}\xspace}
\newcommand{\szorro}{\textsc{Zorro-S}\xspace}
\newcommand{\glime}{\textsc{GLime}\xspace}
\newcommand{\gnnexp}{\textsc{GNNExp}\xspace}
\begin{document}

\title{Private Graph Extraction via Feature Explanations}

\author{Iyiola E. Olatunji}
\affiliation{%
  \institution{L3S Research Center}
  \city{Hannover}
  \country{Germany}}
\email{iyiola@l3s.de}

\author{Mandeep Rathee}
\affiliation{%
  \institution{L3S Research Center}
  \city{Hannover}
  \country{Germany}}
\email{rathee@l3s.de}   

\author{Thorben Funke}
\affiliation{%
  \institution{L3S Research Center}
  \city{Hannover}
  \country{Germany}}
 \email{tfunke@l3s.de}

\author{Megha Khosla}
\affiliation{%
 \institution{TU Delft}
 \city{Delft}
 \country{Netherlands}}
 \email{m.khosla@tudelft.nl}

\renewcommand{\shortauthors}{Olatunji et al.}

\begin{abstract}
Privacy and interpretability are two important ingredients for achieving trustworthy machine learning. We study the interplay of these two aspects in graph machine learning through graph reconstruction attacks. The goal of the adversary here is to reconstruct the graph structure of the training data given access to model explanations.
Based on the different kinds of auxiliary information available to the adversary, we propose several graph reconstruction attacks. 
We show that additional knowledge of post-hoc feature explanations substantially increases the success rate of these attacks. Further, we investigate in detail the differences between attack performance with respect to three different classes of explanation methods for graph neural networks: gradient-based, perturbation-based, and surrogate model-based methods. While gradient-based explanations reveal the most in terms of the graph structure, we find that these explanations do not always score high in utility. For the other two classes of explanations, privacy leakage increases with an increase in explanation utility.  
Finally, we propose a defense based on a randomized response mechanism for releasing the explanations, which substantially reduces the attack success rate. Our code is available at \url{https://github.com/iyempissy/graph-stealing-attacks-with-explanation}.

\end{abstract}

\keywords{privacy risk, model explanations, graph reconstruction attacks, private graph extraction, graph neural networks, attacks}

\maketitle

\section{Introduction}
Graphs are highly informative, flexible, and natural ways of representing data in various real-world domains. 
Graph neural networks (GNNs) \cite{kipf2017semi,hamilton2017inductive,velickovic2018graph} have emerged as the standard tool to analyze graph data that is non-euclidean and irregular in nature. GNNs have gained state-of-the-art results in various graph analytical tasks ranging from applications in biology, and healthcare \cite{dong2022mucomid, Ahmedt2021graphmedical} to recommending friends in a social network \cite{fan2019graph}. GNNs' success can be attributed to their ability to extract powerful latent features via complex aggregation of neighborhood aggregations \cite{funke2021zorro, ying2019:gnnexplainer}.

However, these models are inherently black-box and complex, making it extremely difficult to understand the underlying reasoning behind their predictions.
With the growing adoption of these models in various sensitive domains, efforts have been made to explain their decisions in terms of feature as well as neighborhood attributions. Model explanations can offer insights into the internal decision-making process of the model, which builds the trust of the users. Moreover, owing to the current regulations \cite{officialgrpr2016} and guidelines for designing trustworthy AI systems, several proposals advocate for deploying (automated) model explanations \cite{goodman2017european, selbst2018meaningful}.

Nevertheless, releasing additional information, such as explanations, can have adverse effects on the privacy of the training data. While the risk to privacy due to model explanations exists for machine learning models in general \cite{shokri2021privacy}, it can have more severe implications for graph neural networks. For instance, several works \cite{olatunji2021membership,du2018towards} have established the increased vulnerability of GNNs to privacy attacks due to the additional encoding of graph structure in the model itself. We initiate the \emph{first investigation} of the effect of releasing feature explanations for graph neural networks on \emph{the leakage of private information} in the training data.  

\begin{figure}
    \centering
    \includegraphics[width=0.48\textwidth]{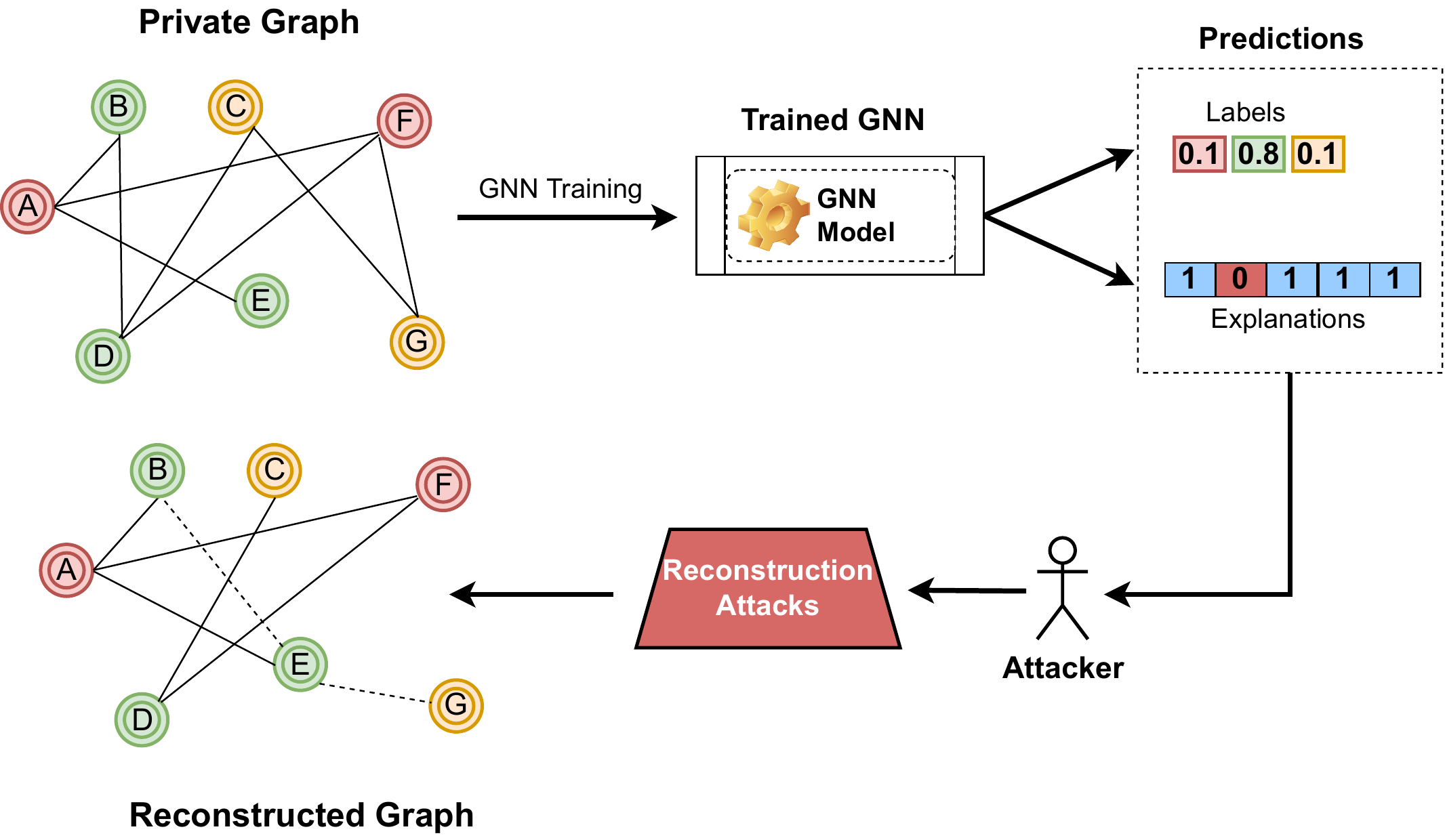}
    \caption{The importance scores for the features, as provided by an explanation, can be exploited to infer the graph structure. Here, we show an example of a binary explanation where a score of $1$ indicates that the corresponding feature is part of the explanation.}
    \label{fig:motivating-example}
\end{figure}

To analyze the information leakage due to explanations, we take the perspective of an adversary whose goal is to infer the hidden connections among the training nodes. 
Consider a setting where the user has access to node features and labels but not the graph structure among the nodes. For example, node features could be part of the public profile of various individuals. The graph structure among the nodes could be some kind of social network which is private. Now, let's say that the user wants to obtain a trained GNN on her data while providing the node features and labels to a central authority that has access to the private graph structure. We ask here the question: \emph{how much do the feature-based explanations leak information about the graph structure used to train the GNN model?} We quantify this information leakage via several graph reconstruction attacks. A visual illustration is shown in Figure \ref{fig:motivating-example}. Specifically, our threat model consists of two main settings: first, where the adversary has access only to feature explanations, and second, where the adversary has additional information on node features/labels. Note that we only focus on feature-based explanations for GNNs in this work. For other explanation types such as node or edge explanations, as the returned nodes and edges belong to the node's neighborhood, it becomes trivial to reconstruct the graph structure.

\subsection{Our Contributions and Findings} Ours is the first work to analyze the risks of releasing feature explanations in GNNs on the privacy of the relationships/connections in the training nodes. 
We quantify the information leakage via explanations by the success rate of several graph reconstruction attacks. Our attacks range from a simple explanation similarity-based attack to more complex attacks exploiting graph structure learning techniques. Besides, we provide a thorough analysis of information leakage via feature-based explanations produced by three classes of GNN post-hoc explanation methods, including \emph{gradient-based}, \emph{perturbation-based}, and \emph{surrogate model-based}. 
 
To analyze the differences in the robustness of the explanation methods to our privacy attacks, we investigate the explanation utility in terms of \emph{faithfulness} and \emph{sparsity}.  We find that the gradient-based methods are the most susceptible to graph reconstruction attacks even though the corresponding explanations are the least faithful to the model. In other words, they have low utility. This is an important finding as the corresponding explanations could release a large amount of private information without offering any utility to the user.
The perturbation-based approach \zorro and its variant show the highest explanation utility as well as a high success rate of the attack, pointing to the expected trade-off between privacy and explanation utility. 

We perform our study over three types of datasets with varying properties. For instance, our first dataset (\cora) has a large number of binary features, but the feature space is very sparse. Our second dataset (\coraml) has fewer but denser features. Our final dataset (\bitcoin) has a very small number of features.  
We find that the information leakage varies with explanation techniques as well as the feature size of the dataset. 
The dataset (\bitcoin) with the smallest feature size (8) is the most difficult to attack. Through experiments on three additional datasets (\cseer, \pubmed, and \credit), we later show that although a small feature size might limit the information leakage, the correlation between node features with node connections plays a significant role.
All baseline attacks which rely on knowledge of only features and labels perform no better than a random guess. In such a case, explanation-based attacks provide an improvement, though not very huge, in inferring private graph structure information. We also perform additional experiments to evaluate the effectiveness of the attacks on three other datasets. Also, we perform a subgraph reconstruction attack when only a subset of the explanations is available to the attacker.

Finally, we develop a perturbation-based defense for releasing feature-based explanations. Our defense employs a randomized response mechanism to perturb the individual explanation bits. We show that our defense reduces the attack to a random guess with a small drop in the explanation utility.  Our code is available at \url{https://github.com/iyempissy/graph-stealing-attacks-with-explanation}.

\section{Preliminaries}
\subsection{Graph Neural Networks}
Graph neural networks (GNNs) \cite{kipf2017semi,hamilton2017inductive,velickovic2018graph} are a special family of deep learning models designed to perform inference on graph-structured data. The variants of GNNs, such as graph convolutional network (GCN), compute the representation of a node by utilizing its feature representation and that of the neighboring nodes. That is, GNNs compute node representations by recursive aggregation and transformation of feature representations of its neighbors.

Let $G = (V, E)$ denote a graph where $V$ is the node-set, and $E$ represents the edges or links among the nodes. Furthermore, let $\boldsymbol{x}_{i}^{(\ell)}$ be the feature representation of node $i$ at layer $\ell$, ${\mathcal{N}}_{i}$ denote the set of its 1-hop neighbors and $\bm{\theta}$ is a learnable weight matrix.
Formally, the $\ell$-th layer of a graph convolutional operation can be described as
\begin{align}
    \boldsymbol{z}_{i}^{(\ell)}=&\operatorname{AGGREGATION}^{(\ell)}\left(\left\{\boldsymbol{x}_{i}^{(\ell-1)},\left\{\boldsymbol{x}_{j}^{(\ell-1)} \mid j \in{\mathcal{N}}_{i}\right\}\right\}\right) \\
    \boldsymbol{x}_{i}^{(\ell)}= &\operatorname{TRANSFORMATION} ^{(\ell)}\left(\boldsymbol{z}_{i}^{(\ell)}\right)
\end{align}
Then a softmax layer is applied to the node representations at the last layer (say $L$) for the final prediction of the node classes $\mathcal{C}$
 \begin{align}\label{eq:pred}
     \boldsymbol{y}\leftarrow \operatorname{argmax}(\operatorname{softmax}(\boldsymbol{z}_{i}^{(L)}\bm{\theta})),
 \end{align}
GNNs have been shown to possess increased vulnerability to privacy attacks due to encoding of additional graph structure in the model \cite{olatunji2021membership,du2018towards}. We further investigate the privacy risks of releasing post-hoc explanations for GNN models.

\subsection{Explaining Graph Neural Networks}
GNNs are deep learning models which are inherently black-box or non-interpretable. This black-box behavior becomes more critical for applying them in sensitive domains like medicine, crime, and finance. Consequently, recent works have proposed post-hoc explainability techniques to explain the decisions of an already-trained model. In this work, we are concerned with the task of node classification, where the model is trained to predict node labels in a graph. For such a task, an instance is a single node. An explanation for a node usually consists of a subset of the most important features as well as a subset of its neighboring nodes/edges responsible for the model's prediction. Depending on the explanation method, the importance is usually quantified either as a continuous score (also referred to as a soft mask) or a binary score (also called a hard mask).
In this work, we consider three popular classes of explanation methods: \emph{gradient-based} \cite{selvaraju2017grad,baldassarre2019explainability,pope2019explainability}, \emph{perturbation-based} \cite{ying2019:gnnexplainer, funke2021zorro}, and \emph{surrogate} \cite{huang2020graphlime} methods. 

\mpara{Gradient-based Methods.} These approaches usually employ the gradients of the target model's prediction with respect to input features as importance scores for the node features. We use two gradient-based methods in our study, namely \textbf{\grad}\cite{imageclasssimonyan2013deep} and Grad-Input (\textbf{\gradinput}) \cite{sundararajan2017:integratedgrad}. For a given graph $G$ and the trained GNN model $f(\boldsymbol{X},G,\theta)$ (where $\theta$ is the set of parameters and $\boldsymbol{X}$ is the features matrix), \grad generates an explanation $\mathcal{E}_X$ by assigning continuous valued importance scores to the features. For node $i$ and $\boldsymbol{x}_{i}$ is its features vector, The score is calculated by $\frac{\partial f}{\partial \boldsymbol{x}_{i}}$. \gradinput transforms \grad explanation by an element-wise multiplication with the input features ($\boldsymbol{x}_{i}\odot\frac{\partial f}{\partial \boldsymbol{x}_{i}}$).

\mpara{Perturbation-based Methods.} 
Perturbation-based methods obtain soft or hard masks over the features/nodes/edges as explanations by monitoring the change in prediction with respect to different input perturbations.
We use two methods from this class: \textbf{\zorro} \cite{funke2021zorro} and \textbf{\gnnexp} \cite{ying2019:gnnexplainer}. \zorro learns discrete masks over input nodes and node features as explanations using a greedy algorithm. It optimizes a fidelity-based objective that measures how the new predictions match the original predictions of the model by fixing the selected nodes/features and replacing the others with random noise values. This returns a hard mask for the explanations. 
\gnnexp learns soft masks over edges and features by minimizing the cross-entropy loss between the predictions of the original graph and the predictions of the newly obtained (masked) graph.
We also utilize the \zorro variant that provides soft explanation masks called \textbf{\szorro}. \szorro relaxes the argmax in \zorro's objective with a softmax, such that the masked are retrievable with standard gradient-based optimization. Together with the regularization terms of \gnnexp, \szorro learns sparse soft masks.

\mpara{Surrogate Methods.}
Surrogate methods fit a simple and interpretable model to a sampled local dataset corresponding to the query node. For example, the sampled dataset can be generated from the neighbors of the given query node. The explanations from the surrogate model are then used to explain the original predictions. We use GraphLime\cite{huang2020graphlime}, which we denote as \textbf{\glime} from this class. As an interpretable model, it uses the global feature selection method HSIC-Lasso~\cite{yamada2014high}. The sampled dataset consists of the node and its neighborhood. The set of the most important features returned by HSIC-Lasso is used as an explanation for the GNN model.

\textit{Remark:} Please note that in this work, we assume that only feature-based explanations are released to the user. In the presence of  node/edge explanations, an adversary can trivially reconstruct large parts of the neighborhood as the returned nodes/edges are part of the node's original neighborhood.

\subsubsection{Measuring Explanation Quality.} 
We measure the quality of the explanation by its ability to approximate the model's behavior which is referred to as \textbf{\textit{faithfulness}}. As the groundtruth for explanations is not available, we use the \textit{RDT-Fidelity} proposed by \cite{funke2021zorro} to measure faithfulness. The corresponding fidelity score measure how the original and new predictions match by fixing the selected nodes/features and replacing the others with random noise values.
Formally, the \textit{RDT-Fidelity} of explanation $\mathcal{E}_X$ corresponding to explanation mask $M(\mathcal{E}_X)$ with respect to the GNN $f$ and the noise distribution $\mathcal{N}$ is given by
\begin{equation}
\label{eq:fidelity}
\mathcal{F}(\mathcal{E}_X) = \mathbb{E}_{Y_{\mathcal{E}_X}|Z\sim \mathcal{N}} \left[\mathds{1}_{f\left(X\right)=f(Y_{\mathcal{E}_X})}\right],
\end{equation}
where the perturbed input is given by
\begin{equation}
 \tilde{I}_{\mathcal{E}_X} = X\odot M(\mathcal{E}_X) + Z\odot(\mathds{1} - M(\mathcal{E}_X)), Z\sim \mathcal{N},
    \label{eq:noisy_features_matrix}
\end{equation}
where $\odot$ denotes an element-wise multiplication, and $\mathds{1}$ is a matrix of ones with the corresponding size. 

\mpara{Sparsity.} We further note that by definition, the complete input is faithful to the model. Therefore, we further measure the sparsity of the explanation.  A meaningful explanation should be sparse and only contain a small subset of features most predictive of the model decision. We use the entropy-based sparsity definition from \cite{funke2021zorro} as it is applicable for both soft and hard explanation masks. Let $p$ be the normalized distribution of explanation (feature) masks. Then the sparsity of an explanation is given by the entropy $H(p)$ over the mask distribution, 
$$H(p)= -\sum_{f\in M} p(f) \log p(f).$$
Note that the entropy here is bounded by $\log(|M|)$, where $M$ corresponds to the size of the feature set. The lower the entropy, the sparser the explanation. We will utilize these two metrics used in measuring explanation quality to argue about the differences in the attack performance.

\section{Threat Model and Attack Methodology}
\subsection{Motivation}
We consider the setting in which different data holders hold the features and graph (adjacency matrix) in practice. Specifically, the central trusted server has access to the graph structure and trains a GNN model using the node features and labels provided by the user.
The user can further query the trained GNN model by providing node features. As the features are already known to the user, revealing \textit{feature explanations} for the prediction might be considered a safe way to increase the user's trust in the model. We investigate such a scenario and uncover the increased privacy risks of releasing \emph{feature explanations even if the original node features/labels are already known to the adversary}.

Our setting is inspired by previous works and practical scenarios \cite{zhanggraphmi, fabijanlacksharing, barua2007enabling, yang2011information, greenstein_2020}. 
We elaborate on one such scenario in the following. Consider a multinational company with several subdivisions.
For simplicity, we consider two subdivisions (the marketing and product departments).
The marketing department collects the interaction between users (edge information), while the product department collects data on user behavior (features).
Under privacy law(e.g., GDPR), such user interactions cannot be shared because they are collected for different purposes.
However, they aim to train a GNN model jointly since it would benefit both teams.
The product department then sends the node features to the marketing department, which in addition, uses their edge information to train a predictive model and releases an API (similar to MLaaS).
The API returns both the prediction for the features and the corresponding feature explanations for such predictions.
The product department, having observed how useful the exploratory analysis is, aims to recover the private edges using the information returned by the API (explanations) and their node features.

\subsection{Threat Model}
We consider two scenarios: \textbf{first}, in which the adversary has access to node features and/or labels and obtains additional access to feature explanations, and \textbf{second}, in which the adversary has access only to feature explanations. The goal of the adversary is to infer the private connections of the graph used to train the GNN model. 

Corresponding to the above settings, we categorize our attacks as \emph{explanation augmentation} (when explanations are augmented with other information to launch the attack) and \emph{explanation-only} attacks. Through our five proposed attacks, we investigate the privacy leakage of the explanations generated by six different explanation methods. 

Our simple similarity-based explanation-only attack already allows us to quantify the additional information that the feature-based explanation encodes about the graph structure. Our explanation augmentation attacks are based on the graph structure learning paradigm, which allows us to effectively integrate additional known information in learning the private graph structure. Besides, our explanation augmentation attacks also result in a successfully trained GNN model without the knowledge of the true graph structure, offering an additional advantage to the adversary.

\subsection{Attack Methodologies}
\label{sec:attackmeth}
Here, we provide a detailed description of our two types of attacks. We commence with the \textit{explanation-only attack } in which we utilize only the provided explanation to launch the attack, followed by the \textit{explanation augmentation} attacks in which more information such as node labels or/and features are exploited in addition to the explanation. The taxonomy of our attacks based on the attacker's knowledge is presented in Table \ref{tab:attack-settings}.

\mpara{Explanation-only Attack.}
\begin{figure}[h!]
    \centering
    \includegraphics[width=0.3\textwidth]{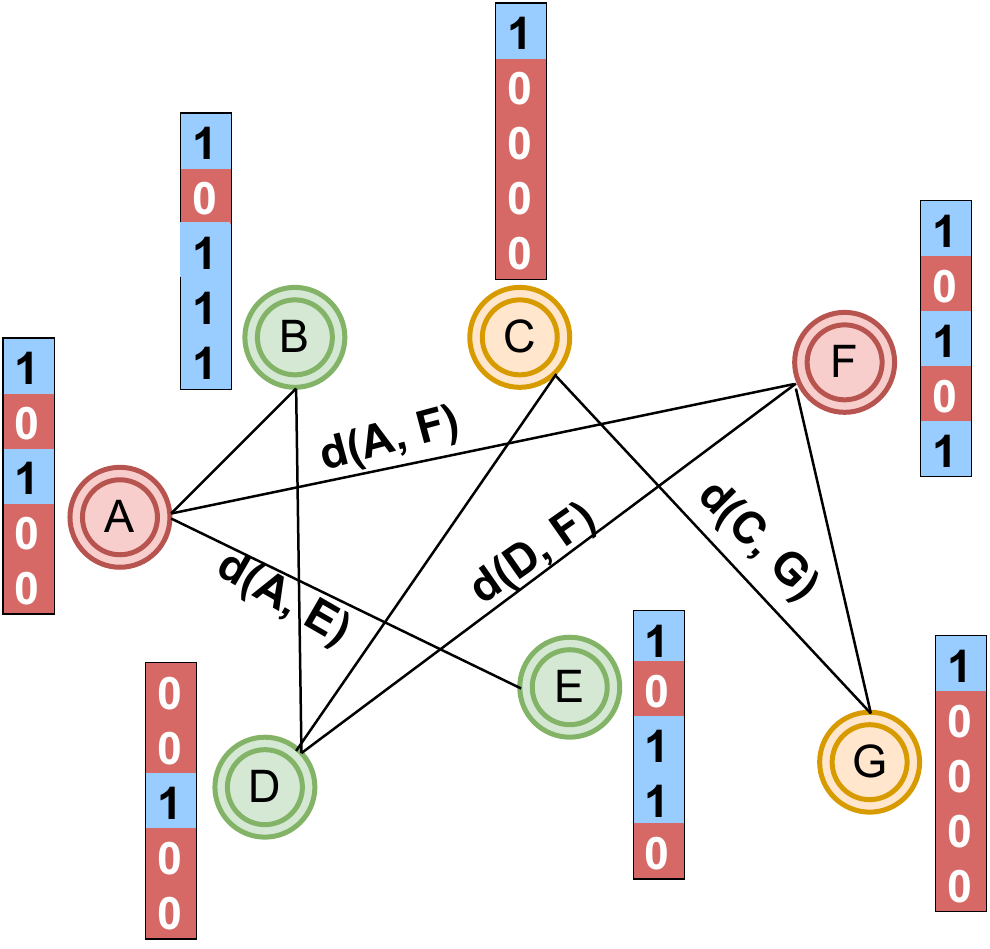}
    \caption{\exoattrsim attack. Each node is assigned a feature explanation vector where blue (1) and red (0) indicate whether the feature is part of the explanation or not. The attacker then assigns an edge by computing the pairwise similarity represented as $d(node_i, node_j)$ between nodes' explanation vectors. We show the representation of \zorro's explanation for easier visualization.}
    \label{fig:explainsim}
\end{figure}
This is an unsupervised attack in which the attacker only has access to the explanations and does not have access to the features or the labels. The attacker measures the distance between each pair of explanation vectors and assigns an edge between them if their distance is small. The intuition is that the model might assign similar labels to connected nodes, which leads to similar explanations. We experimented with various distance metrics, but cosine similarity performs best across all datasets. We refer to this similarity-based attack as \textbf{\exoattrsim}. This attack is illustrated in Figure \ref{fig:explainsim}. We report the performance using the area under the receiver operating characteristic curve (AUC) and average precision (AP). Rather than choosing one threshold which could be domain-dependent, these metrics allow us to provide a holistic evaluation across the complete range of thresholds.

\mpara{Explanation Augmentation Attacks.}
\begin{figure*}[h!]
   \centering
   \includegraphics[width=0.8\textwidth]{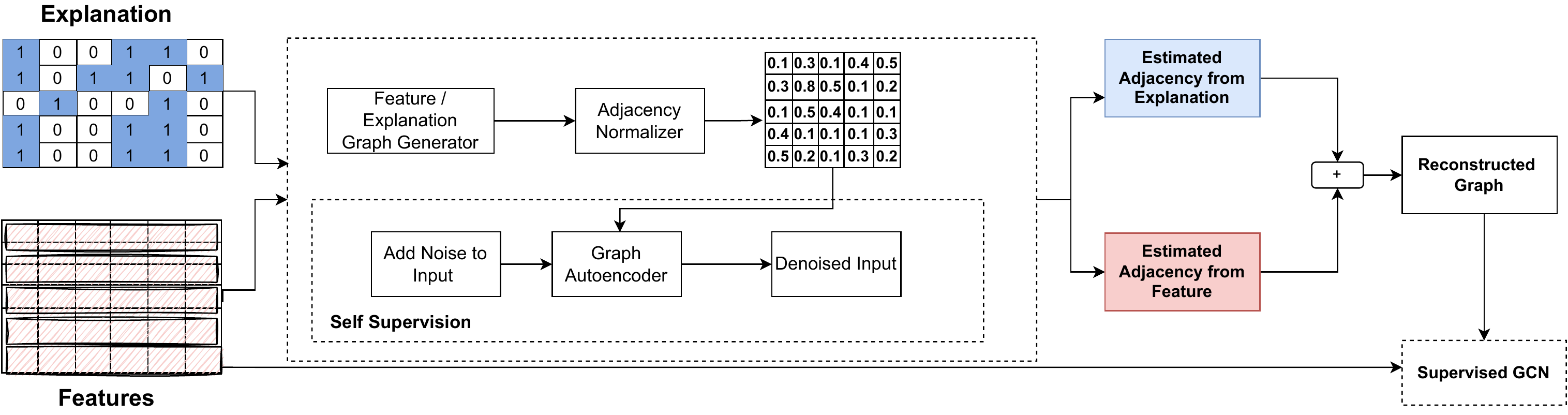}
   \caption{%
   \label{fig:exhoney} %
    Overview of \exhoney. The generator takes node features and explanations as input and outputs an adjacency matrix which may be non-normalized by the adjacency normalizer. The normalized adjacency matrix is used in predicting both the class labels and reconstructing the node features (explanations) by the denoising autoencoders. The final reconstructed adjacency is the one that minimizes the reconstruction error on each of the node features and explanations and the loss of the class label prediction.
    }
\end{figure*}
Towards explanation augmentation attacks, we leverage the graph structure learning paradigm of \cite{fatemi2021slaps}. In particular, we employ two \textit{generator modules} for generating graph edges corresponding to features and explanations, respectively. The generators are trained using feature/explanation reconstruction-based losses and node classification loss. We commence by describing the common architecture of the attack model followed by its concrete usage in the four attack variations in Section \ref{sec:augmentattacks}.

\paragraph{\textbf{Generators}} The two generators take the node features/explana\-tions as input and output two adjacency matrices. 
We employ the full parameterization (FP) approach to model the generators similar to those used in \cite{franceschi2019learning, fatemi2021slaps}.
In other words, each element of the adjacency matrix $\tilde{\mA}$ is treated as a separate learnable parameter, i.e., the adjacency matrix is fully parameterized.
We initialize these learnable parameters by the cosine similarity among the feature/explanation vectors of the corresponding nodes. Let the output adjacency matrix function be given as $\tilde{\mA}=\func{G}_{FP}(\mX; \bm{\theta}_\func{G})=\bm{\theta}_\func{G}$ where $\bm{\theta}_\func{G}\in\mathbb{R}^{n\times n}$ and $\func{G}_{FP}(\cdot;\cdot)$ denotes the generator function.
To obtain a symmetric adjacency matrix with all positive elements, we perform the following transformation
\begin{equation*}
    \mA = \mD^{-\frac{1}{2}}\Big(\frac{\func{P}_{[0,1]}(\tilde{\mA})+\func{P}_{[0,1]}(\tilde{\mA})^T}{2}\Big)\mD^{-\frac{1}{2}}, 
\end{equation*}
where \func{P} is a non-negative function defined by 
\begin{equation*}
\func{P}_{[0,1]}[x]=\left\{
\begin{array}{rcl}
0 & & x < 0,\\
1 & & x > 1,\\
x & & otherwise.
\end{array} \right.
\label{eqn:projection}
\end{equation*}
The final adjacency matrix is computed by adding the matrices corresponding to the two generators. Any element greater than one is trimmed to 1. In other words, a cell in the adjacency matrix follows a Bernoulli distribution.
The final graph is then generated by sampling from the learned Bernoulli distributions. More precisely, we select edge ($i,j$) with probability $p$ where $p\in [0,1]$ is the learnt weight for cell $(i,j)$.

\paragraph{ \textbf{Training With Self-supervision and Node Classification Loss}}
We remark that only some nodes used to train the attack model are labeled. To learn effective representations of both labeled and unlabeled nodes, we employ a self-supervision module in addition to training with the supervised classification loss. In particular, for the self-supervision module, we use \textit{denoising graph autoencoders}, which takes noisy features/explanations and the graph sampled from the generator as input. The goal is to reconstruct the true node features and explanations.

Let $\func{{DAE}}$ and $\func{{DAE_{\mathcal{E}_X}}}$ denote the feature and explanation reconstruction modules respectively. 
They take the noisy node features(explanations) as input and output a denoised version of the features (explanations) with the same dimension. 
To generate the noisy features/explanations, we first select random indices. We then add random noise to feature/explanation values corresponding to the selected indices. Concretely, for binary features/explanations, we randomly flip $r=20\%$ of the index with original values $1$ to $0$. For features and explanations with continuous values, we add independent Gaussian noise to $r=20\%$ percent of feature/explanation elements (corresponding to selected indices). To train our self-supervision module, we employ binary cross entropy loss for binary features/explanations and mean squared error for continuous features/explanations. Let $\loss{L}_{DAE}$ and  $\loss{L}_{DAE_{\mathcal{E}_X}}$ denote the loss functions corresponding to feature and explanation reconstruction, respectively.

For supervised training, we employ a graph convolution network (GCN) which takes as input the node features and the graph sampled from the generator to predict node labels. Note that the GCN used here is not the target model. We use the cross-entropy loss ($CE$) between the predicted labels ($\tilde{Y}$) and the ground-truth labels ($Y$), $\loss{L}_C  = CE(Y, \tilde{Y})$.

The final training loss for private graph extraction is then given by $$\loss{L}= \loss{L}_{DAE} + \loss{L}_{DAE_{\mathcal{E}_X}} + \loss{L}_C.$$

We refer to the above attack framework as \textbf{G}raph \textbf{S}tealing with \textbf{E}xplanations and \textbf{F}eatures (\textbf{\exhoney}), and the schematic diagram is given in Figure \ref{fig:exhoney}. Besides, we have three attack variations that employ a single generator module, as described below.

\subsubsection{Attack Variations}
\label{sec:augmentattacks}
\begin{table}[h!]
\caption{Attack taxonomy based on attacker's knowledge of node features ($\mX$), labels ($\mY$) and feature explanations ($\mathcal{E}_X$).}
\label{tab:attack-settings}
\begin{tabular}{llll}
\toprule
\textsc{Attack} & {$\mX$} & {$\mY$} & $\mathcal{E}_X$ \\ \midrule
\exoattrsim    & \xmark                 & \xmark               & \cmark                                       \\
\exhoney       & \cmark                 & \cmark               & \cmark \\
\faac        & \cmark                 & \cmark               & \cmark                                       \\ 
\faaw        & \cmark                 & \cmark               & \cmark                                  \\

\exohoneyex    & \xmark                 & \cmark               & \cmark                                       \\ 
 \bottomrule
\end{tabular}
\end{table}

Besides \exhoney, we have three attack variations that employ explanations (i) \textbf{{\faac}} in which we concatenate the node features and explanations and feed the concatenated input to a single generator module (ii) \textbf{\faaw} in which we perform element-wise multiplication between the features and the explanations and feed them into a single graph generator module. This is equivalent to assigning importance to the node features, which emphasize the essential characteristics of the nodes. Similar to \faac, we reconstruct the adjacency matrix using one generator of Figure \ref{fig:exhoney} and (iii) \textbf{\exohoneyex}  in which the attacker only has access to the explanations and labels. Here, we also employ only one generator with explanations as input. 

\section{Experiments}
In this section, we present the experimental results to show the effectiveness of explanation-based attacks. Specifically, our experiments are designed to answer the following research questions:

\begin{RQ}
How does the knowledge of feature explanations influence the reconstruction of private graph structure?
\label{rq:attack-performance}
\end{RQ}
\begin{RQ}
\label{rq:exp-vulnerability}
What are the differences between explanation methods with respect to privacy leakage?
\end{RQ}

\begin{RQ}
What is the additional advantage of the adversary (for example, in terms of the utility of the inferred information on a downstream task) on explanation augmentation attacks? 
\label{rq:reconstruct-edges}
\end{RQ}

\begin{RQ}
How much does the lack of knowledge about groundtruth node labels affect attack performance?
\label{rq:targetmodelaccess}
\end{RQ}

\subsection{Experimental Settings}

\subsubsection{Attack Baselines Without Explanations}
\paragraph{\textbf{\attrsim}} In this unsupervised attack, the attacker computes the pairwise similarity between pairs of the actual features to reconstruct the graph. Specifically, an edge exists between two nodes if the distance between their feature representation is low. We use cosine similarity as a measure of similarity because it performs better than other distance metrics.

\paragraph{\textbf{\lsa} \cite{he2021stealing}} 
\lsa (Link stealing attack) is a black-box attack that assumes that the attacker has access to a dataset drawn from a similar distribution to the target data (shadow dataset). Additionally, \lsa knows the architecture of the target model and can train a corresponding shadow model that replicates the behavior of the target model. The goal of the attack is to infer sensitive links between nodes of interest. We compare our results with their proposed attack-2, where an attacker has access to the node features and labels. They trained a separate MLP model (reference model) using the available target attributes and their corresponding labels to obtain posteriors. Then, \lsa computes the pairwise distance between posteriors obtained from the target model and that of the reference model for the nodes of interest. We use cosine similarity as the distance metric.

\paragraph{\textbf{\graphmi} \cite{zhanggraphmi}} \graphmi is a white-box attack in which the attacker has access to the parameters of the target model, node features, all the node labels, and other auxiliary information like edge density. The goal of an attacker is to reconstruct the sensitive links or connections between nodes. 
The attack model uses the cross entropy loss between the true labels and the output posterior distribution of the target model, along with feature smoothness and adjacency sparsity constraints, to train a fully parameterized adjacency matrix. The graph is then reconstructed using the graph autoencoder module in which the encoder is replaced by learned parameters of the target model, and the decoder is a logistic function.

\paragraph{\textbf{\gsl} \cite{fatemi2021slaps}}
Since our attack model is built on top of the graph structure learning framework of SLAPS, we performed an experiment using the vanilla SLAPS. Given node features and labels, SLAPS aims to reconstruct the graph that works best for the node classification task.

\subsubsection{Target GNN Model.} We employ a 2-layer graph convolution network (GCN) \cite{kipf2017semi} as our target GNN model. We used a learning rate of 0.001 and trained for 200 epochs.

\subsubsection{Evaluation Metrics} 
\label{sec:eval_metrics}
Following the existing works \cite{zhanggraphmi, he2021stealing}, we use the area under the receiver operating cost curve (AUC) and average precision (AP) to evaluate our attack. For all experiments (including baselines), we randomly sample an equal number of pairs of connected and unconnected nodes from the original graph to construct the test set. We measure the AUC and the AP on these randomly selected node pairs. We elaborate more in Appendix \ref{sec:data-sampling-elaborate}. All our experiments were conducted for $10$ different instantiations using PyTorch Geometric library \cite{fey2019fast} on 11GB GeForce GTX 1080 Ti GPU, and we report the mean values across all runs. The standard deviation of the results is in Appendix \ref{sec:main-result-std}.

\subsubsection{Datasets}
\label{sec:dataset}
We use three commonly used datasets chosen based on varying graph properties, such as their feature dimensions and structural properties. The task on all datasets is node classification. We present the data statistics in Table \ref{tab:data-stat}. We also performed additional experiments using \pubmed, \cseer, and \credit datasets (See Appendix \ref{sec:additional-datasets}).

\paragraph{\textbf{\cora}} The \cora dataset~\cite{sen2008collective} is a citation dataset where each research article is a node, and there exists an edge between two articles if one article cites the other. Each node has a label that shows the article category. The features of each node are represented by a 0/1-valued word vector, which indicates the word's presence or absence from the article's abstract.

\paragraph{\textbf{\coraml}} In \coraml dataset~\cite{sen2008collective}, each node is a research article, and its abstract is available as raw text. In contrast to the above dataset, the raw text of the abstract is transformed into a dense feature representation.
We preprocess the text by removing stop words, web page links, and special characters. Then, we generate Word2Vec~\cite{mikolov2013efficient} embedding of each word. Finally, we create the feature vector by taking the average over the embedding of all words in the abstract.

\paragraph{\textbf{\bitcoin}} The \bitcoin-Alpha dataset\cite{kumar2016edge} is a signed network of trading accounts. Each account is represented as a node, and there is a weighted edge between any two accounts, which represents the trust between accounts. The maximum weight value is +10, indicating total trust, and the lowest is -10, which means total distrust. Each node is assigned a label indicating whether or not the account is trustworthy. The feature vector of each node is based on the rating by other users, such as the average positive or negative rating. We follow the procedures in ~\cite{vu2020pgm} for generating the feature vectors.

\begin{table}
\caption{Dataset statistics. $|V|$ and $|E|$ denotes the number of nodes and edges respectively, $\mathcal{C}$, $\mathbf{X}_d$, and \textbf{deg} denotes the number of classes, size of feature dimension and the average degree of the corresponding graph dataset.} 
\label{tab:data-stat}
\begin{tabular}{lccc}
\toprule
& \textbf{\cora} & \textbf{\coraml} & \textbf{\bitcoin} \\\midrule
$|V|$    & 2708     & 2995        & 3783 \\
$|E|$   & 10858      & 8226    & 28248 \\
$\mathbf{X}_d$ & 1433   & 300    & 8 \\
$\mathcal{C}$  & 7                 & 7             & 2 \\
\textbf{deg}  & 4.00                 & 2.75             & 7.50 \\\bottomrule
\end{tabular}
\end{table}

\subsection{Result Analysis}

\begin{table*}[h!!]
\caption{Attack performance and baselines. The best performing attack(s) on each explanation method is(are) highlighted in bold, and the second best attack(s) is(are) underlined.}
\centering
\begin{tabular}{clp{0.7cm}<{\centering}p{0.7cm}<{\centering}p{0.7cm}<{\centering}p{0.7cm}<{\centering}p{0.7cm}<{\centering}p{0.7cm}<{\centering}p{0.7cm}<{\centering}p{0.7cm}<{\centering}p{0.7cm}<{\centering}p{0.7cm}<{\centering}p{0.7cm}<{\centering}p{0.7cm}<{\centering}p{0.7cm}<{\centering}p{0.7cm}<{\centering}p{0.7cm}<{\centering}p{0.7cm}<{\centering}}

\toprule
                \multicolumn{1}{l}{$Exp$} & 
                \multirow{1}{*}{\textbf{Attack}}& 
                \multicolumn{2}{c}{\textbf{\cora}} & \multicolumn{2}{c}{\textbf{\coraml}} & \multicolumn{2}{c}{\textbf{\bitcoin}}\\ 
                \cmidrule(r){3-4} \cmidrule(r){5-6} \cmidrule(r){7-8} 
                         &  & AUC           & AP            & AUC      & AP & AUC      & AP \\ \midrule

\parbox[t]{2mm}{\multirow{4}{*}{\rotatebox[origin=c]{90}{Baseline}}}
& \attrsim & 0.799	& 0.827 & 0.706 	& 0.753 & 0.535	& 0.478 \\
& \lsa \cite{he2021stealing} & 0.795	& 0.810 & 0.725	& 0.760 & 0.532	& 0.500\\
& \graphmi \cite{zhanggraphmi} & 0.856	&  0.830 & 0.808	& 0.814 & 0.585	& 0.518 \\ 
& \gsl \cite{fatemi2021slaps} & 0.736	& 0.776 & 0.649	& 0.702 & 0.597	& 0.577\\ \midrule

\parbox[t]{2mm}{\multirow{5}{*}{\rotatebox[origin=c]{90}{\grad}}} 
& \faac & 0.734	& 0.773 & 0.640	& 0.705 & 0.527	& 0.515 \\
& \faaw & 0.678	& 0.737 & 0.666	& 0.730 & 0.264 	& 0.383 \\
& \exhoney & \underline{0.948}	& \underline{0.953} & \textbf{0.902}	& \underline{0.833} & \textbf{0.700}	& \textbf{0.715}\\ 
& \exohoneyex & 0.924 	& 0.939 & 0.699 	& 0.768  & 0.229	& 0.365\\
& \exoattrsim & \textbf{0.984} & \textbf{0.978} & \underline{0.890}  & \textbf{0.891}	& \underline{0.681}  & \underline{0.644} \\ \midrule

\parbox[t]{2mm}{\multirow{5}{*}{\rotatebox[origin=c]{90}{\gradinput}}} 
& \faac & 0.734          & 0.775          & 0.674	& 0.734 & 0.525	& 0.527\\ 
& \faaw & 0.691 & 0.742 & 0.717 & 0.756 & 0.252	& 0.380\\
& \exhoney & \underline{0.949}	& \underline{0.950} & \underline{0.887}	& \underline{0.832} & \textbf{0.709}	& \textbf{0.723}\\ 
& \exohoneyex &0.903	& 0.923 &0.717	& 0.781 & 0.256	& 0.380\\
& \exoattrsim &\bf{0.984}	& \bf{0.979} & \bf{0.903}	& \bf{0.899} & \underline{0.681}	& \underline{0.644}\\ \midrule

\parbox[t]{2mm}{\multirow{5}{*}{\rotatebox[origin=c]{90}{\zorro}}} 
& \faac & 0.823	& 0.860          & 0.735	& 0.786 & \underline{0.575} & 0.529\\ 
& \faaw & 0.723	& 0.756 & 0.681	& 0.697 & 0.399 & 0.449\\
& \exhoney & \bf 0.884	& \bf 0.880 & \underline{0.776}	& \underline{0.820} & 0.537 & \underline{0.527}\\ 
& \exohoneyex & 0.779	& 0.810 & 0.722	& 0.777 & \bf 0.596 & \bf 0.561\\
& \exoattrsim & \underline{0.871}	& \underline{0.873} & \bf 0.806 & \bf 0.829 & 0.427 & 0.485\\ \midrule

\parbox[t]{2mm}{\multirow{5}{*}{\rotatebox[origin=c]{90}{\szorro}}} 
& \faac & 0.907          & 0.922          & \underline{0.747}	& \underline{0.791} & \bf{0.601} & \bf{0.590}\\ 
& \faaw & 0.794 & 0.815 & 0.712	& 0.740 & 0.490 & 0.491\\
& \exhoney & \bf{0.918}	& \underline{0.923} & \textbf{0.776}	& \textbf{0.819} & \underline{0.598} & \underline{0.565}\\ 
& \exohoneyex & 0.893	& 0.915 & 0.742	& 0.784 & 0.571 & 0.564\\
& \exoattrsim & \underline{0.908}	& \bf{0.934} & 0.732	& 0.787 & 0.484 & 0.496\\ \midrule

\parbox[t]{2mm}{\multirow{5}{*}{\rotatebox[origin=c]{90}{\glime}}} 
& \faac & \underline{0.643}          & \underline{0.710}          & \underline{0.610}	& \underline{0.652} & \underline{0.473} & \underline{0.493}\\ 
& \faaw & 0.516 & 0.522 & 0.517	& 0.528 & 0.264 & 0.371\\
& \exhoney & \bf{0.730}	& \bf{0.773} & \bf{0.681}	& \bf{0.740} & \bf{0.542} & \bf{0.525}\\ 
& \exohoneyex &0.558	& 0.571 & 0.540	& 0.555 & 0.236 & 0.361\\
& \exoattrsim &0.505	& 0.524 & 0.520	& 0.523 & 0.504 & 0.512\\ \midrule

\parbox[t]{2mm}{\multirow{5}{*}{\rotatebox[origin=c]{90}{\gnnexp}}} 
& \faac & 0.614          & 0.650          & 0.653	& 0.705 & 0.467 & 0.489\\ 
& \faaw & \underline{0.724} & \underline{0.760} & \underline{0.637}	& \underline{0.692} & 0.390 & 0.454\\
& \exhoney & \bf{0.762}	& \bf{0.796} & \bf{0.700}	& \bf{0.695} & \bf{0.590} & \bf{0.563}\\ 
& \exohoneyex &0.517	& 0.552 & 0.490	& 0.508 & 0.386 & 0.451\\
& \exoattrsim &0.537	& 0.541 & 0.484	& 0.508 & \underline{0.551} & \underline{0.543}\\ \bottomrule

\end{tabular}
\quad
\begin{tabular}{p{7.5cm}}
\begin{tcolorbox}
\underline{\textsc{\Large Summary of attack comparisons}}
\begin{itemize}[leftmargin=*]  \setlength\itemsep{1em}
\large    
\item The amount of information in the explanation alone for the graph structure can be quantified using the \exoattrsim attack.
 \item Explanation only (\exoattrsim) and explanation augmentation (\exhoney) attacks for all explanation methods other than \glime and \gnnexp outperform all baseline methods.
\item Among the baseline approaches, the white-box access-based attack, \graphmi, performs the best, followed by \attrsim.
    \item The relatively good performance of \attrsim in datasets points to a high correlation of node features with node connections in all datasets other than \bitcoin.
    \item The information leakage for \bitcoin is limited by small feature size. Our results on additional datasets with varying feature dimensions, as discussed in Section \ref{sec:additional-experiments-results}, support our observation about the effect of small feature size on the attack's success.
    \item For \glime and \gnnexp, we observe that the explanation contains little information about the graph structure. The reason behind this is further revealed in the fidelity-sparsity analysis of the obtained explanations.
   
\end{itemize}
\end{tcolorbox}

\end{tabular}
\label{tab:main-result}
\end{table*}

\subsubsection{\textbf{\large Analysing the Information Leakage by Explanations (RQ~\ref{rq:attack-performance})}}
\label{sec:attack-performance}
The detailed results of different attacks are provided in Table \ref{tab:main-result}.
Our results show that the explanation-only (\exoattrsim) and explanation augmentation (\exhoney) attacks for all explanation methods other than \glime and \gnnexp outperform all baseline methods and by far reveal the most information about the private graph structure. We attribute the superior performance of \exhoney to the multi-task learning paradigm that aims to reconstruct both the features and explanations. Our results also support our assumption that a graph structure that is good for predicting the node labels is also suitable for predicting the node features and explanations. 
\begin{figure*}[h!]
\centering
\subfigure[\cora]{\label{fig:cora-attrsim-exp-only}\includegraphics[width=0.3\linewidth]{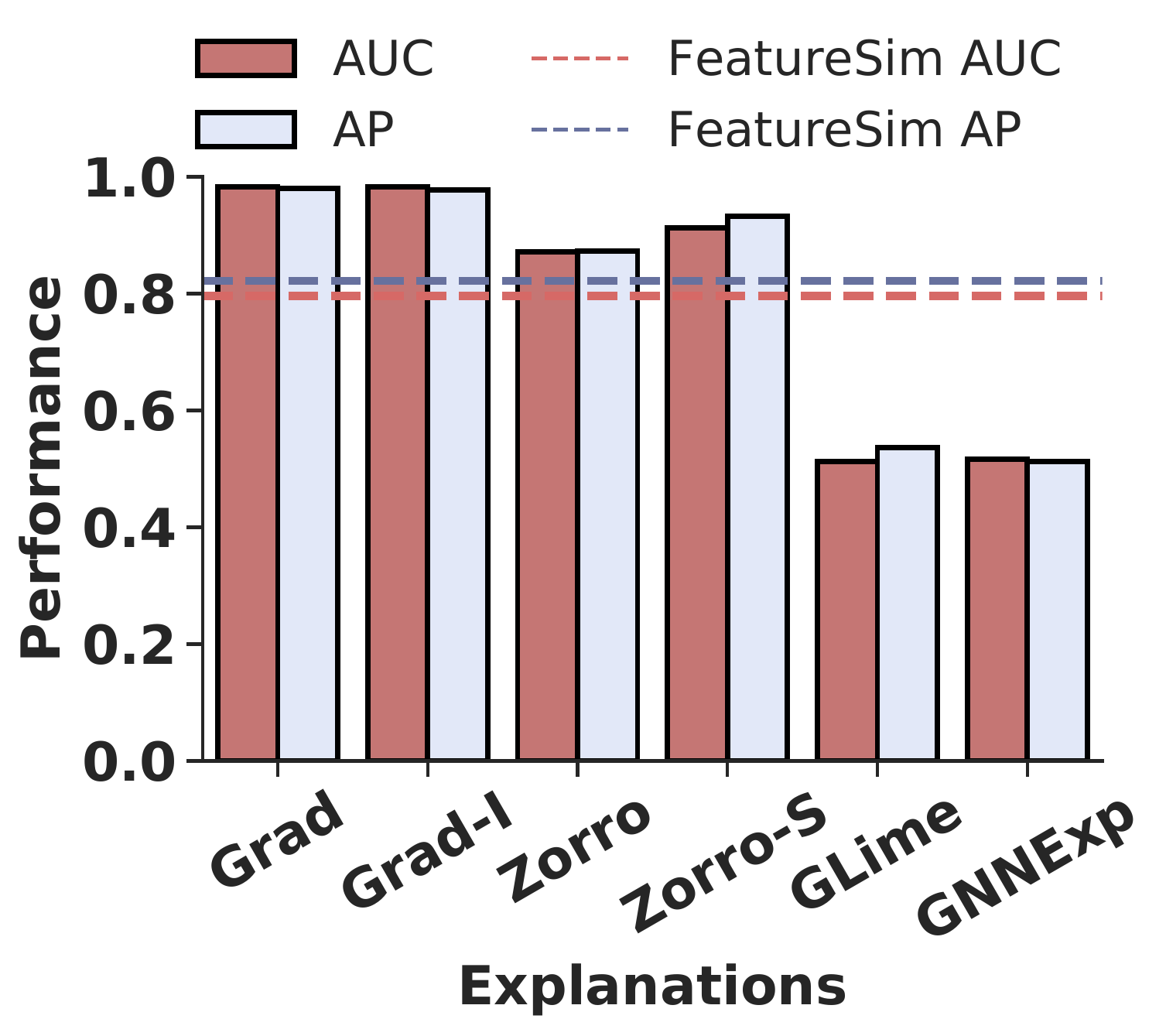}}
\subfigure[\coraml]{\label{fig:coraml-attrsim-exp-only}\includegraphics[width=0.3\linewidth]{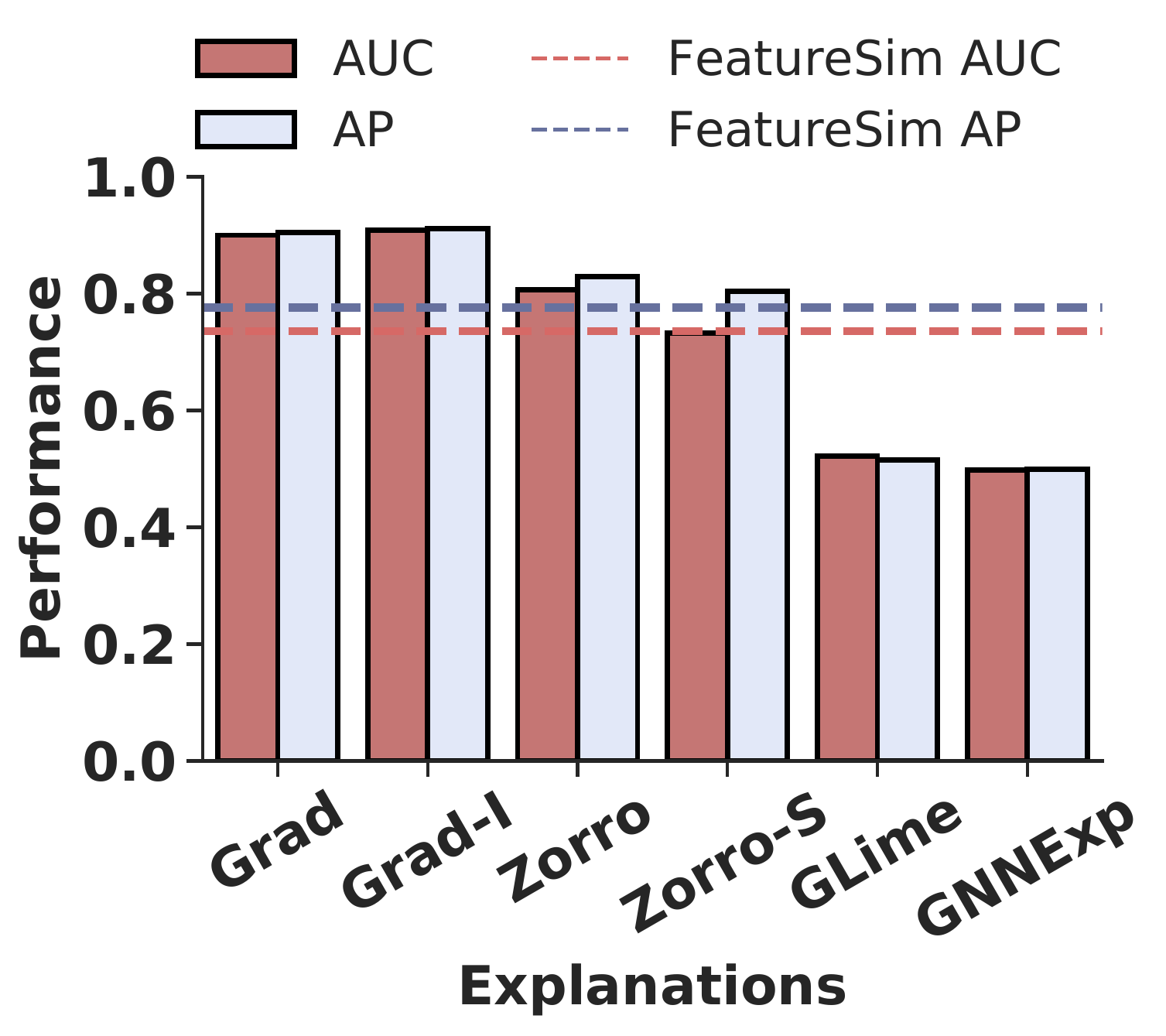}}
\subfigure[\bitcoin]{\label{fig:bitcoin-attrsim-exp-only}\includegraphics[width=0.3\linewidth]{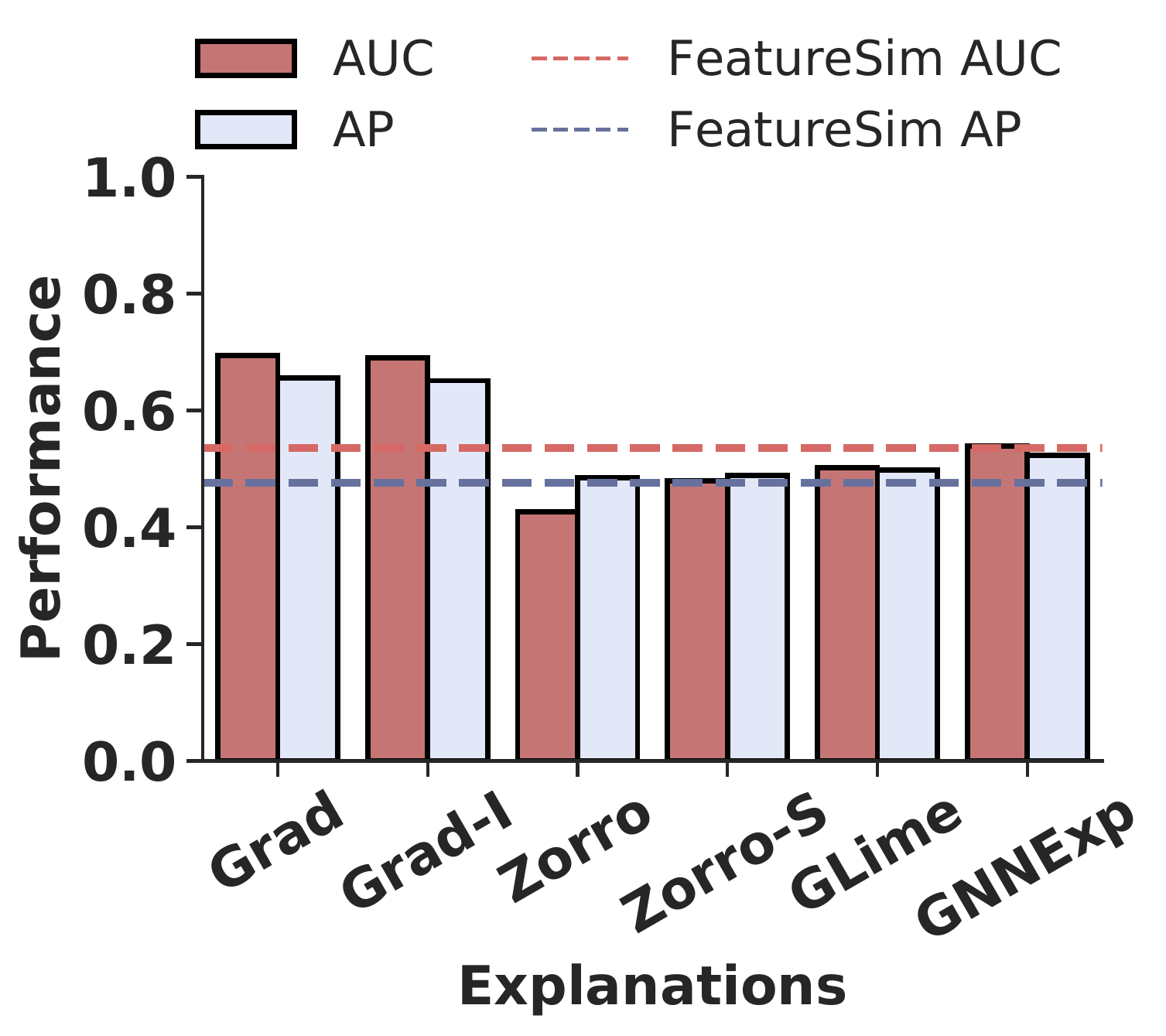}}

\caption{Performance of explanation-only attack (\exoattrsim) on the different datasets. The adopted baseline is \attrsim which performs the pairwise similarities using the true node features.}
\label{fig:attrsim-ex}
\end{figure*}

In the following, we provide an in-depth result analysis.

\begin{itemize}[leftmargin=*]
\item[$\circ$] \textbf{Baseline Methods.} Among the baseline methods, \graphmi is the best-performing attack. This is not very surprising as \graphmi has white-box access to the target GNN in addition to access to node features and labels. On the contrary, the \attrsim attack, which only uses node features, shows competitive performance. This highlights the fact that the features alone are very informative of the graph structure of the studied datasets (except \bitcoin in which all baseline attacks almost fail with an AUC score of close to 0.5).
    \item[$\circ$] \mpara{Comparison of the Privacy Leakage via Explanations With That of Features.} Figure \ref{fig:attrsim-ex} compares the performance of two similarity-based attacks using explanations (\exoattrsim)  and features (\attrsim), respectively. Note that these attacks do not use any other information except explanations and features, respectively. Hence, allowing us to compare their information content. We note that except for \glime and \gnnexp, \exoattrsim outperforms \attrsim for all datasets except \bitcoin. Moreover, for \bitcoin, both attacks fail (with AUC close to 0.5) except for gradient-based explanation methods. 
      \item[$\circ$] \mpara{Explanation Augmentation With Features and Labels.} Next, we compare the explanation augmentation attack \exhoney with the vanilla graph structure learning approach \gsl, which only uses node features and labels. \exhoney outperforms \gsl (Figure \ref{fig:ex-honey}), which points to the added utility of using explanations to reconstruct the graph structure.
      
         \item[$\circ$] \mpara{Explanation Augmentation Attack Variants.} 
         In Figure \ref{fig:ex-honey-ex}, we compare the performance of \exohoneyex, which have no access to the true features, with \gsl, which utilizes the true features. We observe that on most datasets, the attack significantly outperforms \gsl. This emphasizes that explanations encode the feature information albeit the importance.
         Comparing the performance of \faac and \faaw with \exhoney on all datasets shows that independently extracting the adjacency matrix from the features and explanations respectively and then combining the two adjacency matrices is better than combining the features and explanations at the input stage. 
         
            \item[$\circ$] \mpara{Attack on \bitcoin.} We observe that all baseline attacks fail for \bitcoin. We attribute this to the very small feature size (=8) and the small number of labels (=2). The attacks are not able to exploit the little information exposed by a small set of features/labels. Explanation-based attacks, especially in the case of gradient-based explanations, are more successful than baselines but less than for other datasets.
       
      \end{itemize}
      
\subsubsection{Results on Additional Datasets}
\label{sec:additional-experiments-results}
We further investigate our observation of the impact of feature dimension and disasno on the attack success by performing experiments on the additional datasets (namely \cseer, \pubmed, and \credit) with varying graph properties (details on datasets are provided in Section \ref{sec:additional-datasets}). The detailed result is in the Appendix in Table \ref{tab:additional-exp}.
We observe that the result on \cseer and \pubmed supports our observation of the effect of the feature size on the attack success on \cora and \coraml, respectively. 
First, a dataset with a high feature dimension is most vulnerable to the attack, as seen in the relative performance of \cseer on all explanation methods and attacks. Secondly, on the \grad and \gradinput explanations, simply augmenting backpropagation-based explanations with the actual feature via concatenation (\faac) or element-wise multiplication (\faaw) does not encode the necessary information for launching a successful attack. Instead, it introduces new noise to the reconstructed graph because the explanations are generated such that larger gradients, which only reflect the sensitivity between input and output, may not accurately indicate the importance of the feature. We also observe this phenomenon on other explanations methods, including \zorro, \szorro, and \glime, where \exohoneyex, which uses only explanations as input, performs better than \faac and \faaw on the \cora, \coraml, \cseer, and \pubmed datasets.
Thirdly, explanation methods that are robust to input perturbation, such as \zorro and \szorro, achieve better attack performance than backpropagation-based explanations on all explanation augmentation attacks. Finally, \glime and \gnnexp explanations are more robust to the attacks except on \exhoney attack, which achieves a high AUC and AP scores.

Among all the additional datasets, the results of the baseline attacks (\attrsim, \lsa, \graphmi, and \gsl, which all rely only on feature information) on the \credit dataset are the least. Recall that the \credit dataset has only 13 features. This supports our observation that small feature size affects the attack's success. Notably, the best performing baseline (\graphmi), which utilizes white-box access to the trained model, has a very low AUROC of $0.59$ and $0.51$ on the \bitcoin and \credit datasets, respectively. This is in contrast to the performance of \graphmi on \cora, \cseer, \pubmed, and \coraml with AUROC of $0.86$, $0.79$, $0.80$, and $0.81$, respectively.

Furthermore, in comparison with \bitcoin, which has only 8 features, we observe that the performance of \attrsim on the \credit dataset is 34\% higher ($0.54$ on \bitcoin and $0.72$ on \credit), which points to high connectivity among similar nodes on the \credit dataset. This, in turn, boosts our attack performance and allows better reconstruction of the private graph. Therefore, the poor performance of our attacks on the \bitcoin dataset is due to disassociativity on the \bitcoin dataset. Moreover, it is worth noting that our explanation augmentation attack is highly successful on the \credit dataset.
\subsubsection {\textbf{\large Differences in Privacy Leakage (RQ \ref{rq:exp-vulnerability}).}}
All explanation method leaks significant information via the reconstruction attacks except for \glime and \gnnexp. We observe that for \glime and \gnnexp, the explanation-based attacks do not perform better than the baselines, which do not utilize any explanation. Moreover, gradient-based methods are most vulnerable to privacy leakage.
To understand the reason behind these observations, we investigate the  explanation quality. We measure the goodness of the explanation by its ability to approximate the model's behavior which is also referred to as \textit{faithfulness}. As the groundtruth for explanations is not available, we use the RDT-Fidelity proposed by \cite{funke2021zorro} to measure faithfulness. The results are shown in Table \ref{tab:rdt-fidelity-sparsity}. We further note that by definition, the complete input is faithful to the model. Therefore, in addition, we measure the sparsity of the explanation. A meaningful explanation should be sparse and only contain a small subset of features most predictive of the model decision. We use the entropy-based sparsity definition from \cite{funke2021zorro} as it is applicable for both soft and hard explanation masks. The results are shown in Table \ref{tab:rdt-fidelity-sparsity}. We analyze the tradeoffs of privacy and explanation goodness in the following.
   \begin{itemize}[leftmargin=*]
       \item \textbf{First}, we observe that 
       \gnnexp has the lowest sparsity (the higher the entropy, the more uniform the explanation mask distribution, i.e., higher the explanation density). In other words, almost all features are marked equally important. Hence, it is not surprising that it shows a high fidelity.
       This is the main reason why \exoattrsim fails because there is no distinguishing power contained in the explanations. 
       \item \textbf{Second}, we observe that gradient-based explanations (\grad and \gradinput) contain the most information about the graph structure though they have low fidelity, i.e., they do not reflect the model's decision process. It appears that these two methods provide the most similar explanations for connected nodes. This is really the worst case when the explanations are not useful but leak maximum private information about the graph structure.
       \item \textbf{Third}, \glime has the highest sparsity and lowest fidelity. \glime runs the HSIC-Lasso feature selection method over a local dataset created using the node and its neighborhood. HSIC-Lasso is known to output a very small set of most predictive features \cite{dong2020featureselection} when used for global feature selection.   
       But for the current setting of instance-wise feature selection, i.e., finding the most predictive features for decision over an instance/node, \glime's explanation turns out to be too short, which is neither faithful to the model nor contains any predictive information about the neighborhood.
         \item \textbf{Finally}, the explanations of \zorro and \szorro show the highest fidelity and intermediate sparsity, pointing to their high quality. The \exhoney attack also obtains high AUC scores for two datasets pointing to the expected increased privacy risk with an increase in explanation utility.  
   \end{itemize}

\begin{table}[h!!]
\caption{RDT-Fidelity and sparsity (entropy) of different explanation methods. For fidelity, the higher the better. For sparsity, the lower the better}
\label{tab:rdt-fidelity-sparsity}
\begin{tabular}{lp{0.7cm}<{\centering}p{0.7cm}<{\centering}p{0.7cm}<{\centering}p{0.7cm}<{\centering}p{0.7cm}<{\centering}p{0.7cm}<{\centering}} \toprule
$Exp$    & \multicolumn{2}{c}{\cora} & \multicolumn{2}{c}{\coraml} & \multicolumn{2}{c}{\bitcoin} \\ 
\cmidrule(r){2-3} \cmidrule(r){4-5} \cmidrule(r){6-7}
       & \small{Fidelity}    & \small{Sparsity}   & \small{Fidelity}     & \small{Sparsity}    & \small{Fidelity}     & \small{Sparsity}     \\ \midrule
\textbf{\grad}   & 0.23         & 3.99        & 0.22          & 5.24         & 0.83          & 0.64          \\
\textbf{\gradinput} & 0.19         & 3.99        & 0.20          & 5.30         & 0.82          & 0.64 \\
\textbf{\zorro} & 0.89         & 1.83        & 0.96          & 3.33         & 0.99          & 0.37 \\
\textbf{\szorro} & 0.98         & 2.49        & 0.84          & 2.75         & 0.95          & 0.96 \\
\textbf{\glime} & 0.19         & 0.88        & 0.20          & 0.98         & 0.82          & 0.13 \\
\textbf{\gnnexp} & 0.74         & 7.27        & 0.55          & 5.70         & 0.90          & 2.05 \\ \bottomrule
\end{tabular}
\end{table}

\begin{figure*}
\centering
\subfigure[\cora]{\label{fig:cora-exp-only}\includegraphics[width=0.28\linewidth]{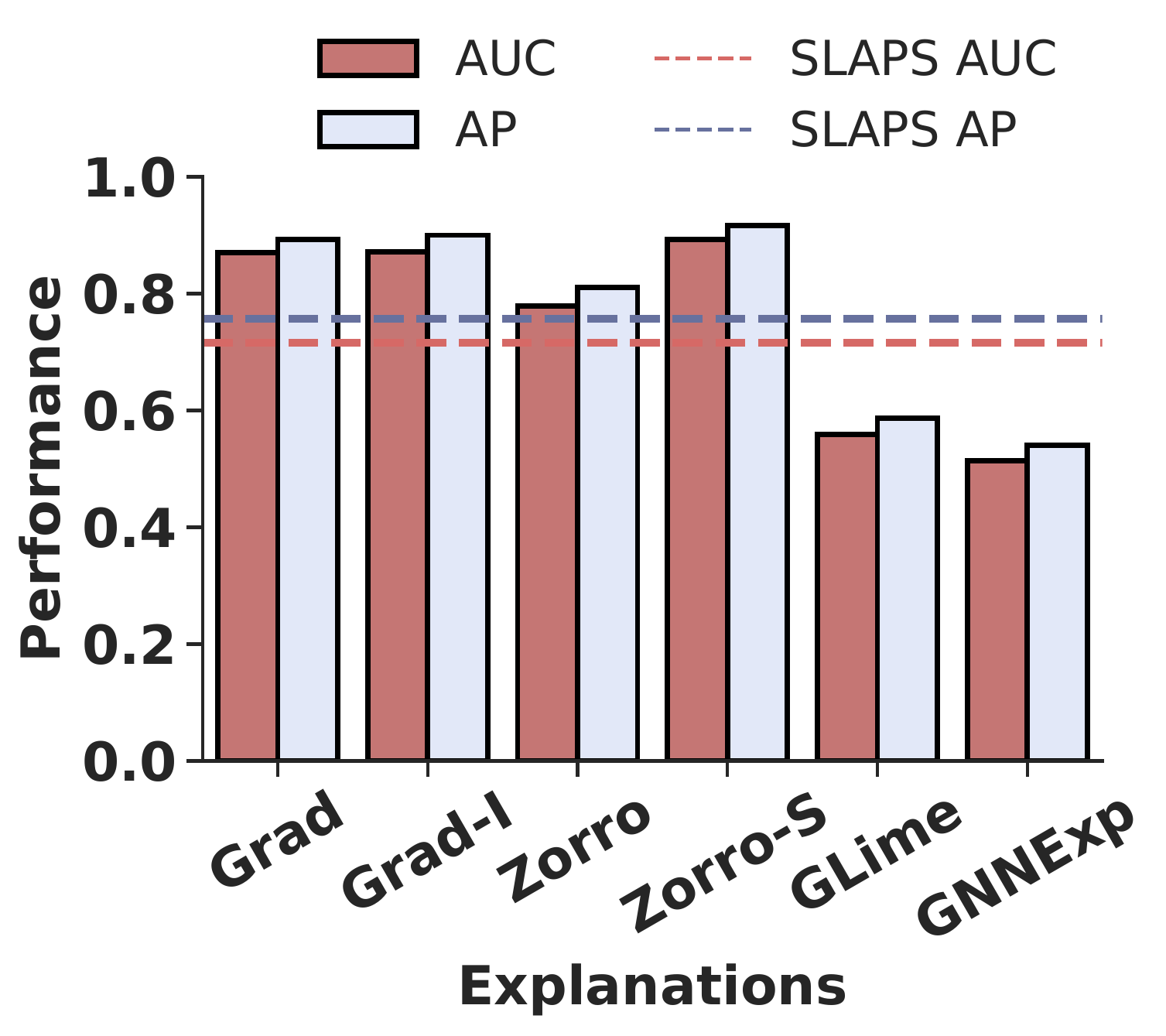}}
\subfigure[\coraml]{\label{fig:coraml-exp-only}\includegraphics[width=0.28\linewidth]{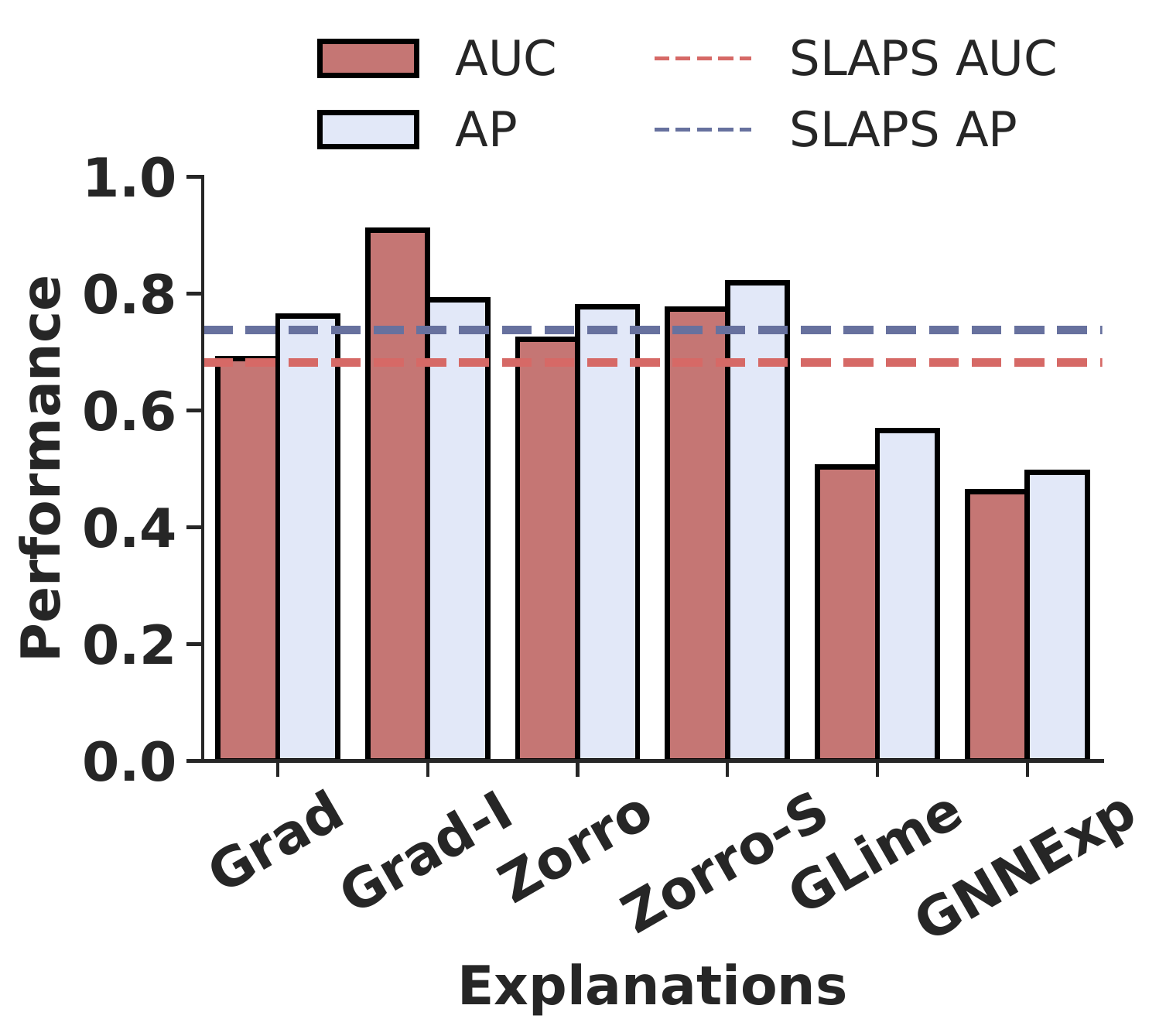}}
\subfigure[\bitcoin]{\label{fig:bitcoin-exp-only}\includegraphics[width=0.28\linewidth]{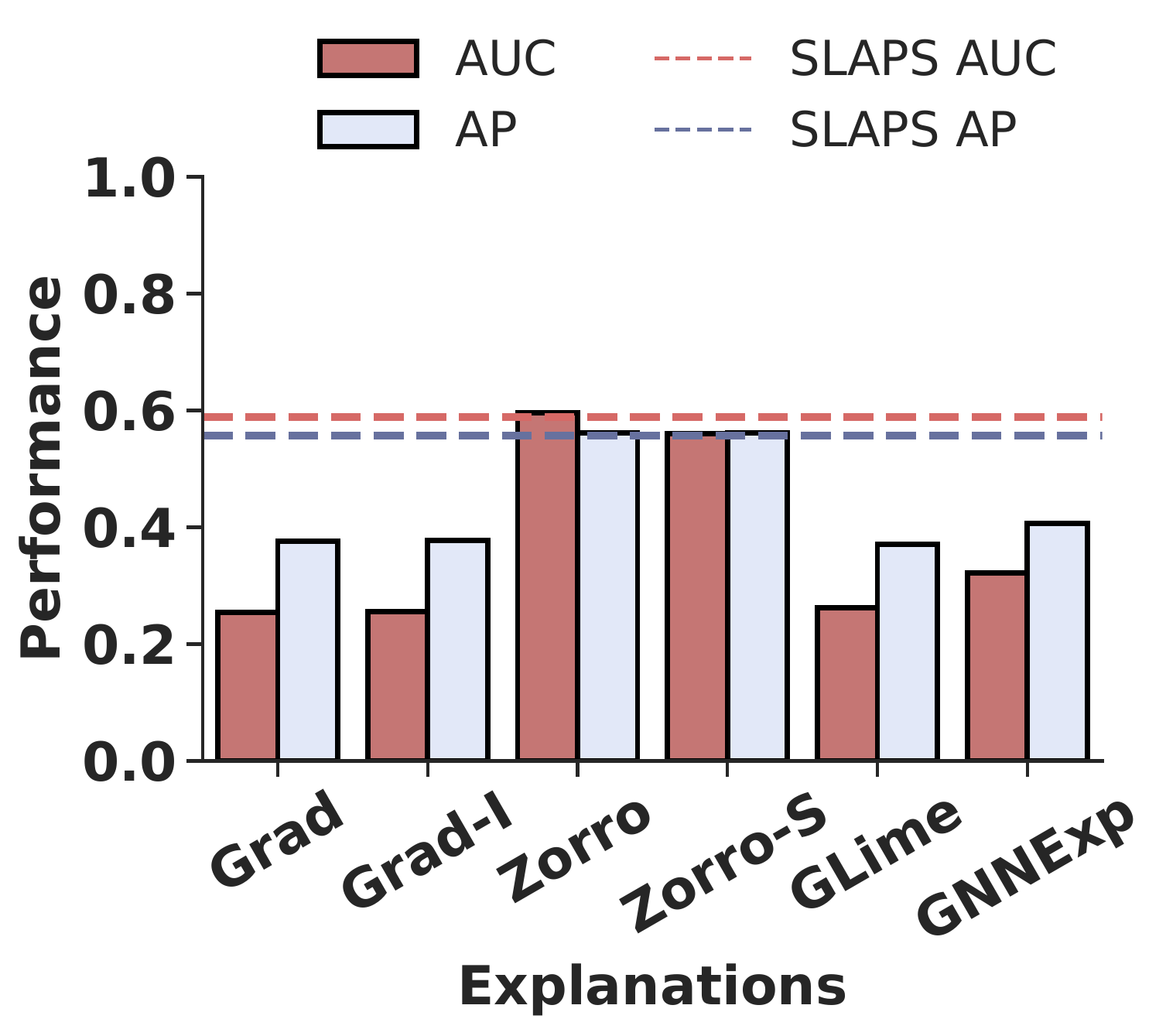}}

\caption{Average AUC and AP  of \exohoneyex attack on the different datasets. The adopted baseline is \gsl which use the true node features.}
\label{fig:ex-honey-ex}
\end{figure*}

\begin{figure*}
\centering
\subfigure[\cora]{\label{fig:cora-gsef-slaps}\includegraphics[width=0.28\linewidth]{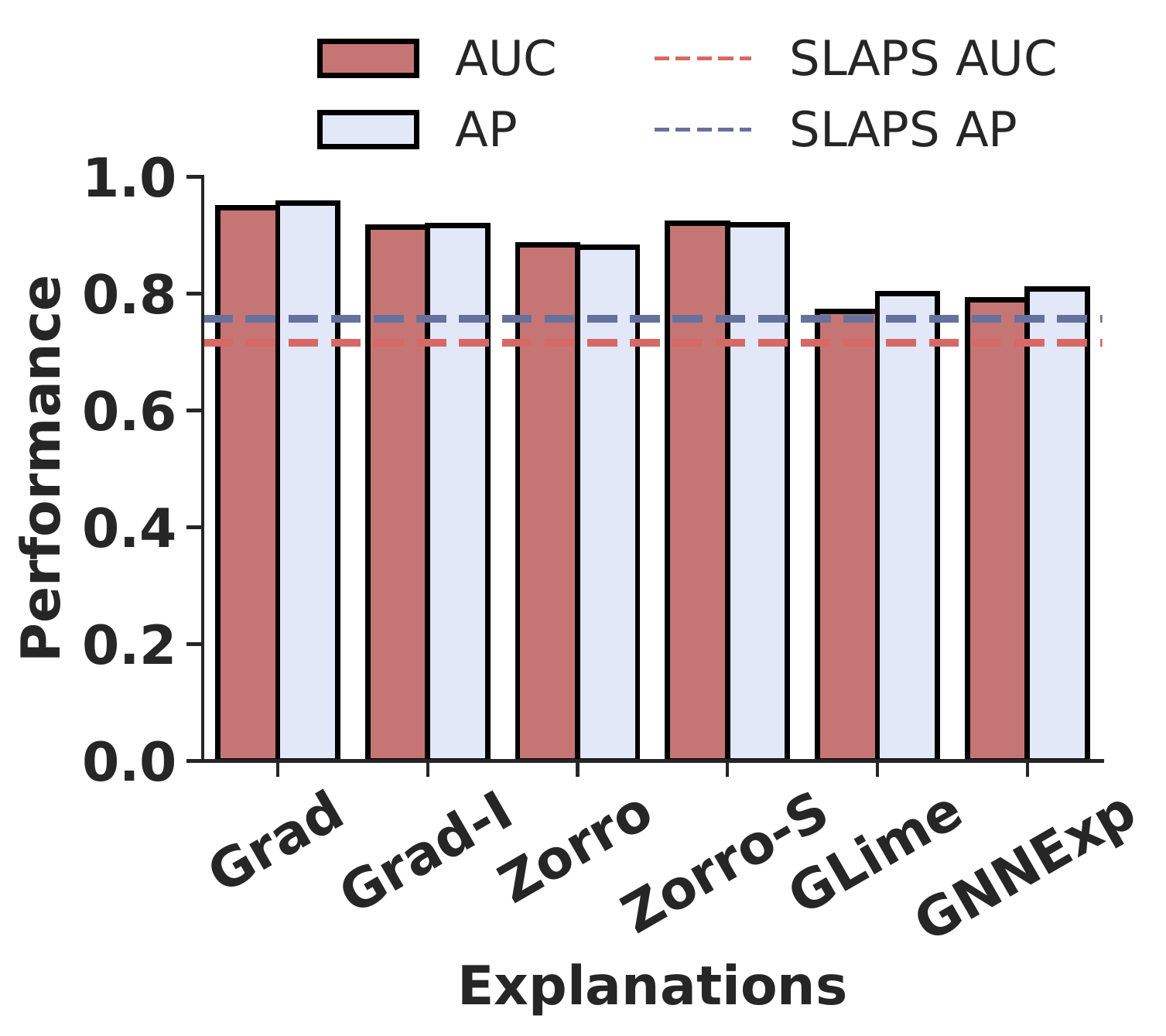}}
\subfigure[\coraml]{\label{fig:coraml-gsef-slaps}\includegraphics[width=0.28\linewidth]{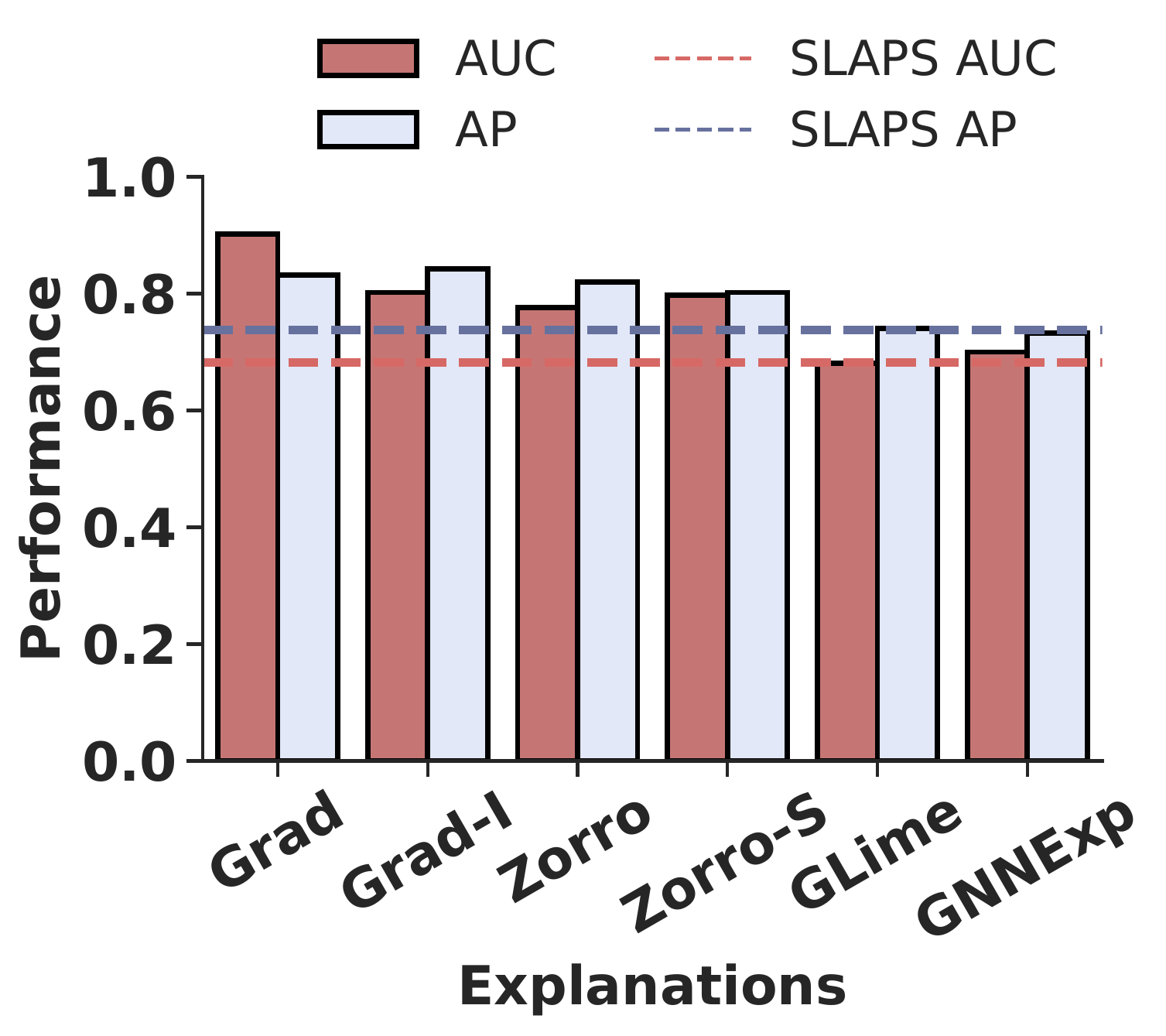}}
\subfigure[\bitcoin]{\label{fig:bitcoin-gsef-slaps}\includegraphics[width=0.28\linewidth]{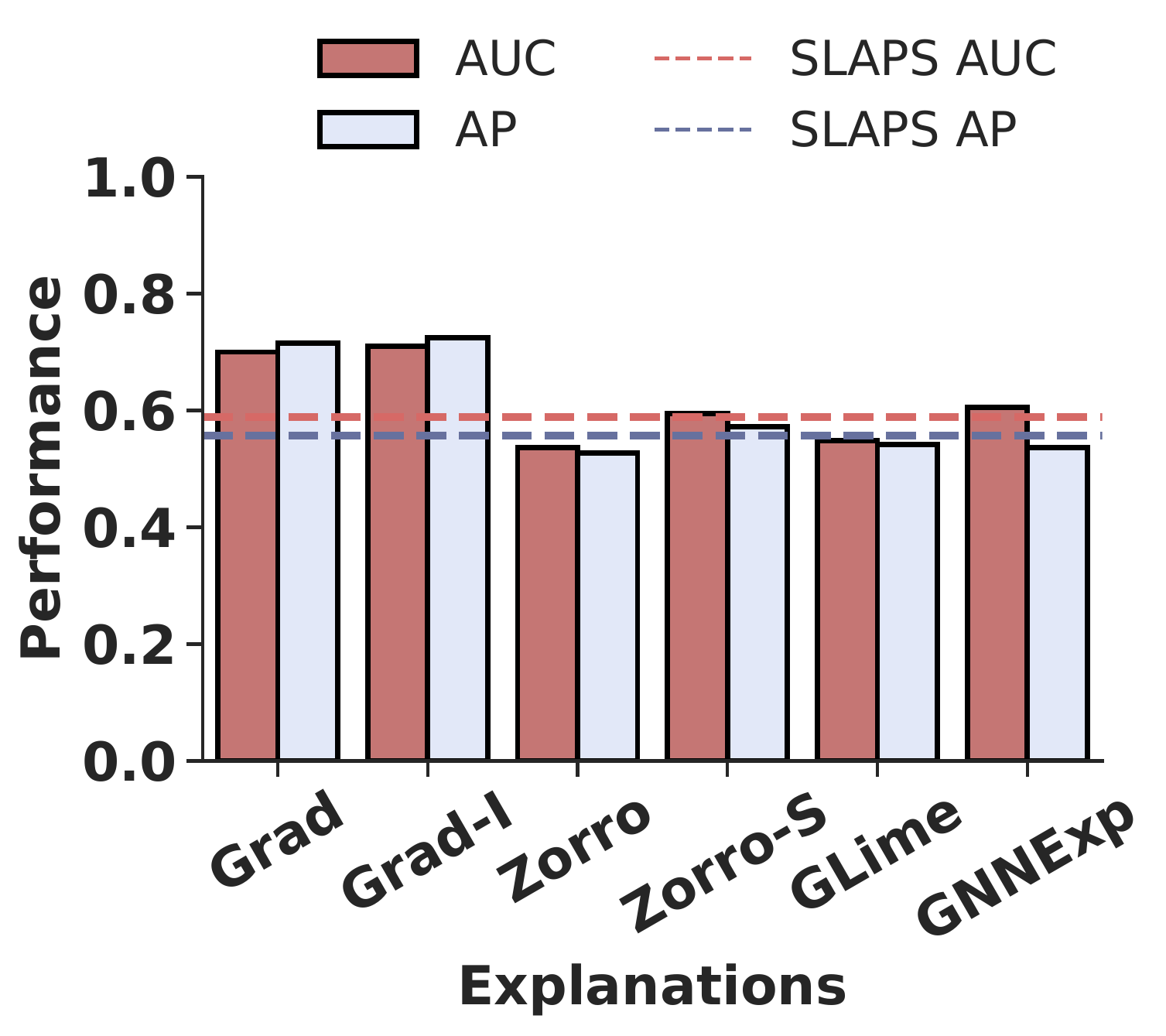}}

\caption{Average AUC and AP  of \exhoney attack on the different datasets. The adopted baseline is \gsl which use the true node features.}
\label{fig:ex-honey}
\end{figure*}

\subsubsection{\textbf{\large Adversary's Advantage in Terms of Trained GNN Model (RQ~\ref{rq:reconstruct-edges})}}
\label{sec:reconstruct-edges}
Here, we formalize a quantitative advantage measure that captures the privacy risk posed by the different attacks. The attacker is at an advantage if she can train a well-performing model (on a downstream task) using the reconstructed graph. As the attack models based on graph structure learning implicitly train a GNN on the reconstructed graph, we quantify the attacker's advantage by the performance on the downstream task of node classification.

\begin{hypothesis}
If the explanations and the reconstructed graph can perform better on a downstream classification task with high confidence, then the reconstructed adjacency is a valid representation of the graph structure. Hence, the attacker has an advantage quantified by Equation \ref{eq:attacker_advantage}.
\end{hypothesis}
We define the attacker's advantage as 
\begin{gather}
\label{eq:attacker_advantage}
    Advantage = \mathcal{R}(f(\mathcal{I}_X;Adj_{rec};\theta_W), y),
\end{gather}
where $f$ is a 2-layer GCN model parameterized by $\theta_W$, $\mathcal{I}_X$ is the input matrix which can either be only the explanations or the combination of the explanations and features, $Adj_{rec}$ is the reconstructed graph by the attacker, $y$ is the groundtruth label and $\mathcal{R}$ is an advantage measure that compares the predictions on $f$  and the groundtruth label. We use accuracy as the choice of $\mathcal{R}$.

We compare the results to that of \gsl that uses the actual features and groundtruth label for learning the graph structure and the original performance (denoted by $Max$) in which the model is trained with true features, labels, and graph structure.
We analyze the attacker's advantage corresponding to four attacks; \\\faac, \faaw, \exhoney, and \exohoneyex.
The intuition is that if the attacker's advantage is not better than \gsl, then the best advantage an attacker can have is similar to having the actual feature and performing graph structure learning. Also, if the attacker's advantage is greater or equal to $Max$, then the attacker has an equivalent advantage as she would have by possessing the actual feature and graph. An example use case of the attacker's advantage is shown in Figure \ref{fig:advantage_intuition}. Specifically, if a model trained with, say, Jane's full data (true features and graph) and another trained only with her explanations (no graph or true features), both models will make the same prediction about Jane.
\begin{figure}
    \centering
    \includegraphics[width=0.47\textwidth]{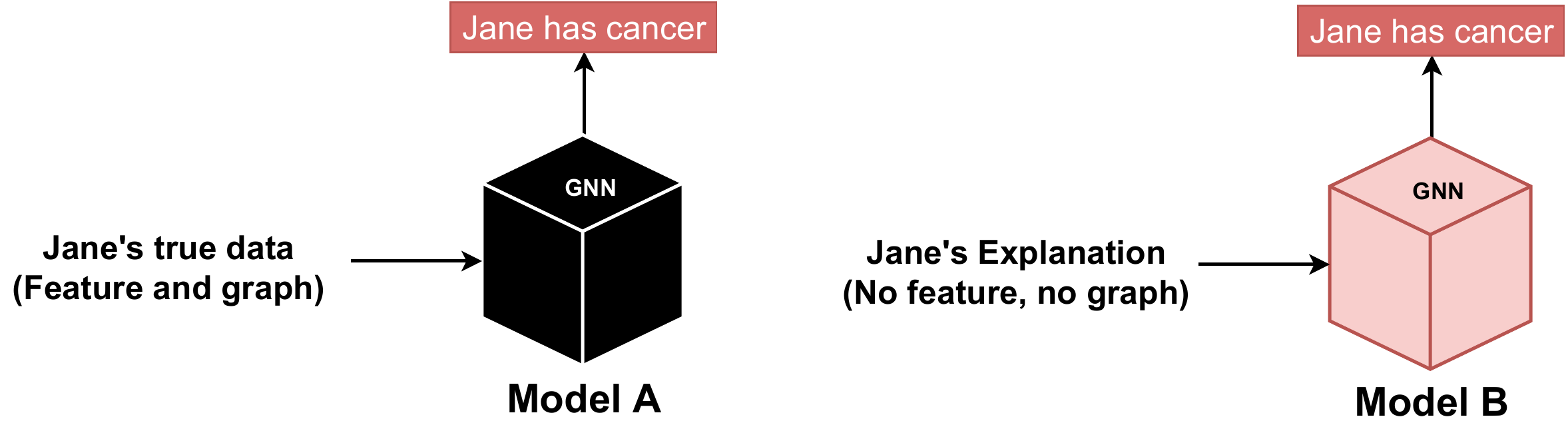}
    \caption{An example use case of the attacker's advantage}
    \label{fig:advantage_intuition}
\end{figure}
The detailed results for the attacker's advantage are plotted in Figure~\ref{fig:advantage}. We observe that on \cora, the attacker obtains the highest advantage for \grad, \gradinput, \zorro, and \szorro explanations. 
On \coraml, the highest advantage is obtained with \grad and \szorro explanations. Usually, the attacker's advantage is positively correlated with the success rate of corresponding attacks for both \cora and \coraml.
On \bitcoin, the attacker's advantage for all explanation methods is usually high. This is surprising as the success rate for attacks is relatively lower than for other datasets. This might imply that the reconstructed graph has the same semantics as the true graph, if not the exact structure.

\begin{figure*}
\centering 
\subfigure[\cora]{\label{fig:cora-reconstructed-acc}\includegraphics[width=1\linewidth]{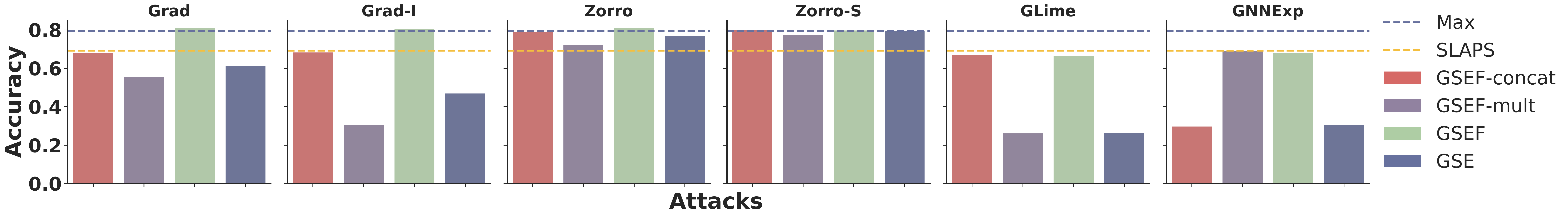}}
\subfigure[\coraml]{\label{fig:coraml-reconstructed-acc}\includegraphics[width=1\linewidth]{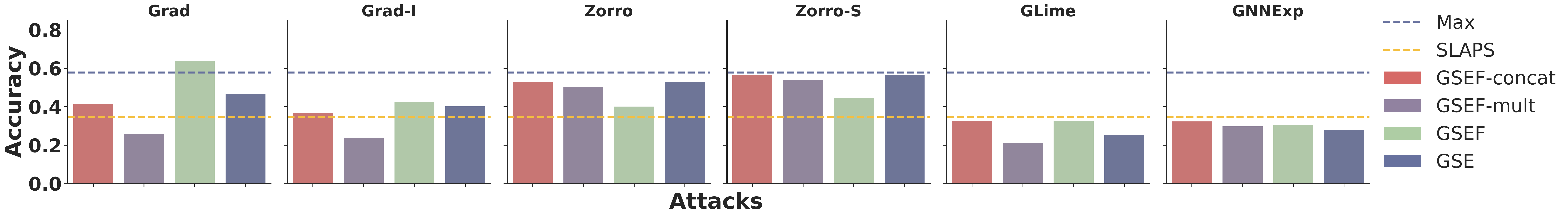}}
\subfigure[\bitcoin]{\label{fig:bitcoin-reconstructed-acc}\includegraphics[width=1\linewidth]{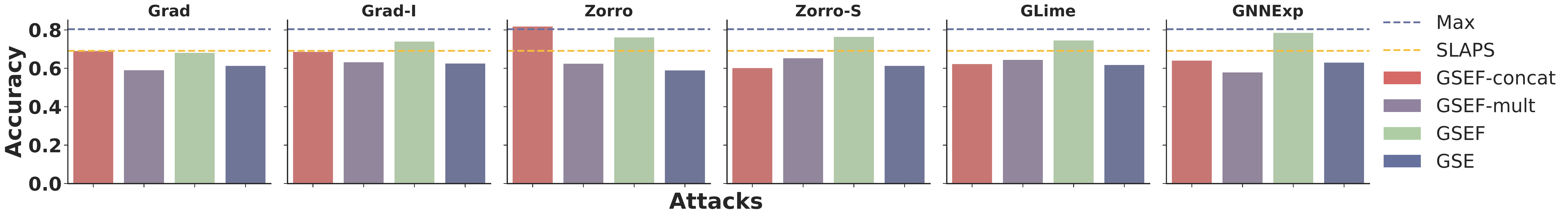}}

\caption{Accuracy of the reconstructed graph on a downstream node classification task by all models on different datasets. The blue line is the original accuracy using the true features and edges, while the yellow line is the \gsl accuracy.}
\label{fig:advantage}
\end{figure*}

\subsubsection{\textbf{\large Lack of Groundtruth Labels (RQ~\ref{rq:targetmodelaccess})}}
\label{sec:targetmodelaccess}
We relax the assumption that the attacker has access to groundtruth labels. Instead, she has black-box access to the target model. This is made possible with the popularity of machine learning as a service (MLaaS), where a user can input a query and get the predictions as output. Therefore, the "groundtruth" label is the one obtained from the target model. As representative explanations, we show the performance on \grad and \zorro on all datasets.

As shown in Table \ref{tab:groundtruth-vs-blackbox-attack}, on the \grad explanation, we observe a 3\% gain in attack performance on \faac when the attacker has access to the target model on the \cora and \coraml dataset. On the \bitcoin dataset, we observe a decrease of about 3\% in AUC and an increase of 6\% in AP. The corresponding performance on \zorro follows the same with no significant change in the performance on \coraml and a 5\% decrease in performance on \cora.

The performance of \faaw and \exhoney attack decreases across all datasets, with \bitcoin having the worst performance reduction of up to 29\% on \grad. However, on \zorro, there is a 2\% gain on \cora, a 4\% decrease on \coraml, and up to 36\% decrease on \bitcoin. We observe performance drop on \exhoney and \exohoneyex on all datasets across both explanations except for \exohoneyex on \grad, which  has up to 5\% gain in performance on the \cora and \coraml datasets.
It is important to note that the \bitcoin dataset has the least performance across all attacks, even when the true groundtruth label is used. Therefore, the large disparity in performance when the label is generated from the trained black-box model is not surprising. 

\textbf{Summary.} In the absence of groundtruth labels, we used model predictions. Intuitively, for incorrectly predicted nodes, the model explanations which are based on incorrectly predicted labels should lead to lower performance in the presence of groundtruth labels. For \cora and \coraml datasets, on \faac and \exohoneyex attacks, having black-box access to the target model performs better than the attacker having access to the groundtruth label. For \faaw and \exhoney attacks, it is better to have access to groundtruth label to achieve the best attack success rate. For the \bitcoin dataset, the groundtruth labels perform better than the black-box access to the target model on all attacks.

\begin{table*}[h!!]
\caption{Performance comparison of relaxing the availability of groundtruth labels (\textbf{$Y$})  assumption. Here, the attacker has black-box access to the target model ($\mathcal{M}$). We perform the experiment on all datasets' \grad and \zorro explanation methods. $\Delta$ is the percentage difference. A negative value implies that the groudtruth labels are preferred over black-box access.}
\centering
\begin{tabular}{p{2cm}lp{0.7cm}<{\centering}p{0.7cm}<{\centering}p{0.7cm}<{\centering}p{0.7cm}<{\centering}p{0.7cm}<{\centering}p{0.7cm}<{\centering} |p{0.7cm}<{\centering}p{0.7cm}<{\centering}p{0.7cm}<{\centering}p{0.7cm}<{\centering}p{0.7cm}<{\centering}p{0.7cm}<{\centering}}
\toprule
                \multicolumn{1}{l}{\textbf{Dataset}} & 
                \multirow{1}{*}{\textbf{Attack}}& 
                \multicolumn{2}{c}{\textbf{Y$_\grad$}} & \multicolumn{2}{c}{\textbf{$\mathcal{M_\grad}$}} & \multicolumn{2}{c}{\textbf{$\Delta_\grad$}}&
                \multicolumn{2}{c}{\textbf{Y$_\zorro$}} & \multicolumn{2}{c}{\textbf{$\mathcal{M_\zorro}$}} & \multicolumn{2}{c}{\textbf{$\Delta_\zorro$}}\\ 
                \cmidrule(r){3-4} \cmidrule(r){5-6} \cmidrule(r){7-8} \cmidrule(r){9-10} \cmidrule(r){11-12} \cmidrule(r){13-14} 
                         &  & AUC           & AP            & AUC      & AP & AUC      & AP 
                         & AUC           & AP            & AUC      & AP & AUC      & AP \\ \midrule

\parbox[t]{2mm}{\multirow{5}{*}{{\cora }}}
& \faac & 0.734	& 0.773    & 0.757	& 0.784 & 3.1	& 1.4
& 0.823	& 0.860    & 0.779	& 0.810 & -5.3	& -5.8\\
& \faaw &0.678	& 0.737 & 0.658	& 0.700 & -2.9	& -5.0
&0.723	& 0.756 & 0.740	& 0.772 & 2.4	& 2.0\\
& \exhoney &0.948	& 0.953  & 0.927	& 0.932 & -2.2 & -2.2
&0.884	& 0.880  & 0.871	& 0.881 & -1.5 & 0.1\\ 
& \exohoneyex &0.924	& 0.939  & 0.946	& 0.966 & 2.4	& 2.9
 &0.779	& 0.810  & 0.814	& 0.849 & 4.5	& 4.8 \\ \midrule

\parbox[t]{2mm}{\multirow{5}{*}{\shortstack[l]{\coraml }}} 
& \faac   & 0.640	& 0.705 & 0.661	& 0.734 & 3.3	& 4.1
& 0.735	& 0.786 & 0.738	& 0.792 & 0.4	& 0.8\\
& \faaw  & 0.666	& 0.730 & 0.650	& 0.693 & -2.4	& -5.1
& 0.681	& 0.697 & 0.653	& 0.692 & -4.1	& -0.7\\
& \exhoney & 0.902	& 0.833 & 0.808	& 0.852 & -10.4	& 2.3
& 0.776	& 0.820 & 0.751	& 0.796 & -3.2	& -2.9\\ 
& \exohoneyex & 0.699	& 0.768 & 0.735	& 0.795 & 5.2	& 3.5
 & 0.722	& 0.777 & 0.713	& 0.759 & -1.2	& -2.3\\ \midrule

\parbox[t]{2mm}{\multirow{5}{*}{\shortstack[l]{\bitcoin }}} 
& \faac & 0.527	& 0.515 & 0.512 & 0.546 & -2.7	& 6.0 
& 0.575	& 0.529 & 0.523 & 0.517 & -9.0	& -2.3\\
& \faaw & 0.264	& 0.383 & 0.241	& 0.367 & -8.7	& -4.2 
& 0.399	& 0.449 & 0.255	& 0.369 & -36.1	& -17.8 \\
& \exhoney & 0.700 & 0.715 & 0.500	& 0.560 & -28.6	& -21.7 
& 0.537 & 0.527 & 0.352	& 0.438 & -34.5	& -16.9\\ 
& \exohoneyex & 0.229	& 0.365 & 0.210	& 0.352 & -8.3	& -3.6 
& 0.596	& 0.561 & 0.491	& 0.503 & -17.6	& -10.3\\ \bottomrule

\end{tabular}
\label{tab:groundtruth-vs-blackbox-attack}
\end{table*}

\subsection{Attack Performance When Explanations for Partial Node-set Are Available}
\label{sec:exp-subset}
In this section, we assume that the attacker has access to explanations of only a subset of the nodes. The attacker is interested in reconstructing the subgraph induced on the partial node set. We perform the experiment on representative datasets \cora and \credit. For each of the datasets, we only use 30\% of the nodes to obtain explanations. The result is shown in Table \ref{tab:exp-subset}.

Unsurprisingly, we observe a drop in attack performance scores when explanations for only a node subset are available. However, we observe that explanation augmentation attacks perform best on all explainers and datasets. Recall that when the full explanations are available on \grad, the augmentation attacks rank second on the \cora dataset. However, with explanations for the partial node set, augmentation attacks rank first with AUROC and AP of $0.91$ and $0.92$. Moreover, on the \cora and \credit datasets, \faac and \faaw perform competitively with other attacks. Lastly, as observed in the experiments, when the full explanations are available, attacks on the \grad and \szorro explanations are highly successful, while attacks on the \gnnexp are less successful. Nonetheless, we observe AUC and AP $> 0.76$ on both datasets for \gnnexp.

\begin{table*}
\caption{Attack performance with explanations over partial node set for \cora and \credit datasets. The best performing attack(s) on each explanation method is(are) highlighted in bold, and the second best attack(s) is(are) underlined.}
\label{tab:exp-subset}
\centering
\begin{tabular}{clp{1.70cm}<{\centering}p{1.70cm}<{\centering}p{1.70cm}<{\centering}p{1.70cm}<{\centering}}
\toprule
                \multicolumn{1}{l}{$Exp$} & 
                \multirow{1}{*}{\textbf{Attack}}& 
                \multicolumn{2}{c}{\textbf{\cora}} & \multicolumn{2}{c}{\textbf{\credit}} \\ 
                \cmidrule(r){3-4} \cmidrule(r){5-6} 
                         &  & AUC           & AP            & AUC      & AP \\ \midrule

\parbox[t]{2mm}{\multirow{5}{*}{\rotatebox[origin=c]{90}{\grad}}} 
& \faac & $0.683 \pm 0.02$	& $0.700 \pm 0.03$ & 	$0.821 \pm 0.03$ & $0.851 \pm 0.03$ \\
& \faaw & $0.674 \pm 0.02$	& $0.705 \pm 0.02$ & 	$0.806 \pm 0.03$ & $0.839 \pm 0.02$ \\
& \exhoney & $\mathbf{0.916 \pm 0.03}$	& $\mathbf{0.920 \pm 0.03}$ & 	$\mathbf{0.842 \pm 0.04}$ & $\mathbf{0.875 \pm 0.04}$ \\
& \exohoneyex & $0.865 \pm 0.02$	& $0.871 \pm 0.02$ & 	\underline{$0.840 \pm 0.02$} & \underline{$0.870 \pm 0.03$} \\
& \exoattrsim & $\underline{0.881 \pm 0.01}$	& $\underline{0.913 \pm 0.01}$ & 	$0.835 \pm 0.01$ & $0.854 \pm 0.01$ \\ \midrule

\parbox[t]{2mm}{\multirow{5}{*}{\rotatebox[origin=c]{90}{\szorro}}} 
& \faac & $0.863 \pm 0.04$          & $0.896 \pm 0.04$ & 	\underline{$0.825 \pm 0.03$} & \underline{$0.842 \pm 0.02$} \\
& \faaw & $0.720 \pm 0.03$ & $0.757 \pm 0.04$ & 	$0.819 \pm 0.02$ & $0.840 \pm 0.02$ \\
& \exhoney & $\mathbf{0.901 \pm 0.03}$	& $\mathbf{0.905 \pm 0.03}$ & 	$\mathbf{0.835 \pm 0.02}$ & $\mathbf{0.847 \pm 0.03}$ \\
& \exohoneyex & $0.881 \pm 0.04$	& $0.893 \pm 0.01$ & 	$0.762 \pm 0.03$ & $0.801 \pm 0.02$ \\
& \exoattrsim & \underline{$0.887 \pm 0.02$}	& $\underline{0.901 \pm 0.02}$  & 	$0.701 \pm 0.01$ & $0.721 \pm 0.01$ \\ \midrule

\parbox[t]{2mm}{\multirow{5}{*}{\rotatebox[origin=c]{90}{\gnnexp}}} 
& \faac & $0.585 \pm 0.04$          & $0.600 \pm 0.04$ & 	$\mathbf{0.829 \pm 0.03}$ & $\mathbf{0.840 \pm 0.02}$ \\
& \faaw & \underline{$0.680 \pm 0.03$} & \underline{$0.699 \pm 0.03$} & 	\underline{$0.794 \pm 0.02$} & \underline{$0.802 \pm 0.02$} \\
& \exhoney & $\mathbf{0.762 \pm 0.04}$	& $\mathbf{0.791 \pm 0.06}$ & 	$0.782 \pm 0.02$ & $0.799 \pm 0.02$ \\
& \exohoneyex & $0.500 \pm 0.06$	& $0.520 \pm 0.05$& 	$0.688 \pm 0.04$ & $0.715 \pm 0.04$ \\
& \exoattrsim & $0.508 \pm 0.02$	& $0.515 \pm 0.02$ & 	$0.615 \pm 0.02$ & $0.660 \pm 0.02$ \\ \bottomrule

\end{tabular}
\end{table*}

\subsection{When Do Augmentation Attacks Work? }
\label{sec:when-aug-works}
To summarize, augmentation attacks perform best on all explanation models except on \grad and \gradinput across all datasets. Nonetheless, augmentation attack \exhoney on the \grad and \gradinput explanations perform comparable to or better than \exohoneyex attack, which only uses explanations as input on \cseer, \coraml, \bitcoin, and \credit datasets.  
    
On explainers other than \grad and \gradinput, augmentation attacks perform the best. For instance, on the \cseer dataset, augmentation attack \exhoney achieved AUROC and AP of $0.94$ and $0.95$ respectively on \szorro explanation, which is far better than all baselines and explanation-only attacks. The multi-task learning paradigm that \exhoney employs gives it an edge over other augmentation attacks, baselines, and explanation-only attacks. 
    
Overall, the strength of augmentation attacks depends on the choice of the explanation method and the strategy of using the additional feature information.
For instance, \faac and \faaw augmentation attacks, which combine the additional feature information by concatenation and element-wise multiplication, perform worst on the \grad, \gradinput, \zorro, and \szorro explanations. However, they perform best on surrogate method \glime and perturbation-based method \gnnexp. Moreover, augmentation attacks show the best performance when explanations only for a partial node set are available.

\section{Defense}
\mpara{Explanation Perturbation.} To limit the information leakage by the explanation, we perturb each explanation bit using a randomized response mechanism \cite{kairouz2016discrete, wang2016using}. Specifically for 0/1 feature (explanation) mask as in \zorro, we flip each bit of the explanation with a probability that depends on the privacy budget $\epsilon$ as follows
\begin{equation}\label{eq:randomized-response}
	Pr(\mathbf{\mathcal{E}}_{x_i}^\prime =1) = \begin{cases}
		\frac{e^\epsilon}{e^\epsilon + 1}, \quad \text{if}~\mathbf{\mathcal{E}}_{x_i}=1, \\
		\frac{1}{e^\epsilon + 1}, \quad \text{if}~\mathbf{\mathcal{E}}_{x_i}=0,
	\end{cases}
\end{equation}
where $\mathbf{\mathcal{E}}_{x_i}$ and $\mathbf{\mathcal{E}}_{x_i}^\prime$ are true and perturbed $i^{th}$ bit of explanation $\mathbf{\mathcal{E}}_{x}$ respectively. Note that our defense mechanism satisfies $d\epsilon$-local differential privacy. \begin{lemma}
\label{lemma:randres}
For an explanation with $d$ dimensions, the explanation perturbation defense mechanism in Equation \ref{eq:randomized-response} satisfies $d\epsilon$-local differential privacy.
\end{lemma}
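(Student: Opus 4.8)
The plan is to reduce the $d$-dimensional guarantee to the composition of $d$ independent single-bit randomized response mechanisms. First I would recall the definition of local differential privacy: a randomized mechanism $\mathcal{M}$ satisfies $\epsilon$-LDP if for every pair of possible inputs $v,v'$ and every set $S$ of outputs, $\Pr[\mathcal{M}(v)\in S]\le e^{\epsilon}\,\Pr[\mathcal{M}(v')\in S]$. It is worth emphasizing that, unlike the central model, there is no notion of neighboring inputs here, so the inequality must hold for \emph{all} pairs of possible explanation vectors.

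Step one is to verify that the per-bit mechanism of Equation \ref{eq:randomized-response} is $\epsilon$-LDP. Since the single-bit mechanism has only two inputs ($0$ and $1$) and two outputs ($0$ and $1$), it suffices to bound the four likelihood ratios. For output $1$ we get $\frac{\Pr[\mathcal{E}'=1\mid \mathcal{E}=1]}{\Pr[\mathcal{E}'=1\mid \mathcal{E}=0]}=\frac{e^{\epsilon}/(e^{\epsilon}+1)}{1/(e^{\epsilon}+1)}=e^{\epsilon}$; for output $0$ the corresponding ratio is $e^{-\epsilon}\le e^{\epsilon}$; and the two reciprocal directions are bounded identically. Hence the worst-case ratio is $e^{\epsilon}$, so flipping a single bit is $\epsilon$-LDP.

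Step two is the composition over the $d$ coordinates. Because the construction perturbs each coordinate independently, the overall mechanism is the product $\mathcal{M}=(\mathcal{M}_1,\dots,\mathcal{M}_d)$, and I would either quote the standard sequential composition theorem for differential privacy (mechanisms with budgets $\epsilon_1,\dots,\epsilon_d$ compose to budget $\sum_i\epsilon_i$) or argue directly. In the direct argument, for any two explanation vectors $\mathcal{E}_x=(b_1,\dots,b_d)$ and $\tilde{\mathcal{E}}_x=(\tilde b_1,\dots,\tilde b_d)$ and any output $\mathcal{E}'_x=(c_1,\dots,c_d)$, independence lets the joint likelihood factorize, so
\[
\frac{\Pr[\mathcal{M}(\mathcal{E}_x)=\mathcal{E}'_x]}{\Pr[\mathcal{M}(\tilde{\mathcal{E}}_x)=\mathcal{E}'_x]}=\prod_{i=1}^{d}\frac{\Pr[\mathcal{M}_i(b_i)=c_i]}{\Pr[\mathcal{M}_i(\tilde b_i)=c_i]}\le\prod_{i=1}^{d}e^{\epsilon}=e^{d\epsilon},
\]
which is exactly the $d\epsilon$-LDP guarantee claimed by the lemma.

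The computations here are routine, so the ``hard part'' is really conceptual: recognizing that local DP quantifies over all input pairs, so the worst case is two explanation vectors that differ in all $d$ bits, which is precisely why the privacy cost grows linearly in the dimension rather than remaining at $\epsilon$. The only technical point that must not be skipped is that the per-coordinate noise is independent --- that is what makes the joint likelihood factorize in the displayed bound --- and this is guaranteed by the defense construction, which draws a fresh coin flip for each explanation bit.
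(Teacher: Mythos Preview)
Your proof is correct and follows essentially the same approach as the paper: factorize the joint likelihood over the $d$ independently perturbed coordinates and bound each per-bit ratio by $e^{\epsilon}$. If anything, your treatment is more careful, since you correctly frame local DP as quantifying over \emph{all} input pairs, whereas the paper's wording (``two graph datasets $D$ and $D'$ differing in a single edge'') slips into central-DP language even though the displayed bound is the same.
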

\begin{proof}
Note that for an explanation corresponding to two graph datasets $D$ and $D^\prime$ differing in a single edge, the ratio of probabilities of obtaining a certain explanation can be bounded as follows.
$$\frac{Pr[\mathcal{E}_X(D)= S]}{Pr[\mathcal{E}_X(D^\prime) = S]} =\prod_{i=1}^d \frac{Pr[\mathbf{\mathcal{E}}_{x_i}(D)= S_i]}{Pr[\mathbf{\mathcal{E}}_{x_i}(D^\prime) = S_i]} \le \prod_{i=1}^d \frac{\frac{e^{\epsilon}}{e^{\epsilon}+1}}{\frac{1}{e^{\epsilon} +1}} =e^{d\epsilon}.$$
\end{proof}

\mpara{Defense Evaluation.} We evaluate our defense mechanism on \\\exoattrsim attack as it best quantifies the information leakage due to the explanations alone. All other attacks assume the availability of other information, such as features and labels. We use two datasets: \cora and \coraml. As evaluation metrics, we use the AUC score and AP to compute the attack success rate after the defense. Besides, we measure the utility of the perturbed explanation in terms of fidelity, sparsity, and the percentage of 1 bit that is retained from the original explanation (intersection).

\mpara{Defense Results.} 
As shown in Figure \ref{fig:zorro-defense-cora}, the explanation perturbation based on the randomized response mechanism clearly defends against the attack. For instance, at a very high privacy level $\epsilon = 0.0001$, which gives $d\epsilon= 0.14$, the attack performance drastically dropped to 0.56 in AUC and 0.59 in AP, which is about 36\% decrease over the non-private released explanation. As expected, the attack performance decreases significantly with increase in the amount of noise ($\epsilon$ decreases). In Table \ref{tab:defense-fidelity-sparsity-cora}, we analyze the change in explanation utility due to our perturbation mechanism. We observe that on \cora, with the lowest privacy loss level, there is a drop of $5.61\%$ in the fidelity when the attack is already reduced to a random guess. The entropy of the mask distribution increases. In other words, the explanation sparsity decreases. For \zorro, this implies that more bits are set to 1 than in the true explanation mask. Even though this decreases explanation utility to some extent, we point out that $74.68\%$ of true explanation is still retained. Moreover, the sparsity is still lower than achieved by \gnnexp explanations, even without any perturbations.

While quantitatively, the change in explanation sparsity is acceptable, more application-dependent qualitative studies are required to evaluate the change in the utility of explanations. Nevertheless, we provide a promising first defense for future development and possible improvements. 
We obtain similar results for \coraml, which are provided in Appendix \ref{sec:defense-coraml}.

\mpara{Defense Variant for Soft Explanation Masks.} Note that Equation \ref{eq:randomized-response} only applies to explanations that return binary values. For explanations with continuous values, we can adapt the defense as follows. We keep the original value ($\mathbf{\mathcal{E}_x}_i$) when the flipped coin lands heads, but when it lands tail, we replace ($\mathbf{\mathcal{E}_x}_i$) with ${\mathcal{E}_x}^\prime_i$ where ${\mathcal{E}_x}^\prime_i$ is a random number drawn from a normal distribution ($\mathbf{\mathcal{E}_x}^\prime_i \sim \mathcal{N}(0,\,1)$).

\begin{table}[]
\caption{Fidelity, sparsity and percentage of 1 bits in the true explanation that is retained in the perturbed explanation (intersection) after defense for different $\epsilon$ on the \cora dataset for \zorro explanation. $\infty$ implies no privacy.}
\label{tab:defense-fidelity-sparsity-cora}
\begin{tabular}{lccc} \toprule
\textbf{$\epsilon$} & \textbf{Fidelity} & \textbf{Sparsity} & \textbf{Intersection}\\ \midrule
\textbf{0.0001}                   & 0.84        & 5.91     & 74.68                        \\
\textbf{0.001}              & 0.84          & 5.91    & 74.70                      \\
\textbf{0.01}                  & 0.84           & 5.89 & 75.03                             \\
\textbf{0.1}                & 0.84           & 5.80    & 75.10                           \\
\textbf{0.2}              & 0.83         & 5.71    & 75.60                      \\
\textbf{0.4}              & 0.82         & 5.49    & 76.45                         \\
\textbf{0.6}           & 0.81          & 5.25      & 77.16                      
\\ 
\textbf{0.8}           & 0.81          & 5.00    & 78.66                 
\\ 
\textbf{1}           & 0.81          & 4.73    & 80.10                      
\\ 
$\infty$           & 0.89          & 1.83      & 100\\ \bottomrule
\end{tabular}
\end{table}

\begin{figure}
    \centering
    \includegraphics[width=0.47\textwidth]{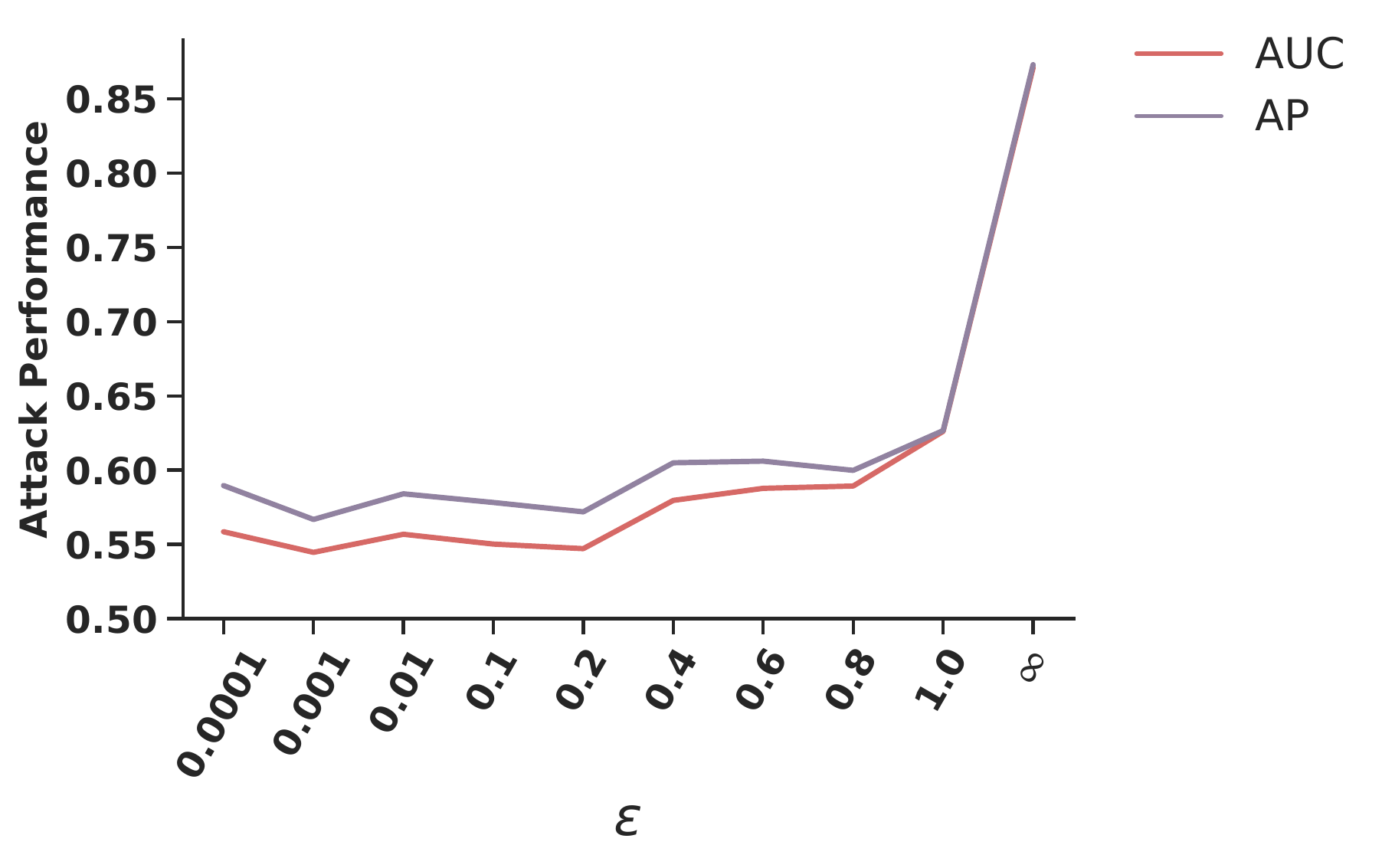}
    \caption{Privacy budget and corresponding attack performance of \exoattrsim for \zorro explanation on the \cora dataset. $\infty$ implies that no perturbation is performed.}
    \label{fig:zorro-defense-cora}
\end{figure}

\section{Conclusion}
We initiate the first investigation on the privacy risks of releasing post-hoc explanations of graph neural networks. Concretely, we quantify the information leakage of explanations via our proposed \emph{five} graph reconstruction attacks. The goal of the attacker is to reconstruct the private graph structure information used to train a GNN model. Our results show that even when the explanations alone are available without any additional auxiliary information, the attacker can reconstruct the graph structure with an AUC score of more than $90\%$. Our explanation-based attacks outperform all baseline methods pointing to the additional privacy risk of releasing explanations. We propose a perturbation-based defense mechanism that reduces the attack to a random guess. The defense leads to a slight decrease in fidelity. At the lowest privacy loss, the perturbed explanation still contains around 75\% of the true explanation. While quantitatively, the change in explanation sparsity seems to be acceptable, more application-dependent qualitative studies would be required to evaluate the change in the utility of explanations. 

We emphasize that we strongly believe in the transparency of graph machine learning and acknowledge the need to explain trained models. At the same time, our work points out the associated privacy risks which cannot be ignored. We believe that our work would encourage future work on finding solutions to balance the complex trade-off between privacy and transparency.

\begin{acks}
This work is partly funded by the Lower Saxony Ministry of Science and Culture under grant number ZN3491 within the Lower Saxony "Vorab" of the Volkswagen Foundation and supported by the Center for Digital Innovations (ZDIN), and the Federal Ministry of Education and Research
(BMBF), Germany under the project LeibnizKILabor (grant number
01DD20003). The authors are grateful to the anonymous reviewers for providing insights that further improved our paper.
\end{acks}

\bibliographystyle{ACM-Reference-Format}
\bibliography{main}


\begin{thebibliography}{50}


\ifx \showCODEN    \undefined \def \showCODEN     #1{\unskip}     \fi
\ifx \showDOI      \undefined \def \showDOI       #1{#1}\fi
\ifx \showISBNx    \undefined \def \showISBNx     #1{\unskip}     \fi
\ifx \showISBNxiii \undefined \def \showISBNxiii  #1{\unskip}     \fi
\ifx \showISSN     \undefined \def \showISSN      #1{\unskip}     \fi
\ifx \showLCCN     \undefined \def \showLCCN      #1{\unskip}     \fi
\ifx \shownote     \undefined \def \shownote      #1{#1}          \fi
\ifx \showarticletitle \undefined \def \showarticletitle #1{#1}   \fi
\ifx \showURL      \undefined \def \showURL       {\relax}        \fi
\providecommand\bibfield[2]{#2}
\providecommand\bibinfo[2]{#2}
\providecommand\natexlab[1]{#1}
\providecommand\showeprint[2][]{arXiv:#2}

\bibitem[Agarwal et~al\mbox{.}(2021)]%
        {agarwal2021towards}
\bibfield{author}{\bibinfo{person}{Chirag Agarwal}, \bibinfo{person}{Himabindu
  Lakkaraju}, {and} \bibinfo{person}{Marinka Zitnik}.}
  \bibinfo{year}{2021}\natexlab{}.
\newblock \showarticletitle{Towards a unified framework for fair and stable
  graph representation learning}. In \bibinfo{booktitle}{\emph{Uncertainty in
  Artificial Intelligence}}. PMLR, \bibinfo{pages}{2114--2124}.
\newblock


\bibitem[Ahmedt-Aristizabal et~al\mbox{.}(2021)]%
        {Ahmedt2021graphmedical}
\bibfield{author}{\bibinfo{person}{David Ahmedt-Aristizabal},
  \bibinfo{person}{Mohammad~Ali Armin}, \bibinfo{person}{Simon Denman},
  \bibinfo{person}{Clinton Fookes}, {and} \bibinfo{person}{Lars Petersson}.}
  \bibinfo{year}{2021}\natexlab{}.
\newblock \showarticletitle{Graph-Based Deep Learning for Medical Diagnosis and
  Analysis: Past, Present and Future}.
\newblock \bibinfo{journal}{\emph{Sensors (Basel, Switzerland)}}
  \bibinfo{volume}{21}, \bibinfo{number}{14} (\bibinfo{date}{July}
  \bibinfo{year}{2021}).
\newblock
\showISSN{1424-8220}
\urldef\tempurl%
\url{https://doi.org/10.3390/s21144758}
\showDOI{\tempurl}


\bibitem[Baldassarre and Azizpour(2019)]%
        {baldassarre2019explainability}
\bibfield{author}{\bibinfo{person}{Federico Baldassarre} {and}
  \bibinfo{person}{Hossein Azizpour}.} \bibinfo{year}{2019}\natexlab{}.
\newblock \showarticletitle{Explainability techniques for graph convolutional
  networks}.
\newblock \bibinfo{journal}{\emph{arXiv preprint arXiv:1905.13686}}
  (\bibinfo{year}{2019}).
\newblock


\bibitem[Barua et~al\mbox{.}(2007)]%
        {barua2007enabling}
\bibfield{author}{\bibinfo{person}{Anitesh Barua},
  \bibinfo{person}{Suryanarayanan Ravindran}, {and} \bibinfo{person}{Andrew~B
  Whinston}.} \bibinfo{year}{2007}\natexlab{}.
\newblock \showarticletitle{Enabling information sharing within organizations}.
\newblock \bibinfo{journal}{\emph{Information Technology and Management}}
  \bibinfo{volume}{8}, \bibinfo{number}{1} (\bibinfo{year}{2007}),
  \bibinfo{pages}{31--45}.
\newblock


\bibitem[Dai et~al\mbox{.}(2018)]%
        {dai2018adversarial}
\bibfield{author}{\bibinfo{person}{Hanjun Dai}, \bibinfo{person}{Hui Li},
  \bibinfo{person}{Tian Tian}, \bibinfo{person}{Xin Huang},
  \bibinfo{person}{Lin Wang}, \bibinfo{person}{Jun Zhu}, {and}
  \bibinfo{person}{Le Song}.} \bibinfo{year}{2018}\natexlab{}.
\newblock \showarticletitle{Adversarial attack on graph structured data}.
\newblock \bibinfo{journal}{\emph{arXiv preprint arXiv:1806.02371}}
  (\bibinfo{year}{2018}).
\newblock


\bibitem[Dong and Khosla(2020)]%
        {dong2020featureselection}
\bibfield{author}{\bibinfo{person}{Ngan~Thi Dong} {and} \bibinfo{person}{Megha
  Khosla}.} \bibinfo{year}{2020}\natexlab{}.
\newblock \showarticletitle{Revisiting Feature Selection with Data Complexity}.
  In \bibinfo{booktitle}{\emph{2020 IEEE 20th International Conference on
  Bioinformatics and Bioengineering (BIBE)}}. \bibinfo{pages}{211--216}.
\newblock
\urldef\tempurl%
\url{https://doi.org/10.1109/BIBE50027.2020.00042}
\showDOI{\tempurl}


\bibitem[Dong et~al\mbox{.}(2022)]%
        {dong2022mucomid}
\bibfield{author}{\bibinfo{person}{Thi~Ngan Dong}, \bibinfo{person}{Stefanie
  Mucke}, {and} \bibinfo{person}{Megha Khosla}.}
  \bibinfo{year}{2022}\natexlab{}.
\newblock \showarticletitle{MuCoMiD: A Multitask graph Convolutional Learning
  Framework for miRNA-Disease Association Prediction}.
\newblock \bibinfo{journal}{\emph{IEEE/ACM Transactions on Computational
  Biology and Bioinformatics}} (\bibinfo{year}{2022}).
\newblock


\bibitem[Du et~al\mbox{.}(2018)]%
        {du2018towards}
\bibfield{author}{\bibinfo{person}{Mengnan Du}, \bibinfo{person}{Ninghao Liu},
  \bibinfo{person}{Qingquan Song}, {and} \bibinfo{person}{Xia Hu}.}
  \bibinfo{year}{2018}\natexlab{}.
\newblock \showarticletitle{Towards explanation of dnn-based prediction with
  guided feature inversion}. In \bibinfo{booktitle}{\emph{Proceedings of the
  24th ACM SIGKDD International Conference on Knowledge Discovery \& Data
  Mining}}. \bibinfo{pages}{1358--1367}.
\newblock


\bibitem[Duddu et~al\mbox{.}(2020)]%
        {duddu2020quantifying}
\bibfield{author}{\bibinfo{person}{Vasisht Duddu}, \bibinfo{person}{Antoine
  Boutet}, {and} \bibinfo{person}{Virat Shejwalkar}.}
  \bibinfo{year}{2020}\natexlab{}.
\newblock \showarticletitle{Quantifying Privacy Leakage in Graph Embedding}.
\newblock \bibinfo{journal}{\emph{arXiv preprint arXiv:2010.00906}}
  (\bibinfo{year}{2020}).
\newblock


\bibitem[Fabijan et~al\mbox{.}(2016)]%
        {fabijanlacksharing}
\bibfield{author}{\bibinfo{person}{Aleksander Fabijan},
  \bibinfo{person}{Helena~Holmstr{\"o}m Olsson}, {and} \bibinfo{person}{Jan
  Bosch}.} \bibinfo{year}{2016}\natexlab{}.
\newblock \showarticletitle{The Lack of Sharing of Customer Data in Large
  Software Organizations: Challenges and Implications}. In
  \bibinfo{booktitle}{\emph{Agile Processes, in Software Engineering, and
  Extreme Programming}}, \bibfield{editor}{\bibinfo{person}{Helen Sharp} {and}
  \bibinfo{person}{Tracy Hall}} (Eds.). \bibinfo{publisher}{Springer
  International Publishing}, \bibinfo{address}{Cham}, \bibinfo{pages}{39--52}.
\newblock
\showISBNx{978-3-319-33515-5}


\bibitem[Fan et~al\mbox{.}(2021)]%
        {fan2021jointly}
\bibfield{author}{\bibinfo{person}{Wenqi Fan}, \bibinfo{person}{Wei Jin},
  \bibinfo{person}{Xiaorui Liu}, \bibinfo{person}{Han Xu},
  \bibinfo{person}{Xianfeng Tang}, \bibinfo{person}{Suhang Wang},
  \bibinfo{person}{Qing Li}, \bibinfo{person}{Jiliang Tang},
  \bibinfo{person}{Jianping Wang}, {and} \bibinfo{person}{Charu Aggarwal}.}
  \bibinfo{year}{2021}\natexlab{}.
\newblock \showarticletitle{Jointly Attacking Graph Neural Network and its
  Explanations}.
\newblock \bibinfo{journal}{\emph{arXiv preprint arXiv:2108.03388}}
  (\bibinfo{year}{2021}).
\newblock


\bibitem[Fan et~al\mbox{.}(2019)]%
        {fan2019graph}
\bibfield{author}{\bibinfo{person}{Wenqi Fan}, \bibinfo{person}{Yao Ma},
  \bibinfo{person}{Qing Li}, \bibinfo{person}{Yuan He}, \bibinfo{person}{Eric
  Zhao}, \bibinfo{person}{Jiliang Tang}, {and} \bibinfo{person}{Dawei Yin}.}
  \bibinfo{year}{2019}\natexlab{}.
\newblock \showarticletitle{Graph neural networks for social recommendation}.
  In \bibinfo{booktitle}{\emph{The World Wide Web Conference}}.
  \bibinfo{pages}{417--426}.
\newblock


\bibitem[Fatemi et~al\mbox{.}(2021)]%
        {fatemi2021slaps}
\bibfield{author}{\bibinfo{person}{Bahare Fatemi}, \bibinfo{person}{Layla
  El~Asri}, {and} \bibinfo{person}{Seyed~Mehran Kazemi}.}
  \bibinfo{year}{2021}\natexlab{}.
\newblock \showarticletitle{SLAPS: Self-Supervision Improves Structure Learning
  for Graph Neural Networks}.
\newblock \bibinfo{journal}{\emph{Advances in Neural Information Processing
  Systems}}  \bibinfo{volume}{34} (\bibinfo{year}{2021}).
\newblock


\bibitem[Fey and Lenssen(2019)]%
        {fey2019fast}
\bibfield{author}{\bibinfo{person}{Matthias Fey} {and}
  \bibinfo{person}{Jan~Eric Lenssen}.} \bibinfo{year}{2019}\natexlab{}.
\newblock \showarticletitle{Fast graph representation learning with PyTorch
  Geometric}.
\newblock \bibinfo{journal}{\emph{arXiv preprint arXiv:1903.02428}}
  (\bibinfo{year}{2019}).
\newblock


\bibitem[Franceschi et~al\mbox{.}(2019)]%
        {franceschi2019learning}
\bibfield{author}{\bibinfo{person}{Luca Franceschi}, \bibinfo{person}{Mathias
  Niepert}, \bibinfo{person}{Massimiliano Pontil}, {and} \bibinfo{person}{Xiao
  He}.} \bibinfo{year}{2019}\natexlab{}.
\newblock \showarticletitle{Learning discrete structures for graph neural
  networks}. In \bibinfo{booktitle}{\emph{International conference on machine
  learning}}. PMLR, \bibinfo{pages}{1972--1982}.
\newblock


\bibitem[Funke et~al\mbox{.}(2022)]%
        {funke2021zorro}
\bibfield{author}{\bibinfo{person}{Thorben Funke}, \bibinfo{person}{Megha
  Khosla}, \bibinfo{person}{Mandeep Rathee}, {and} \bibinfo{person}{Avishek
  Anand}.} \bibinfo{year}{2022}\natexlab{}.
\newblock \showarticletitle{ZORRO: Valid, Sparse, and Stable Explanations in
  Graph Neural Networks}.
\newblock \bibinfo{journal}{\emph{IEEE Transactions on Knowledge and Data
  Engineering}} (\bibinfo{year}{2022}).
\newblock


\bibitem[Goodman and Flaxman(2017)]%
        {goodman2017european}
\bibfield{author}{\bibinfo{person}{Bryce Goodman} {and} \bibinfo{person}{Seth
  Flaxman}.} \bibinfo{year}{2017}\natexlab{}.
\newblock \showarticletitle{European Union regulations on algorithmic
  decision-making and a “right to explanation”}.
\newblock \bibinfo{journal}{\emph{AI magazine}} \bibinfo{volume}{38},
  \bibinfo{number}{3} (\bibinfo{year}{2017}), \bibinfo{pages}{50--57}.
\newblock


\bibitem[Greenstein(2020)]%
        {greenstein_2020}
\bibfield{author}{\bibinfo{person}{Danielle Greenstein}.}
  \bibinfo{year}{2020}\natexlab{}.
\newblock \bibinfo{title}{Why you should share data with other in your
  organization}.
\newblock
\newblock
\urldef\tempurl%
\url{https://www.lotame.com/sharing-data-within-organization/}
\showURL{%
\tempurl}


\bibitem[Hamilton et~al\mbox{.}(2017)]%
        {hamilton2017inductive}
\bibfield{author}{\bibinfo{person}{William~L. Hamilton}, \bibinfo{person}{Rex
  Ying}, {and} \bibinfo{person}{Jure Leskovec}.}
  \bibinfo{year}{2017}\natexlab{}.
\newblock \showarticletitle{Inductive Representation Learning on Large Graphs}.
  In \bibinfo{booktitle}{\emph{NIPS}}.
\newblock


\bibitem[He et~al\mbox{.}(2021)]%
        {he2021stealing}
\bibfield{author}{\bibinfo{person}{Xinlei He}, \bibinfo{person}{Jinyuan Jia},
  \bibinfo{person}{Michael Backes}, \bibinfo{person}{Neil~Zhenqiang Gong},
  {and} \bibinfo{person}{Yang Zhang}.} \bibinfo{year}{2021}\natexlab{}.
\newblock \showarticletitle{Stealing links from graph neural networks}. In
  \bibinfo{booktitle}{\emph{30th USENIX Security Symposium (USENIX Security
  21)}}. \bibinfo{pages}{2669--2686}.
\newblock


\bibitem[Huang et~al\mbox{.}(2020)]%
        {huang2020graphlime}
\bibfield{author}{\bibinfo{person}{Qiang Huang}, \bibinfo{person}{Makoto
  Yamada}, \bibinfo{person}{Yuan Tian}, \bibinfo{person}{Dinesh Singh},
  \bibinfo{person}{Dawei Yin}, {and} \bibinfo{person}{Yi Chang}.}
  \bibinfo{year}{2020}\natexlab{}.
\newblock \showarticletitle{Graphlime: Local interpretable model explanations
  for graph neural networks}.
\newblock \bibinfo{journal}{\emph{arXiv:2001.06216}} (\bibinfo{year}{2020}).
\newblock


\bibitem[Kairouz et~al\mbox{.}(2016)]%
        {kairouz2016discrete}
\bibfield{author}{\bibinfo{person}{Peter Kairouz}, \bibinfo{person}{Keith
  Bonawitz}, {and} \bibinfo{person}{Daniel Ramage}.}
  \bibinfo{year}{2016}\natexlab{}.
\newblock \showarticletitle{Discrete distribution estimation under local
  privacy}. In \bibinfo{booktitle}{\emph{International Conference on Machine
  Learning}}. PMLR, \bibinfo{pages}{2436--2444}.
\newblock


\bibitem[Kipf and Welling(2017)]%
        {kipf2017semi}
\bibfield{author}{\bibinfo{person}{Thomas~N. Kipf} {and} \bibinfo{person}{Max
  Welling}.} \bibinfo{year}{2017}\natexlab{}.
\newblock \showarticletitle{Semi-Supervised Classification with Graph
  Convolutional Networks}. In \bibinfo{booktitle}{\emph{International
  Conference on Learning Representations (ICLR)}}.
\newblock


\bibitem[Kumar et~al\mbox{.}(2016)]%
        {kumar2016edge}
\bibfield{author}{\bibinfo{person}{Srijan Kumar}, \bibinfo{person}{Francesca
  Spezzano}, \bibinfo{person}{VS Subrahmanian}, {and} \bibinfo{person}{Christos
  Faloutsos}.} \bibinfo{year}{2016}\natexlab{}.
\newblock \showarticletitle{Edge weight prediction in weighted signed
  networks}. In \bibinfo{booktitle}{\emph{2016 IEEE 16th International
  Conference on Data Mining (ICDM)}}. IEEE, \bibinfo{pages}{221--230}.
\newblock


\bibitem[Mikolov et~al\mbox{.}(2013)]%
        {mikolov2013efficient}
\bibfield{author}{\bibinfo{person}{Tomas Mikolov}, \bibinfo{person}{Kai Chen},
  \bibinfo{person}{Greg Corrado}, {and} \bibinfo{person}{Jeffrey Dean}.}
  \bibinfo{year}{2013}\natexlab{}.
\newblock \showarticletitle{Efficient estimation of word representations in
  vector space}.
\newblock \bibinfo{journal}{\emph{arXiv preprint arXiv:1301.3781}}
  (\bibinfo{year}{2013}).
\newblock


\bibitem[Olatunji et~al\mbox{.}(2021a)]%
        {olatunji2021releasing}
\bibfield{author}{\bibinfo{person}{Iyiola~E Olatunji}, \bibinfo{person}{Thorben
  Funke}, {and} \bibinfo{person}{Megha Khosla}.}
  \bibinfo{year}{2021}\natexlab{a}.
\newblock \showarticletitle{Releasing Graph Neural Networks with Differential
  Privacy Guarantees}.
\newblock \bibinfo{journal}{\emph{arXiv preprint arXiv:2109.08907}}
  (\bibinfo{year}{2021}).
\newblock


\bibitem[Olatunji et~al\mbox{.}(2021b)]%
        {olatunji2021membership}
\bibfield{author}{\bibinfo{person}{Iyiola~E Olatunji},
  \bibinfo{person}{Wolfgang Nejdl}, {and} \bibinfo{person}{Megha Khosla}.}
  \bibinfo{year}{2021}\natexlab{b}.
\newblock \showarticletitle{Membership Inference Attack on Graph Neural
  Networks}. In \bibinfo{booktitle}{\emph{IEEE International Conference on
  Trust, Privacy and Security in Intelligent Systems and Applications, TPS-ISA
  2021}}.
\newblock


\bibitem[Parliament and of~the European~Union(2016)]%
        {officialgrpr2016}
\bibfield{author}{\bibinfo{person}{European Parliament} {and}
  \bibinfo{person}{Council of~the European~Union}.}
  \bibinfo{year}{2016}\natexlab{}.
\newblock \showarticletitle{General Data Protection Regulation}.
\newblock \bibinfo{journal}{\emph{Official Journal of the European Union}}
  (\bibinfo{year}{2016}).
\newblock
\urldef\tempurl%
\url{https://eur-lex.europa.eu/legal-content/EN/ALL/?uri=CELEX:32016R0679}
\showURL{%
\tempurl}


\bibitem[Pope et~al\mbox{.}(2019)]%
        {pope2019explainability}
\bibfield{author}{\bibinfo{person}{Phillip~E Pope}, \bibinfo{person}{Soheil
  Kolouri}, \bibinfo{person}{Mohammad Rostami}, \bibinfo{person}{Charles~E
  Martin}, {and} \bibinfo{person}{Heiko Hoffmann}.}
  \bibinfo{year}{2019}\natexlab{}.
\newblock \showarticletitle{Explainability methods for graph convolutional
  neural networks}. In \bibinfo{booktitle}{\emph{Proceedings of the IEEE/CVF
  Conference on Computer Vision and Pattern Recognition}}.
  \bibinfo{pages}{10772--10781}.
\newblock


\bibitem[Sajadmanesh and Gatica-Perez(2020)]%
        {sajadmanesh2020locally}
\bibfield{author}{\bibinfo{person}{Sina Sajadmanesh} {and}
  \bibinfo{person}{Daniel Gatica-Perez}.} \bibinfo{year}{2020}\natexlab{}.
\newblock \showarticletitle{Locally Private Graph Neural Networks}.
\newblock \bibinfo{journal}{\emph{arXiv preprint arXiv:2006.05535}}
  (\bibinfo{year}{2020}).
\newblock


\bibitem[Selbst and Powles(2018)]%
        {selbst2018meaningful}
\bibfield{author}{\bibinfo{person}{Andrew Selbst} {and} \bibinfo{person}{Julia
  Powles}.} \bibinfo{year}{2018}\natexlab{}.
\newblock \showarticletitle{“Meaningful Information” and the Right to
  Explanation}. In \bibinfo{booktitle}{\emph{Conference on Fairness,
  Accountability and Transparency}}. PMLR, \bibinfo{pages}{48--48}.
\newblock


\bibitem[Selvaraju et~al\mbox{.}(2017)]%
        {selvaraju2017grad}
\bibfield{author}{\bibinfo{person}{Ramprasaath~R Selvaraju},
  \bibinfo{person}{Michael Cogswell}, \bibinfo{person}{Abhishek Das},
  \bibinfo{person}{Ramakrishna Vedantam}, \bibinfo{person}{Devi Parikh}, {and}
  \bibinfo{person}{Dhruv Batra}.} \bibinfo{year}{2017}\natexlab{}.
\newblock \showarticletitle{Grad-cam: Visual explanations from deep networks
  via gradient-based localization}. In \bibinfo{booktitle}{\emph{Proceedings of
  the IEEE international conference on computer vision}}.
  \bibinfo{pages}{618--626}.
\newblock


\bibitem[Sen et~al\mbox{.}(2008)]%
        {sen2008collective}
\bibfield{author}{\bibinfo{person}{Prithviraj Sen}, \bibinfo{person}{Galileo
  Namata}, \bibinfo{person}{Mustafa Bilgic}, \bibinfo{person}{Lise Getoor},
  \bibinfo{person}{Brian Galligher}, {and} \bibinfo{person}{Tina Eliassi-Rad}.}
  \bibinfo{year}{2008}\natexlab{}.
\newblock \showarticletitle{Collective classification in network data}.
\newblock \bibinfo{journal}{\emph{AI magazine}} \bibinfo{volume}{29},
  \bibinfo{number}{3} (\bibinfo{year}{2008}), \bibinfo{pages}{93--93}.
\newblock


\bibitem[Shokri et~al\mbox{.}(2021)]%
        {shokri2021privacy}
\bibfield{author}{\bibinfo{person}{Reza Shokri}, \bibinfo{person}{Martin
  Strobel}, {and} \bibinfo{person}{Yair Zick}.}
  \bibinfo{year}{2021}\natexlab{}.
\newblock \showarticletitle{On the privacy risks of model explanations}. In
  \bibinfo{booktitle}{\emph{Proceedings of the 2021 AAAI/ACM Conference on AI,
  Ethics, and Society}}. \bibinfo{pages}{231--241}.
\newblock


\bibitem[Simonyan et~al\mbox{.}(2013)]%
        {imageclasssimonyan2013deep}
\bibfield{author}{\bibinfo{person}{Karen Simonyan}, \bibinfo{person}{Andrea
  Vedaldi}, {and} \bibinfo{person}{Andrew Zisserman}.}
  \bibinfo{year}{2013}\natexlab{}.
\newblock \showarticletitle{Deep inside convolutional networks: Visualising
  image classification models and saliency maps}.
\newblock \bibinfo{journal}{\emph{arXiv:1312.6034}} (\bibinfo{year}{2013}).
\newblock


\bibitem[Sundararajan et~al\mbox{.}(2017)]%
        {sundararajan2017:integratedgrad}
\bibfield{author}{\bibinfo{person}{Mukund Sundararajan}, \bibinfo{person}{Ankur
  Taly}, {and} \bibinfo{person}{Qiqi Yan}.} \bibinfo{year}{2017}\natexlab{}.
\newblock \showarticletitle{Axiomatic attribution for deep networks}. In
  \bibinfo{booktitle}{\emph{PMLR}}.
\newblock


\bibitem[Veli{\v{c}}kovi{\'{c}} et~al\mbox{.}(2018)]%
        {velickovic2018graph}
\bibfield{author}{\bibinfo{person}{Petar Veli{\v{c}}kovi{\'{c}}},
  \bibinfo{person}{Guillem Cucurull}, \bibinfo{person}{Arantxa Casanova},
  \bibinfo{person}{Adriana Romero}, \bibinfo{person}{Pietro Li{\`{o}}}, {and}
  \bibinfo{person}{Yoshua Bengio}.} \bibinfo{year}{2018}\natexlab{}.
\newblock \showarticletitle{{Graph Attention Networks}}.
\newblock \bibinfo{journal}{\emph{International Conference on Learning
  Representations}} (\bibinfo{year}{2018}).
\newblock


\bibitem[Vu and Thai(2020)]%
        {vu2020pgm}
\bibfield{author}{\bibinfo{person}{Minh Vu} {and} \bibinfo{person}{My~T Thai}.}
  \bibinfo{year}{2020}\natexlab{}.
\newblock \showarticletitle{Pgm-explainer: Probabilistic graphical model
  explanations for graph neural networks}.
\newblock \bibinfo{journal}{\emph{Advances in neural information processing
  systems}}  \bibinfo{volume}{33} (\bibinfo{year}{2020}),
  \bibinfo{pages}{12225--12235}.
\newblock


\bibitem[Wang and Gong(2019)]%
        {wang2019attacking}
\bibfield{author}{\bibinfo{person}{Binghui Wang} {and}
  \bibinfo{person}{Neil~Zhenqiang Gong}.} \bibinfo{year}{2019}\natexlab{}.
\newblock \showarticletitle{Attacking graph-based classification via
  manipulating the graph structure}. In \bibinfo{booktitle}{\emph{Proceedings
  of the 2019 ACM SIGSAC Conference on Computer and Communications Security}}.
  \bibinfo{pages}{2023--2040}.
\newblock


\bibitem[Wang et~al\mbox{.}(2016)]%
        {wang2016using}
\bibfield{author}{\bibinfo{person}{Yue Wang}, \bibinfo{person}{Xintao Wu},
  {and} \bibinfo{person}{Donghui Hu}.} \bibinfo{year}{2016}\natexlab{}.
\newblock \showarticletitle{Using Randomized Response for Differential Privacy
  Preserving Data Collection.}. In \bibinfo{booktitle}{\emph{EDBT/ICDT
  Workshops}}, Vol.~\bibinfo{volume}{1558}. \bibinfo{pages}{0090--6778}.
\newblock


\bibitem[Wu et~al\mbox{.}(2020)]%
        {wu2020model}
\bibfield{author}{\bibinfo{person}{Bang Wu}, \bibinfo{person}{Xiangwen Yang},
  \bibinfo{person}{Shirui Pan}, {and} \bibinfo{person}{Xingliang Yuan}.}
  \bibinfo{year}{2020}\natexlab{}.
\newblock \showarticletitle{Model Extraction Attacks on Graph Neural Networks:
  Taxonomy and Realization}.
\newblock \bibinfo{journal}{\emph{arXiv preprint arXiv:2010.12751}}
  (\bibinfo{year}{2020}).
\newblock


\bibitem[Wu et~al\mbox{.}(2021)]%
        {wu2021linkteller}
\bibfield{author}{\bibinfo{person}{Fan Wu}, \bibinfo{person}{Yunhui Long},
  \bibinfo{person}{Ce Zhang}, {and} \bibinfo{person}{Bo Li}.}
  \bibinfo{year}{2021}\natexlab{}.
\newblock \showarticletitle{LinkTeller: Recovering Private Edges from Graph
  Neural Networks via Influence Analysis}.
\newblock \bibinfo{journal}{\emph{arXiv preprint arXiv:2108.06504}}
  (\bibinfo{year}{2021}).
\newblock


\bibitem[Wu et~al\mbox{.}(2019)]%
        {wu2019adversarial}
\bibfield{author}{\bibinfo{person}{Huijun Wu}, \bibinfo{person}{Chen Wang},
  \bibinfo{person}{Yuriy Tyshetskiy}, \bibinfo{person}{Andrew Docherty},
  \bibinfo{person}{Kai Lu}, {and} \bibinfo{person}{Liming Zhu}.}
  \bibinfo{year}{2019}\natexlab{}.
\newblock \showarticletitle{Adversarial examples on graph data: Deep insights
  into attack and defense}.
\newblock \bibinfo{journal}{\emph{arXiv preprint arXiv:1903.01610}}
  (\bibinfo{year}{2019}).
\newblock


\bibitem[Yamada et~al\mbox{.}(2014)]%
        {yamada2014high}
\bibfield{author}{\bibinfo{person}{Makoto Yamada}, \bibinfo{person}{Wittawat
  Jitkrittum}, \bibinfo{person}{Leonid Sigal}, \bibinfo{person}{Eric~P Xing},
  {and} \bibinfo{person}{Masashi Sugiyama}.} \bibinfo{year}{2014}\natexlab{}.
\newblock \showarticletitle{High-dimensional feature selection by feature-wise
  kernelized lasso}.
\newblock \bibinfo{journal}{\emph{Neural computation}} \bibinfo{volume}{26},
  \bibinfo{number}{1} (\bibinfo{year}{2014}), \bibinfo{pages}{185--207}.
\newblock


\bibitem[Yang and Maxwell(2011)]%
        {yang2011information}
\bibfield{author}{\bibinfo{person}{Tung-Mou Yang} {and}
  \bibinfo{person}{Terrence~A Maxwell}.} \bibinfo{year}{2011}\natexlab{}.
\newblock \showarticletitle{Information-sharing in public organizations: A
  literature review of interpersonal, intra-organizational and
  inter-organizational success factors}.
\newblock \bibinfo{journal}{\emph{Government information quarterly}}
  \bibinfo{volume}{28}, \bibinfo{number}{2} (\bibinfo{year}{2011}),
  \bibinfo{pages}{164--175}.
\newblock


\bibitem[Ying et~al\mbox{.}(2019)]%
        {ying2019:gnnexplainer}
\bibfield{author}{\bibinfo{person}{Rex Ying}, \bibinfo{person}{Dylan
  Bourgeois}, \bibinfo{person}{Jiaxuan You}, \bibinfo{person}{Marinka Zitnik},
  {and} \bibinfo{person}{Jure Leskovec}.} \bibinfo{year}{2019}\natexlab{}.
\newblock \showarticletitle{GNN Explainer: A tool for post-hoc explanation of
  graph neural networks}.
\newblock \bibinfo{journal}{\emph{arXiv:1903.03894}} (\bibinfo{year}{2019}).
\newblock


\bibitem[Zhang et~al\mbox{.}(2020)]%
        {zhang2020backdoor}
\bibfield{author}{\bibinfo{person}{Zaixi Zhang}, \bibinfo{person}{Jinyuan Jia},
  \bibinfo{person}{Binghui Wang}, {and} \bibinfo{person}{Neil~Zhenqiang Gong}.}
  \bibinfo{year}{2020}\natexlab{}.
\newblock \showarticletitle{Backdoor attacks to graph neural networks}.
\newblock \bibinfo{journal}{\emph{arXiv preprint arXiv:2006.11165}}
  (\bibinfo{year}{2020}).
\newblock


\bibitem[Zhang et~al\mbox{.}(2021)]%
        {zhanggraphmi}
\bibfield{author}{\bibinfo{person}{Zaixi Zhang}, \bibinfo{person}{Qi Liu},
  \bibinfo{person}{Zhenya Huang}, \bibinfo{person}{Hao Wang},
  \bibinfo{person}{Chengqiang Lu}, \bibinfo{person}{Chuanren Liu}, {and}
  \bibinfo{person}{Enhong Chen}.} \bibinfo{year}{2021}\natexlab{}.
\newblock \showarticletitle{GraphMI: Extracting Private Graph Data from Graph
  Neural Networks}. In \bibinfo{booktitle}{\emph{Proceedings of the Thirtieth
  International Joint Conference on Artificial Intelligence, {IJCAI-21}}},
  \bibfield{editor}{\bibinfo{person}{Zhi-Hua Zhou}} (Ed.).
  \bibinfo{pages}{3749--3755}.
\newblock


\bibitem[Z{\"u}gner et~al\mbox{.}(2018)]%
        {zugner2018adversarial}
\bibfield{author}{\bibinfo{person}{Daniel Z{\"u}gner}, \bibinfo{person}{Amir
  Akbarnejad}, {and} \bibinfo{person}{Stephan G{\"u}nnemann}.}
  \bibinfo{year}{2018}\natexlab{}.
\newblock \showarticletitle{Adversarial attacks on neural networks for graph
  data}. In \bibinfo{booktitle}{\emph{Proceedings of the 24th ACM SIGKDD
  International Conference on Knowledge Discovery \& Data Mining}}.
  \bibinfo{pages}{2847--2856}.
\newblock


\bibitem[Z{\"u}gner and G{\"u}nnemann(2019)]%
        {zugner2019adversarial}
\bibfield{author}{\bibinfo{person}{Daniel Z{\"u}gner} {and}
  \bibinfo{person}{Stephan G{\"u}nnemann}.} \bibinfo{year}{2019}\natexlab{}.
\newblock \showarticletitle{Adversarial attacks on graph neural networks via
  meta learning}.
\newblock \bibinfo{journal}{\emph{arXiv preprint arXiv:1902.08412}}
  (\bibinfo{year}{2019}).
\newblock


\end{thebibliography}

\appendix

\section*{Appendix}
\mpara{Organization.} The Appendix is organized as follows. We first present related works in private graph extraction attacks and other attacks on GNN models. Next, we elaborated on the data sampling used for evaluation in Section \ref{sec:data-sampling-elaborate} followed by the results for defense on the \coraml dataset in Section \ref{sec:defense-coraml}. The performance and the corresponding  standard deviation of our main results are provided in Section \ref{sec:main-result-std}. In Section \ref{sec:additional-experiments}, we present the results of the additional experiments on the \cseer, \pubmed, and \credit datasets. 
Finally, we list the hyperparameters used for our attacks and target model in Section \ref{sec:hyperparam}.

\section{Related Works}
\paragraph{\textbf{Private Graph Extraction Attacks}}
Given a black-box access to a GNN model that is trained on a target dataset and the adversary’s background knowledge, \citet{he2021stealing} proposed link stealing attacks to infer whether there is a link between a given pair of nodes in the target dataset. Their attacks are specific to the adversary's background knowledge which range from simply exploiting the node feature similarities to a shadow model-based attack. \citet{wu2021linkteller} proposed an edge re-identification attack for vertically partitioned graph learning. Their attack setting is different from ours and is applicable in scenarios where the high-dimensional features and high-order adjacency information are usually heterogeneous and held by different data holders. GraphMI \cite{zhanggraphmi} aims at reconstructing the adjacency matrix of the target graph given white-box access to the trained model, node features, and labels. 

\paragraph{\textbf{Other Inference Attacks and Defenses on GNNs}}
Several other attacks, such as membership inference \cite{olatunji2021membership, duddu2020quantifying} and model extraction attacks \cite{wu2020model} have been proposed to quantify privacy leakage in GNNs.
In a membership inference attack, the goal of the attacker is to infer whether a node was part of the data used in training the GNN model via black-box access to the trained model. In a model extraction attack, the attacker aims to steal the trained model's parameter and hyperparameters to duplicate or mimic the functionality of the target model via the predictions returned from querying the model\cite{wu2020model}. Recently, several defenses against these attacks have been proposed, which are mainly based on differential privacy. \citet{olatunji2021releasing} proposed a method for releasing GNN models by combining a knowledge-distillation framework with two noise mechanisms, random subsampling and noisy labeling. Their centralized setting approach trains a student model using a public graph and private labels obtained from a teacher model trained exclusively for each query node (personalized teacher models). Their method, by design, defends against membership inference attacks and model extraction attacks since only the student model (which has limited and perturbed information) is released. \citet{sajadmanesh2020locally} proposed a locally differentially private GNN model by considering a distributed setting where nodes and labels are private, and the graph structure is known to the central server. Their approach perturbs both the node features and labels to ensure a differential privacy guarantee. However, all attacks and defenses are not applicable to explanations.

\paragraph{\textbf{Membership Inference Attack and Explanations}}
On eucli\-dean data such as images, \citet{shokri2021privacy} analyzed the privacy risks of feature-based model explanations using membership inference attacks which  quantifies the extent to which model predictions and their explanations leak information about the presence of a datapoint in the training set of a model. We emphasize that the goal of \citet{shokri2021privacy} differs from ours in that we focus on reconstructing the entire graph structure from feature-based explanations. Also, their investigations are limited to non-graph data and the corresponding target and explanation models.

\paragraph{\textbf{Adversarial Attacks and GNNs.}}
Another line of research focuses on the vulnerability of GNNs to adversarial attacks \cite{wu2019adversarial, zhang2020backdoor, zugner2018adversarial, zugner2019adversarial, dai2018adversarial, wang2019attacking}. The goal of the attacker is to fool the GNN model into making a wrong prediction by manipulating node features or the structural information of nodes. 
A recent work \cite{fan2021jointly} used explanation method such as GNNExplainer as a method for detecting adversarial perturbation on graphs. Hence, acting as a tool for inspecting adversarial attacks on GNN models. They further proposed an adversarial attack framework (GEAttack) that exploits the vulnerabilities of explanation methods and the GNN model. This allows the attacker to simultaneously fool the GNN model and misguide the inspection from the explanation method. Our work differs significantly from this work in that first, we aim to reconstruct the graph from the explanations and, secondly, to quantify the privacy leakage of explanations on GNN models.

\section{Data sampling for evaluation}
\label{sec:data-sampling-elaborate}
In this section, we elaborate on the data sampling method used for our evaluation as briefly explained in Section \ref{sec:eval_metrics}. 
To generate a test set, we randomly select $10\%$ of all nodes. We use the set of all edges which are incident on at least one of the nodes in our selected set. We then sample an equal number of negative edges (pairs of nodes such that no edge exists between them; one of the nodes in this pair is from the selected set of $10\%$ nodes).
In total, we generate 10 random test sets.

We employ 10 random instantiations of the attack models.
Corresponding to each instantiation of the attack model, we test the model with one of our random test sets. In total, we have $10$ observations of the final scores. We evaluated all methods on the same test sets. As we wanted to keep the number of observations the same for all compared methods, including the non-parameterized methods based on feature and explanation similarity (note that they do not use a neural model, and there is no initialization required in their cases), we tested each attack model (corresponding to each initialization) on one test set. We computed the average precision (AP) and AUROC result on the balanced pairs. For the partial explanation experiment, we randomly sample 30\% of the nodes only once to generate a fixed subgraph. Evaluation is performed as in the previous case. Here, the $10$ test sets are generated from this subgraph.

We remark that using AUC and AP scores allows us to evaluate all methods across all ranges of cutoff decision thresholds. Our evaluation is domain and dataset-independent. Nevertheless, in real scenarios, an attacker might need to choose a single decision threshold for which domain-specific knowledge or an auxiliary validation dataset might be required.

\section{Results for defense on \coraml}
\label{sec:defense-coraml}
The attack performance for different values of $\epsilon$ is plotted in Figure  \ref{fig:zorro-defense-coraml}. For the lowest privacy budget, we observe that the attack is reduced to a random guess (with an AUC score close to 0.55). The variation in explanation utility and intersection with true explanation is shown in Table \ref{tab:defense-fidelity-sparsity-coraml}. Here also, the perturbed explanation is able to retain around 75\% of the 1 bit of the true explanation. While there is a drop in fidelity and the explanation becomes denser, we note that the perturbed explanation still shows higher fidelity and sparsity than other explanation methods (c.f. Table \ref{tab:rdt-fidelity-sparsity}).

\begin{figure}[h!!]
    \centering
    \includegraphics[width=0.47\textwidth]{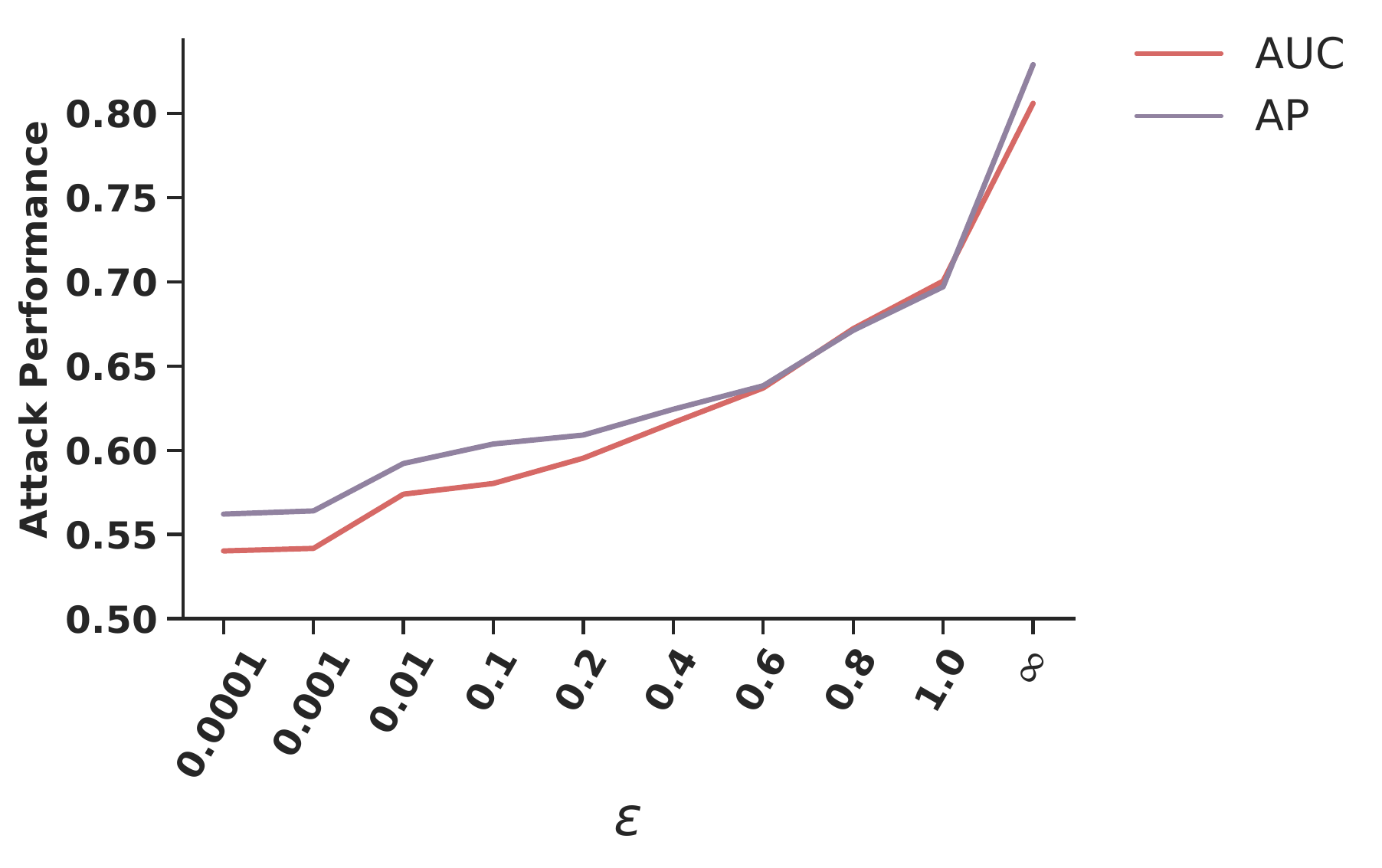}
    \caption{Privacy budget and corresponding attack performance of \exoattrsim for \zorro explanation on the \coraml dataset. $\infty$ implies no privacy.}
    \label{fig:zorro-defense-coraml}
\end{figure}

\begin{table}[h!]
\caption{Fidelity, sparsity and percentage of 1 bits in the true explanation that is retained in the perturbed explanation (intersection) after defense for different $\epsilon$ on the \coraml dataset for \zorro explanation. $\infty$ implies no privacy.}
\label{tab:defense-fidelity-sparsity-coraml}
\begin{tabular}{lccc} \toprule
\textbf{$\epsilon$} & \textbf{Fidelity} & \textbf{Sparsity} & \textbf{Intersection}\\ \midrule
\textbf{0.0001}                   & 0.86        & 4.53     & 74.96                        \\
\textbf{0.001}              & 0.86          & 4.53    & 74.98                      \\
\textbf{0.01}                  & 0.86           & 4.53 & 75.12                             \\
\textbf{0.1}                & 0.87           & 4.46    & 75.08                           \\
\textbf{0.2}              & 0.87         & 4.39    & 75.20                      \\
\textbf{0.4}              & 0.87         & 4.24    & 75.70                         \\
\textbf{0.6}           & 0.87          & 4.08      & 77.07                      
\\ 
\textbf{0.8}           & 0.88          & 3.95    & 78.70                 
\\ 
\textbf{1}           & 0.89          & 3.81    & 80.75                      
\\ 
$\infty$           & 0.96          & 3.33      & 100 \\ \bottomrule
\end{tabular}
\end{table}

\section{Detailed Results }
\label{sec:main-result-std}
In Table \ref{tab:main-result-std}, we present the mean and standard deviation of the results in Table \ref{tab:main-result}. Since our results have a low standard deviation across the 10 runs of the experiment, our method is stable.

\begin{table*}[h!!]
\caption{Results with standard deviation of Table \ref{tab:main-result}. The best performing attack(s) on each explanation method is(are) highlighted in bold, and the second best attack(s) is(are) underlined.}
\label{tab:main-result-std}
\centering
\begin{tabular}{clp{1.4cm}<{\centering}p{1.4cm}<{\centering}p{1.4cm}<{\centering}p{1.4cm}<{\centering}p{1.4cm}<{\centering}p{1.4cm}<{\centering}}
\toprule
                \multicolumn{1}{l}{$Exp$} & 
                \multirow{1}{*}{\textbf{Attack}}& 
                \multicolumn{2}{c}{\textbf{\cora}} & \multicolumn{2}{c}{\textbf{\coraml}} & \multicolumn{2}{c}{\textbf{\bitcoin}}\\ 
                \cmidrule(r){3-4} \cmidrule(r){5-6} \cmidrule(r){7-8} 
                         &  & AUC           & AP            & AUC      & AP & AUC      & AP \\ \midrule

\parbox[t]{2mm}{\multirow{4}{*}{\rotatebox[origin=c]{90}{Baseline}}}
& \attrsim & $0.799 \pm 0.04$	& $0.827 \pm 0.04$ & $0.706 \pm 0.08$	& $0.753 \pm 0.07$ & $0.535 \pm 0.03$	& $0.478 \pm 0.02$\\
& \lsa \cite{he2021stealing} & $ 0.795 \pm 0.03$	& $ 0.810 \pm 0.02$ & $ 0.725 \pm 0.04$	& $0.760 \pm 0.01$ & $0.532 \pm 0.05$	& $0.500 \pm 0.06$\\
& \graphmi \cite{zhanggraphmi} & $0.856 \pm 0.01$	& $0.830 \pm 0.01$ & $0.808 \pm 0.02$	& $0.814 \pm 0.03$ & $0.585 \pm 0.05$	& $ 0.518 \pm 0.06$\\ 
& \gsl \cite{fatemi2021slaps} & $0.736 \pm 0.05$	& $0.776 \pm 0.05$ & $0.649 \pm 0.06$	& $0.702 \pm 0.07$ & $0.597 \pm 0.09$	& $0.577 \pm 0.07$\\ \midrule

\parbox[t]{2mm}{\multirow{5}{*}{\rotatebox[origin=c]{90}{\grad}}} 
& \faac & $0.734 \pm 0.05$	& $0.773 \pm 0.04$      & $0.640 \pm 0.05$	& $0.705 \pm 0.04$ & $0.527 \pm 0.04$	& $0.515 \pm 0.03$\\
& \faaw &$0.678 \pm 0.04$	& $0.737 \pm 0.03$ & $0.666 \pm 0.06$	& $0.730 \pm 0.05$ & $0.264 \pm 0.07$	& $0.383 \pm 0.04$\\
& \exhoney &\underline{$0.948 \pm 0.02$}	& \underline{$0.953 \pm 0.01$} & $\mathbf{0.902 \pm 0.08}$	& \underline{$0.833 \pm 0.07$} & $\mathbf{0.700 \pm 0.05}$	& $\mathbf{0.715 \pm 0.04}$\\
& \exohoneyex &$0.924 \pm 0.03$	& $0.939 \pm 0.02$ & $0.699 \pm 0.07$	& $0.768 \pm 0.05$ & $0.229 \pm 0.03$	& $0.365 \pm 0.02$\\
& \exoattrsim & $\mathbf{0.984 \pm 0.01}$	& $\mathbf{0.978 \pm 0.01}$ & \underline{$0.890 \pm 0.04$} & $\mathbf{0.891 \pm 0.04}$ & \underline{$0.681 \pm 0.03$}	& \underline{$0.644 \pm 0.03$}\\ \midrule

\parbox[t]{2mm}{\multirow{5}{*}{\rotatebox[origin=c]{90}{\gradinput}}} 
& \faac & $0.734 \pm 0.06$          & $0.775 \pm 0.04$          & $0.674 \pm 0.05$	& $0.724 \pm 0.04$ & $0.525 \pm 0.09$	& $0.527 \pm 0.05$\\ 
& \faaw & $0.691 \pm 0.02$ & $0.742 \pm 0.02$ & $0.717 \pm 0.05$ & $0.756 \pm 0.06$ & $0.252 \pm 0.03$	& $0.380 \pm 0.02$\\
& \exhoney & \underline{$0.949 \pm 0.02$}	& \underline{$0.950 \pm 0.02$} & \underline{$0.787 \pm 0.08$}	& \underline{$0.832 \pm 0.07$} & $\mathbf{0.709 \pm 0.04}$	& $\mathbf{0.723 \pm 0.03}$\\
& \exohoneyex &$0.903 \pm 0.04$	& $0.923 \pm 0.04$ &$0.717 \pm 0.08$	& $0.781 \pm 0.06$ & $0.256 \pm 0.03$	& $0.380 \pm 0.02$\\
& \exoattrsim &$\mathbf{0.984 \pm 0.01}$	& $\mathbf{0.979 \pm 0.01}$ & $\mathbf{0.903 \pm 0.04}$	& $\mathbf{0.899 \pm 0.04}$ & \underline{$0.681 \pm 0.03$}	& \underline{$0.644 \pm 0.03$}\\ \midrule

\parbox[t]{2mm}{\multirow{5}{*}{\rotatebox[origin=c]{90}{\zorro}}} 
& \faac & $0.823 \pm 0.04$	& $0.860 \pm 0.05$          & $0.735 \pm 0.02$	& $0.786 \pm 0.01$ & \underline{$0.575 \pm 0.03$} & $0.529 \pm 0.05$\\ 
& \faaw & $0.723 \pm 0.03$	& $0.756 \pm 0.03$ & $0.681 \pm 0.02$	& $0.697 \pm 0.04$ & $0.399 \pm 0.07$ & $0.449 \pm 0.05$\\
& \exhoney & $\mathbf{0.884 \pm 0.03}$	& $\mathbf{0.880 \pm 0.04}$ & \underline{$0.776 \pm 0.03$}	& \underline{$0.820 \pm 0.02$} & $0.537 \pm 0.05$ & \underline{$0.527 \pm 0.04$}\\
& \exohoneyex & $0.779 \pm 0.04$	& $0.810 \pm 0.01$ & $0.722 \pm 0.02$	& $0.777 \pm 0.02$ & $\mathbf{0.596 \pm 0.03}$ & $\mathbf{0.561 \pm 0.03}$\\
& \exoattrsim & \underline{$0.871 \pm 0.02$}	& \underline{$0.873 \pm 0.02$} & $\mathbf{0.806 \pm 0.02}$ & $\mathbf{0.829 \pm 0.03}$ & $0.427 \pm 0.06$ & $0.485 \pm 0.05$\\ \midrule

\parbox[t]{2mm}{\multirow{5}{*}{\rotatebox[origin=c]{90}{\szorro}}} 
& \faac & $0.907 \pm 0.03$          & $0.922 \pm 0.02$          & \underline{$0.747 \pm 0.06$}	& \underline{$0.791 \pm 0.05$} & $\mathbf{0.601 \pm 0.06}$ & $\mathbf{0.590 \pm 0.05}$\\ 
& \faaw & $0.794 \pm 0.06$ & $0.815 \pm 0.06$ & $0.712 \pm 0.06$	& $0.740 \pm 0.06$ & $0.490 \pm 0.08$ & $0.491 \pm 0.05$\\
& \exhoney & $\mathbf{0.918 \pm 0.02}$	& \underline{$0.923 \pm 0.02$} & $\mathbf{0.776 \pm 0.06}$	& $0.819 \pm 0.05$ & \underline{$0.598 \pm 0.03$} & \underline{$0.565 \pm 0.03$}\\
& \exohoneyex & $0.893 \pm 0.04$	& $0.915 \pm 0.02$ & $0.742 \pm 0.06$	& $\mathbf{0.784 \pm 0.05}$ & $0.571 \pm 0.03$ & $0.564 \pm 0.04$\\
& \exoattrsim & \underline{$0.908 \pm 0.03$}	& $\mathbf{0.934 \pm 0.02}$ & $0.732 \pm 0.05$	& $0.787 \pm 0.03$ & $0.484 \pm 0.04$ & $0.496 \pm 0.03$\\ \midrule

\parbox[t]{2mm}{\multirow{5}{*}{\rotatebox[origin=c]{90}{\glime}}} 
& \faac & \underline{$0.643 \pm 0.05$}          & \underline{$0.710 \pm 0.04$}          & \underline{$0.610 \pm 0.05$}	& \underline{$0.652 \pm 0.04$} & \underline{$0.473 \pm 0.07$} & \underline{$0.492 \pm 0.05$}\\ 
& \faaw & $0.516 \pm 0.06$ & $0.522 \pm 0.04$ & $0.517 \pm 0.05$	& $0.528 \pm 0.04$ & $0.264 \pm 0.03$ & $0.371 \pm 0.01$\\
& \exhoney & $\mathbf{0.730 \pm 0.05}$	& $\mathbf{0.774 \pm 0.03}$ & $\mathbf{0.681 \pm 0.05}$	& $\mathbf{0.740 \pm 0.05}$ & $\mathbf{0.542 \pm 0.05}$ & $\mathbf{0.525 \pm 0.03}$\\
& \exohoneyex &$0.558 \pm 0.06$	& $0.571 \pm 0.05$ & $0.540 \pm 0.06$	& $0.555 \pm 0.05$ & $0.236 \pm 0.04$ & $0.361 \pm 0.02$\\
& \exoattrsim &$0.505 \pm 0.04$	& $0.524 \pm 0.04$ & $0.520 \pm 0.04$	& $0.523 \pm 0.04$ & $0.504 \pm 0.05$ & $0.512 \pm 0.03$\\ \midrule

\parbox[t]{2mm}{\multirow{5}{*}{\rotatebox[origin=c]{90}{\gnnexp}}} 
& \faac & $0.614 \pm 0.04$          & $0.650 \pm 0.04$          & $0.653 \pm 0.05$	& $0.705 \pm 0.04$ & $0.467 \pm 0.11$ & $0.489 \pm 0.06$\\ 
& \faaw & \underline{$0.724 \pm 0.05$} & \underline{$0.760 \pm 0.05$} & \underline{$0.637 \pm 0.05$}	& \underline{$0.692 \pm 0.05$} & $0.390 \pm 0.10$ & $0.454 \pm 0.05$\\
& \exhoney & $\mathbf{0.762 \pm 0.04}$	& $\mathbf{0.796 \pm 0.03}$ & $\mathbf{0.700 \pm 0.07}$	& $\mathbf{0.796 \pm 0.06}$ & $\mathbf{0.590 \pm 0.04}$ & $\mathbf{0.563 \pm 0.03}$\\
& \exohoneyex &$0.517 \pm 0.04$	& $0.552 \pm 0.03$ & $0.490 \pm 0.05$	& $0.508 \pm 0.04$ & $0.386 \pm 0.11$ & $0.451 \pm 0.07$\\
& \exoattrsim &$0.537 \pm 0.05$	& $0.541 \pm 0.04$ & $0.484 \pm 0.05$	& $0.508 \pm 0.04$ & \underline{$0.551 \pm 0.04$} & \underline{$0.545 \pm 0.03$}\\ \bottomrule

\end{tabular}
\end{table*}

\section{Additional Experiments}
\label{sec:additional-experiments}
We perform additional experiments on 3 additional datasets to verify our observations and the effectiveness of our attacks. We start by describing the additional datasets. The performance scores are provided in Table \ref{tab:additional-exp}, and results are discussed in the main paper.
\begin{table}[h!!]
\caption{Dataset statistics for the additional datasets. $|V|$ and $|E|$ denotes the number of nodes and edges respectively, $\mathcal{C}$, $\mathbf{X}_d$, and \textbf{deg} denotes the number of classes, size of feature dimension and the average degree of the corresponding graph dataset.}
\label{tab:data-stat-additional}
\begin{tabular}{lccc}
\toprule
& \textbf{\cseer} & \textbf{\pubmed} & \textbf{\credit} \\ \midrule
$|V|$    & 3327     & 19717        & 3000 \\
$|E|$   & 9228      & 88651    & 28854 \\
$\mathbf{X}_d$ & 3703   & 500    & 13 \\
$\mathcal{C}$  & 6                 & 3             & 2 \\
\textbf{deg}  & 2.77                 & 4.50             & 9.62 \\\bottomrule
\end{tabular}
\end{table}
\subsection{Additional Datasets}
\label{sec:additional-datasets}
We used \cseer and \pubmed, which are conventional datasets for node classification tasks and the \credit dataset.
The details about the datasets are in Table \ref{tab:data-stat-additional}. The fidelity and sparsity for the corresponding explanations are provided  in Table \ref{tab:rdt-fidelity-sparsity-additional-experiment}.

\mpara{\pubmed and \cseer.} Both \pubmed and \cseer~\cite{sen2008collective} are citation datasets, where each node is a research article, and there is an edge between two articles if one cites the other. In \cseer, features are binary values like \cora, while in \pubmed, the features are represented by the TF/IDF weighted word vector of the unique words in the dictionary.

\mpara{\credit.} The Credit defaulter graph \cite{agarwal2021towards} is a financial network where an edge exists between individuals (nodes) if there is a similarity in their spending and payment pattern. The task is to determine whether an individual will default on their credit card payment. We used a subset of this credit dataset which has 3000 nodes.

\begin{table}[h!!]
\caption{RDT-Fidelity and sparsity (entropy) of different explanation methods on the additional datasets. For fidelity, the higher the better. For sparsity, the lower the better.}
\label{tab:rdt-fidelity-sparsity-additional-experiment}
\begin{tabular}{lp{0.7cm}<{\centering}p{0.7cm}<{\centering}p{0.7cm}<{\centering}p{0.7cm}<{\centering}p{0.7cm}<{\centering}p{0.7cm}<{\centering}} 
\toprule
$Exp$    & \multicolumn{2}{c}{\cseer} & \multicolumn{2}{c}{\pubmed} & \multicolumn{2}{c}{\credit}  \\ 
\cmidrule(r){2-3} \cmidrule(r){4-5} \cmidrule(r){6-7}
       & \small{Fidelity}    & \small{Sparsity}   & \small{Fidelity}     & \small{Sparsity}    & \small{Fidelity}     & \small{Sparsity}    \\ \midrule
\textbf{\grad}   & 0.24 & 4.15 & 0.44 & 4.38 & 0.55 & 0.94  \\
\textbf{\gradinput} & 0.19 & 4.16 & 0.37 & 4.39  & 0.59 & 0.95\\
\textbf{\szorro} &0.90 &2.62 &0.96 &2.17  &0.94  &1.23  \\
\textbf{\glime} &0.18 &0.65 &0.38 &1.59  &0.60  &0.49 \\
\textbf{\gnnexp} & 0.65 & 8.21 & 0.79 & 6.21 & 0.65 & 2.56 \\ \bottomrule

\end{tabular}
\end{table}

\begin{table*}[h!!]
\caption{Attack performance and baselines for additional datasets (\cseer, \pubmed, and \credit). The best performing attack(s) on each explanation method is(are) highlighted in bold, and the second best attack(s) is(are) underlined.}
\label{tab:additional-exp}
\centering
\begin{tabular}{clp{1.55cm}<{\centering}p{1.55cm}<{\centering}p{1.55cm}<{\centering}p{1.55cm}<{\centering}p{1.55cm}<{\centering}p{1.55cm}<{\centering}}
\toprule
                \multicolumn{1}{l}{$Exp$} & 
                \multirow{1}{*}{\textbf{Attack}}& 
                \multicolumn{2}{c}{\textbf{\cseer}} & \multicolumn{2}{c}{\textbf{\pubmed}} & \multicolumn{2}{c}{\textbf{\credit}} \\ 
                \cmidrule(r){3-4} \cmidrule(r){5-6} \cmidrule(r){7-8}
                         &  & AUC           & AP            & AUC      & AP & AUC      & AP \\ \midrule

\parbox[t]{2mm}{\multirow{4}{*}{\rotatebox[origin=c]{90}{Baseline}}}
& \attrsim & $0.890 \pm 0.04$	&  $0.909 \pm 0.03$& $0.888 \pm 0.01$	& $0.889 \pm 0.01$ & 	$0.720 \pm 0.01$ & $0.753 \pm 0.01$ \\
& \lsa \cite{he2021stealing} & $0.859 \pm 0.03$	&  $0.893 \pm 0.02$& $0.783 \pm 0.05$	& $0.801 \pm 0.04$ & 	$0.655 \pm 0.03$ & $0.671 \pm 0.04$ \\
& \graphmi \cite{zhanggraphmi} & $0.789 \pm 0.02$	&  $0.746 \pm 0.02$& $0.795 \pm 0.01$	& $0.758 \pm 0.02$ & 	$0.510 \pm 0.04$ & $0.569 \pm 0.05$ \\ 
& \gsl \cite{fatemi2021slaps} & $0.834 \pm 0.04$	&  $0.875 \pm 0.03$& $0.579 \pm 0.02$	& $0.626 \pm 0.02$ & 	$0.723 \pm 0.02$ & $0.767 \pm 0.02$ \\ \midrule

\parbox[t]{2mm}{\multirow{5}{*}{\rotatebox[origin=c]{90}{\grad}}} 
& \faac & $ 0.811 \pm 0.03 $	&  $ 0.860 \pm 0.03 $& $ 0.672 \pm 0.02$	& $0.732 \pm 0.01$ & 	$ 0.825 \pm 0.02$ & $ 0.868 \pm 0.02$ \\
& \faaw & $0.748 \pm 0.05$	&  $0.808 \pm 0.04$& $ 0.579 \pm 0.03$	& $ 0.627 \pm 0.03$ & 	$0.820 \pm 0.02$ & $0.861 \pm 0.01$ \\
& \exhoney & \underline{$ 0.969 \pm 0.02$}	&  \underline{$0.971 \pm 0.02$} & \underline{$0.788 \pm 0.02$}	& \underline{$0.856 \pm 0.02$} & 	$\mathbf{0.881 \pm 0.02}$ & $\mathbf{0.918 \pm 0.02}$ \\
& \exohoneyex & $ 0.945 \pm 0.03$	&  $0.955 \pm 0.02$& $0.770 \pm 0.03$	& $0.846 \pm 0.02$ & 	$ 0.861 \pm 0.02$ & $0.899 \pm $ 0.01\\
& \exoattrsim & $ \mathbf{0.991 \pm 0.01}$	&  $\mathbf{0.985 \pm 0.01}$ & $\mathbf{0.987 \pm 0.01}$	& $\mathbf{0.981 \pm 0.01}$ & 	\underline{$0.861 \pm 0.01$} & \underline{$0.867 \pm 0.01$} \\ \midrule

\parbox[t]{2mm}{\multirow{5}{*}{\rotatebox[origin=c]{90}{\gradinput}}} 
& \faac & $0.810 \pm 0.04$	&  $0.857 \pm 0.04$& $0.583 \pm 0.02$	& $0.632 \pm 0.02$ & 	$0.826 \pm 0.02$ & $0.867 \pm 0.02$ \\
& \faaw & $0.745 \pm 0.05$	&  $0.804 \pm 0.04$& $0.574 \pm 0.01$	& $0.625 \pm 0.01$ & 	$0.813 \pm 0.02$ & $0.863 \pm 0.01$ \\
& \exhoney & \underline{$0.970 \pm 0.02$}	&  \underline{$0.974 \pm 0.02$}& \underline{$0.782 \pm 0.02$}	& \underline{$0.853 \pm 0.02$} & 	$\mathbf{0.881 \pm 0.02}$ & $\mathbf{0.918 \pm 0.01}$ \\
& \exohoneyex & $0.947 \pm 0.02$	&  $0.960 \pm 0.02$& $0.740 \pm 0.02$	& $0.836 \pm 0.01$ & 	$0.861 \pm 0.02$ & $0.893 \pm 0.02$ \\
& \exoattrsim & $\mathbf{0.992 \pm 0.001}$	&  $\mathbf{0.986 \pm 0.001}$& $\mathbf{0.988 \pm 0.001}$	& $\mathbf{0.981 \pm 0.001}$ & 	\underline{$0.864 \pm 0.001$} & \underline{$0.868 \pm 0.001$}\\ \midrule

\parbox[t]{2mm}{\multirow{5}{*}{\rotatebox[origin=c]{90}{\szorro}}} 
& \faac & $0.932 \pm 0.02$	&  $0.944 \pm 0.02$& $0.663 \pm 0.02$	& $0.729 \pm 0.01$ & 	\underline{$0.825 \pm 0.03$} & \underline{$0.868 \pm 0.02$} \\
& \faaw & $0.828 \pm 0.06$	&  $0.855 \pm 0.06$& $0.554 \pm 0.03$	& $0.601 \pm 0.02$ & 	$0.828 \pm 0.03$ & $0.868 \pm 0.02$ \\
& \exhoney & $\mathbf{0.953 \pm 0.02}$	&  $\mathbf{0.960 \pm 0.02}$& $\mathbf{0.901 \pm 0.02}$	& $\mathbf{0.914 \pm 0.02}$ & 	$\mathbf{0.822 \pm 0.02}$ & $\mathbf{0.869 \pm 0.02}$ \\
& \exohoneyex & $0.932 \pm 0.03$	&  $0.947 \pm 0.03$& $0.753 \pm 0.02$	& $0.772 \pm 0.02$ & 	$0.785 \pm 0.01$ & $0.840 \pm 0.01$ \\
& \exoattrsim & \underline{$0.950 \pm 0.01$}	&  \underline{$0.957 \pm 0.01$} & \underline{$0.874 \pm 0.02$}	& \underline{$0.904 \pm 0.01$} & 	$0.734 \pm 0.02$ & $0.768 \pm 0.02$ \\ \midrule

\parbox[t]{2mm}{\multirow{5}{*}{\rotatebox[origin=c]{90}{\glime}}} 
& \faac & \underline{$0.757 \pm 0.05$}	&  \underline{$0.801 \pm 0.05$} & $0.614 \pm 0.03$	& $0.641 \pm 0.02$ & 	\underline{$0.825 \pm 0.03$} & \underline{$0.867 \pm 0.02$} \\
& \faaw & $0.567 \pm 0.03$	&  $0.637 \pm 0.04$& $0.507 \pm 0.10$	& $0.509 \pm 0.08$ & 	$0.791 \pm 0.01$ & $0.855 \pm 0.01$ \\
& \exhoney & $\mathbf{0.862 \pm 0.05}$	&  $\mathbf{0.894 \pm 0.04}$ & $\mathbf{0.697 \pm 0.02}$	& $\mathbf{0.712 \pm 0.02}$ & 	$\mathbf{0.845 \pm 0.02}$ & $\mathbf{0.890 \pm 0.01}$ \\
& \exohoneyex & $0.571 \pm 0.04$	&  $0.618 \pm 0.04$& $0.602 \pm 0.03$	& $0.591 \pm 0.03$ & 	$0.798 \pm 0.03$ & $0.843 \pm 0.03$ \\
& \exoattrsim & $0.703 \pm 0.06$	&  $0.705 \pm 0.05$& \underline{$0.661 \pm 0.03$}	& \underline{$0.724 \pm 0.02$} & 	$0.765 \pm 0.03$ & $0.810 \pm 0.02$ \\ \midrule

\parbox[t]{2mm}{\multirow{5}{*}{\rotatebox[origin=c]{90}{\gnnexp}}} 
& \faac & $0.639 \pm 0.04$	&  $0.713 \pm 0.04$& $0.552 \pm 0.03$	& $0.590 \pm 0.03$ & 	$\mathbf{0.826 \pm 0.03}$ & $\mathbf{0.869 \pm 0.02}$ \\
& \faaw & \underline{$0.812 \pm 0.05$}	&  \underline{$0.862 \pm 0.04$} & \underline{$0.570 \pm 0.01$}	& \underline{$0.618 \pm 0.01$} & 	\underline{$0.821 \pm 0.02$} & \underline{$0.865 \pm 0.02$} \\
& \exhoney & $\mathbf{0.847 \pm 0.05}$	&  $\mathbf{0.881 \pm 0.05}$& $\mathbf{0.612 \pm 0.02}$	& $\mathbf{0.648 \pm 0.02}$ & 	$0.817 \pm 0.02$ & $0.869 \pm 0.02$ \\
& \exohoneyex & $0.594 \pm 0.07$	&  $0.640 \pm 0.06$& $0.507 \pm 0.02$	& $0.512 \pm 0.02$ & 	$0.760 \pm 0.04$ & $0.804 \pm 0.05$ \\
& \exoattrsim & $0.618 \pm 0.05$	&  $0.654 \pm 0.05$& $0.495 \pm 0.01$	& $0.499 \pm 0.01$ & 	$0.787 \pm 0.02$ & $0.845 \pm 0.02$ \\ \bottomrule

\end{tabular}
\end{table*}

\subsection{Ensuring the Stability of Results}
To ensure the stability of the results, we perform an additional experiment by slightly modifying the evaluation procedure in Section \ref{sec:data-sampling-elaborate} (hereafter referred to as \textit{10-runs}). Instead of testing the attacks on one of our random test sets for each instantiation which resulted in $10$ observations of the final scores, we modify the evaluation as follows. For each instantiation, we evaluate the attacks on \textit{all} $10$ held out balanced test sets. This results in $100$ observations of the final score (recall that we have $10$ instantiations). We report the mean and the standard deviation. This modification which we call \textit{100-runs}, further mitigates any sampling biases from the test sets and any randomness. Similar to 10-runs, we evaluated all methods on the same test sets. The \exoattrsim attack is excluded since there is no randomness involved, resulting in the same results as in 10-runs.

We observe that in most cases, the difference between all previous experiments, which were evaluated using the 10-runs procedure and the modified 100-runs is $0$. This shows that our experiments are stable and the results are reliable. Although we observe some minor variations in some results, however, such variations are relatively small and do not invalidate our observations or experiments. For example, on the \pubmed dataset, \exohoneyex attack on the \grad explanation for the 10-runs resulted in an AUC of 0.770 while that of 100-runs is 0.767. The results of the 100-runs are in Table \ref{tab:100runs-result-std}.

\begin{table*}[h!!]
\caption{100-runs results with standard deviation. The best performing attack(s) on each explanation method is(are) highlighted in bold, and the second best attack(s) is(are) underlined.}
\label{tab:100runs-result-std}
\centering
\begin{adjustbox}{width=1\textwidth}
\small
\begin{tabular}{clp{1.4cm}<{\centering}p{1.4cm}<{\centering}p{1.4cm}<{\centering}p{1.4cm}<{\centering}p{1.4cm}<{\centering}p{1.4cm}<{\centering}    p{1.4cm}<{\centering}p{1.4cm}<{\centering}p{1.4cm}<{\centering}p{1.4cm}<{\centering}p{1.4cm}<{\centering}p{1.4cm}<{\centering}}
\toprule
                \multicolumn{1}{l}{$Exp$} & 
                \multirow{1}{*}{\textbf{Attack}}& 
                \multicolumn{2}{c}{\textbf{\cora}} & \multicolumn{2}{c}{\textbf{\coraml}} & \multicolumn{2}{c}{\textbf{\bitcoin}} &
                                \multicolumn{2}{c}{\textbf{\cseer}} & \multicolumn{2}{c}{\textbf{\pubmed}} & \multicolumn{2}{c}{\textbf{\credit}}\\ 
                \cmidrule(r){3-4} \cmidrule(r){5-6} \cmidrule(r){7-8}   
                \cmidrule(r){9-10} \cmidrule(r){11-12} \cmidrule(r){13-14} 
                         &  & AUC           & AP            & AUC      & AP & AUC      & AP 
                          & AUC           & AP            & AUC      & AP & AUC      & AP\\ \midrule

\parbox[t]{2mm}{\multirow{4}{*}{\rotatebox[origin=c]{90}{\grad}}}
& \faac & $0.740 \pm 0.05$	& $0.780 \pm 0.04$ & $0.639 \pm 0.07$	& $0.703 \pm 0.05$ & \underline{$0.474 \pm 0.10$}	& \underline{$ 0.503 \pm 0.07$}
& $0.814 \pm 0.04$	& $0.862 \pm 0.03$ & $0.666 \pm 0.03$	& $0.728 \pm 0.02$ & $0.825 \pm 0.03$	& $0.868 \pm 0.02$\\
& \faaw  & $ 0.682 \pm 0.05$	& $ 0.742 \pm 0.04$ & $ 0.667 \pm 0.07$	& $0.727 \pm 0.06$ & $0.283 \pm 0.03$	& $0.392 \pm 0.02$
& $0.750 \pm 0.05$	& $0.806 \pm 0.04$ & $0.576 \pm 0.02$	& $0.627 \pm 0.02$ & $0.820 \pm 0.02$	& $0.861 \pm 0.01$\\
& \exhoney  & $\mathbf{0.949 \pm 0.02}$	& $\mathbf{0.955 \pm 0.01}$ & $\mathbf{0.801 \pm 0.07}$	& $\mathbf{0.836 \pm 0.06}$ & $\mathbf{0.563 \pm 0.15}$	& $\mathbf{0.552 \pm 0.14}$
& $\mathbf{0.969 \pm 0.02}$	& $\mathbf{0.972 \pm 0.02}$ & $\mathbf{0.787 \pm 0.02}$	& $\mathbf{0.855 \pm 0.02}$ & $\mathbf{0.878 \pm 0.02}$	& $\mathbf{0.916 \pm 0.01}$\\ 
& \exohoneyex & \underline{$0.919 \pm 0.04$}	& \underline{$0.935 \pm 0.02$} & \underline{$0.697 \pm 0.07$}	& \underline{$0.767 \pm 0.05$} & $0.265 \pm 0.04$	& $0.380 \pm 0.02$
& \underline{$0.946 \pm 0.03$}	& \underline{$0.955 \pm 0.02$} & \underline{$0.767 \pm 0.03$}	& \underline{$0.845 \pm 0.02$} & \underline{$0.868 \pm 0.02$}	& \underline{$0.902 \pm 0.01$}\\ \midrule

\parbox[t]{2mm}{\multirow{4}{*}{\rotatebox[origin=c]{90}{\gradinput}}}
& \faac & $0.727 \pm 0.05$	& $0.767 \pm 0.04$ & $0.666 \pm 0.06$	& $0.719 \pm 0.05$ & \underline{$0.523 \pm 0.08$}	& \underline{$ 0.528 \pm 0.05$}
& $0.813 \pm 0.04$	& $0.860 \pm 0.04$ & $0.578 \pm 0.02$	& $0.628 \pm 0.02$ & $0.824 \pm 0.03$	& $0.867 \pm 0.02$\\
& \faaw  & $ 0.689 \pm 0.02$	& $ 0.741 \pm 0.02$ & $ 0.699 \pm 0.06$	& $0.747 \pm 0.06$ & $0.289 \pm 0.06$	& $0.403 \pm 0.04$
& $0.738 \pm 0.04$	& $0.797 \pm 0.04$ & $0.573 \pm 0.02$	& $0.625 \pm 0.02$ & $0.810 \pm 0.02$	& $0.860 \pm 0.01$\\
& \exhoney  & $\mathbf{0.949 \pm 0.02}$	& $\mathbf{0.953 \pm 0.02}$ & $\mathbf{0.792 \pm 0.07}$	& $\mathbf{0.830 \pm 0.06}$ & $\mathbf{0.557 \pm 0.05}$	& $\mathbf{0.547 \pm 0.03}$
& $\mathbf{0.969 \pm 0.02}$	& $\mathbf{0.973 \pm 0.02}$ & $\mathbf{0.784 \pm 0.02}$	& $\mathbf{0.854 \pm 0.02}$ & $\mathbf{0.879 \pm 0.02}$	& $\mathbf{0.918 \pm 0.01}$\\ 
& \exohoneyex & \underline{$0.900 \pm 0.04$}	& \underline{$0.921 \pm 0.03$} & \underline{$0.706 \pm 0.06$}	& \underline{$0.772 \pm 0.05$} & $0.242 \pm 0.05$	& $0.370 \pm 0.02$
& \underline{$0.945 \pm 0.03$}	& \underline{$0.958 \pm 0.02$} & \underline{$0.734 \pm 0.03$}	& \underline{$0.834 \pm 0.02$} & \underline{$0.858 \pm 0.02$}	& \underline{$0.892 \pm 0.02$}\\ \midrule

\parbox[t]{2mm}{\multirow{4}{*}{\rotatebox[origin=c]{90}{\szorro}}}
& \faac & $\mathbf{0.909 \pm 0.03}$	& $\mathbf{0.924 \pm 0.02}$ & \underline{$0.750 \pm 0.06$}	& \underline{$0.790 \pm 0.05$} & \underline{$0.573 \pm 0.04$}	& $\mathbf{0.569 \pm 0.03}$
& \underline{$0.935 \pm 0.02$}	& $0.947 \pm 0.02$ & $0.663 \pm 0.02$	& \underline{$0.730 \pm 0.02$} & \underline{$0.826 \pm 0.03$}	& \underline{$0.868 \pm 0.02$}\\
& \faaw  & $ 0.796 \pm 0.05$	& $ 0.821 \pm 0.05$ & $ 0.717 \pm 0.06$	& $0.744 \pm 0.06$ & $0.512 \pm 0.09$	& $0.512 \pm 0.06$
& $0.826 \pm 0.06$	& $0.853 \pm 0.05$ & $0.559 \pm 0.03$	& $0.606 \pm 0.02$ & $\mathbf{0.829 \pm 0.03}$	& $\mathbf{0.869 \pm 0.02}$\\
& \exhoney  & \underline{$0.902 \pm 0.02$}	& \underline{$0.922 \pm 0.02$} & $\mathbf{0.772 \pm 0.06}$	& $\mathbf{0.811 \pm 0.05}$ & $\mathbf{0.602 \pm 0.03}$	& \underline{$0.568 \pm 0.03$}
& $\mathbf{0.953 \pm 0.03}$	& $\mathbf{0.961 \pm 0.03}$ & $\mathbf{0.686 \pm 0.02}$	& $\mathbf{0.754 \pm 0.02}$ & $0.818 \pm 0.03$	& $0.867 \pm 0.02$\\ 
& \exohoneyex & $0.889 \pm 0.04$	& $0.915 \pm 0.02$ & $0.745 \pm 0.07$	& $0.787 \pm 0.06$ & $0.547 \pm 0.04$	& $0.546 \pm 0.04$
& $0.933 \pm 0.03$	& \underline{$0.948 \pm 0.03$} & \underline{$0.664 \pm 0.02$}	& $0.730 \pm 0.01$ & $0.785 \pm 0.02$	& $0.841 \pm 0.01$\\ \midrule

\parbox[t]{2mm}{\multirow{4}{*}{\rotatebox[origin=c]{90}{\glime}}}
& \faac & \underline{$0.655 \pm 0.06$}	& \underline{$0.716 \pm 0.05$} & \underline{$0.614 \pm 0.06$}	& \underline{$0.655 \pm 0.05$} & $\mathbf{0.537 \pm 0.07}$	& $\mathbf{0.520 \pm 0.05}$
& \underline{$0.762 \pm 0.05$}	& \underline{$0.802 \pm 0.05$} & \underline{$0.614 \pm 0.04$}	& \underline{$0.643 \pm 0.03$} & \underline{$0.827 \pm 0.03$}	& \underline{$0.869 \pm 0.02$}\\
& \faaw  & $ 0.518 \pm 0.05$	& $ 0.540 \pm 0.05$ & $ 0.559 \pm 0.06$	& $0.564 \pm 0.05$ & $0.232 \pm 0.03$	& $0.358 \pm 0.01$
& $0.566 \pm 0.04$	& $0.635 \pm 0.05$ & $0.529 \pm 0.07$	& $0.513 \pm 0.06$ & $0.791 \pm 0.01$	& $0.855 \pm 0.01$\\
& \exhoney  & $\mathbf{0.743 \pm 0.04}$	& $\mathbf{0.784 \pm 0.03}$ & $\mathbf{0.649 \pm 0.06}$	& $\mathbf{0.697 \pm 0.05}$ & \underline{$0.516 \pm 0.05$}	& \underline{$0.512 \pm 0.03$}
& $\mathbf{0.853 \pm 0.05}$	& $\mathbf{0.887 \pm 0.05}$ & $\mathbf{0.680 \pm 0.03}$	& $\mathbf{0.698 \pm 0.03}$ & $\mathbf{0.844 \pm 0.03}$	& $\mathbf{0.889 \pm 0.02}$\\ 
& \exohoneyex & $0.534 \pm 0.05$	& $0.565 \pm 0.05$ & $0.523 \pm 0.05$	& $0.542 \pm 0.05$ & $0.274 \pm 0.05$	& $0.373 \pm 0.02$
& $0.581 \pm 0.05$	& $0.620 \pm 0.04$ & $0.578 \pm 0.05$	& $0.569 \pm 0.05$ & $0.790 \pm 0.03$	& $0.830 \pm 0.04$\\ \midrule

\parbox[t]{2mm}{\multirow{4}{*}{\rotatebox[origin=c]{90}{\gnnexp}}}
& \faac & $0.628 \pm 0.04$	& $0.648 \pm 0.05$ & $0.642 \pm 0.06$	& \underline{$0.695 \pm 0.05$} & \underline{$0.476 \pm 0.09$}	& \underline{$ 0.497 \pm 0.05$}
& $0.628 \pm 0.05$	& $0.704 \pm 0.04$ & $0.555 \pm 0.02$	& $0.596 \pm 0.02$ & $\mathbf{0.825 \pm 0.03}$	& \underline{$0.868 \pm 0.02$}\\
& \faaw  & \underline{$0.730 \pm 0.05$}	& \underline{$0.768 \pm 0.05$} & $\mathbf{0.645 \pm 0.06}$	& $\mathbf{0.699 \pm 0.05}$ & $0.393 \pm 0.11$	& $0.453 \pm 0.05$
& \underline{$0.805 \pm 0.05$}	& \underline{$0.854 \pm 0.04$} & \underline{$0.564 \pm 0.02$}	& \underline{$0.613 \pm 0.02$} & $0.819 \pm 0.02$	& $0.863 \pm 0.02$\\
& \exhoney  & $\mathbf{0.762 \pm 0.04}$	& $\mathbf{0.794 \pm 0.03}$ & \underline{$0.644 \pm 0.07$}	& $0.684 \pm 0.06$ & $\mathbf{0.601 \pm 0.04}$	& $\mathbf{0.574 \pm 0.03}$
& $\mathbf{0.856 \pm 0.05}$	& $\mathbf{0.887 \pm 0.04}$ & $\mathbf{0.616 \pm 0.02}$	& $\mathbf{0.644 \pm 0.02}$ & \underline{$0.818 \pm 0.02$}	& $\mathbf{0.868 \pm 0.02}$\\ 
& \exohoneyex & $0.515 \pm 0.05$	& $0.542 \pm 0.04$ & $0.500 \pm 0.06$	& $0.518 \pm 0.05$ & $0.311 \pm 0.08$	& $0.407 \pm 0.04$
& $0.593 \pm 0.05$	& $0.647 \pm 0.05$ & $0.499 \pm 0.02$	& $0.507 \pm 0.02$ & $0.766 \pm 0.03$	& $0.816 \pm 0.03$\\ \bottomrule

\end{tabular}
\end{adjustbox}
\end{table*}

\section{Hyperparameter settings}
\label{sec:hyperparam}
In Table \ref{tab:hyperparam}, we provide the hyperparameters used in training the different modules of our attack for reproducibility. We used hyperparameters from \cite{fatemi2021slaps}, which are already fine-tuned for these datasets.

\begin{table*}
\centering
\caption{Hyperparameters of the different modules of the attack and target model}
\label{tab:hyperparam}
\begin{tabular}{llll} 
\toprule
\textbf{Param/Module} & \textbf{Self-supervision} & \textbf{Classification} & \textbf{Target Model} \\ \midrule
num\_layers        & 2    & 2     & 2    \\
learning rate      & 0.01 & 0.001 & 0.01 \\
epochs             & 2000 & 200   & 200  \\
hidden\_size       & 512  & 32    & 32   \\
dropout            & 0.5  & 0.5   & 0.5  \\
noisy\_mask\_ratio & 20   & --    & --   \\
weight\_decay      & --   & --    & 5e-4 \\ \bottomrule
\end{tabular}
\end{table*}

\section{Limitations and Future Direction}
This section highlights some limitations and future directions for our work.

\mpara{Partial Explanations.} In the current work, the 30\% nodes we sampled to generate the fixed subgraph for the partial explanation experiment were only sampled once. Such a sampling approach might be subject to sampling bias. Hence, sampling the subgraph multiple times may be desirable to avoid such bias.

\mpara{More Work on Attacks and Defenses.} A promising direction will be to launch more attacks and defenses on other explanation outputs, such as nodes or subgraph-level explanations. These are natural derivatives of graph-based explanation methods, and it is worth investigating to quantify the risk involved in releasing explanations. More defenses that optimize the usefulness of the explanations (fidelity and sparsity) are desirable.

\mpara{Fidelity-sparsity Analysis.} In this work, we analyzed the effect of the usefulness of explanations as measured by their fidelity and sparsity. More work can be done in this direction. In general, our fidelity and sparsity metrics can be complemented with other domain and dataset-specific metrics of explanation utility.

\end{document}